\documentclass{svjour3}                     
\smartqed  
%
%
\usepackage{natbib}
\usepackage{dsfont}
\usepackage{amsmath}
\usepackage{amssymb}
\usepackage{longtable}
\usepackage{tikz}
\usepackage{pgflibraryshapes}
\usepackage{pgflibrarysnakes}
\usepackage{multirow}
\usepackage{subfigure}
\usepackage{proof}
\usepackage{floatfig}
\usetikzlibrary{snakes}
\usetikzlibrary{patterns}
\usepackage{graphicx}
%
%
\def \stub {\mathit{stub}}
\newcommand{\flex}[1] {\mathit{flex}^{#1}}
\newcommand{\trad}[2]{\tau_{#1,#2}}
\def \agset {\mathrm{Ag}}
\def \sign {\Sigma}
\newcommand{\struct}[1]{\mathcal{#1}}
\newcommand{\agstruct}[2]{\struct{A}_{#1}^{#2}}
\newcommand{\dstr}[2]{\struct{D}_{#1}^{#2}}
\newcommand{\istr}[2]{\struct{I}_{#1}^{#2}}
\newcommand{\agstructdef}[2]{\agstruct{#1}{#2}=\langle\dstr{#1}{#2},\istr{#1}{#2}\rangle}
\newcommand{\negstruct}[1]{\mathcal{N}^{#1}}
\def \P {\mathcal{P}}
\def \L {\mathcal{L}}
\newcommand{\LS}[1] {\L_{\mathcal{S}_{#1}}}
\newcommand{\LF}[1] {\L_{\mathcal{F}_{#1}}}

\newcommand{\MND}{\mathit{MND}}

\newcommand{\fix}[2]{} 

%
\journalname{}

\begin{document}

\title{Meaning Negotiation as Inference\thanks{The work presented in this paper
was partially supported by the FP7-ICT-2007-1 Project no.~216471,
``AVANTSSAR: Automated Validation of Trust and Security of
Service-oriented Architectures".}
}


\author{Elisa Burato         \and
        Matteo Cristani      \and
        Luca Vigan\`{o}
}


\institute{Dipartimento di Informatica, Universit\`a di Verona \at
              Strada Le Grazie 15, I-37134 Italy \\
              \email{elisa.burato@univr.it\,; matteo.cristani@univr.it\,; luca.vigano@univr.it}            
}

\date{Received: date / Accepted: date}

\maketitle

\begin{abstract}
Meaning negotiation (MN) is the general process with which agents reach
an agreement about the meaning of a set of terms. Artificial Intelligence scholars have dealt with the problem of MN by means of argumentations schemes, beliefs merging and information fusion operators, and ontology alignment but the proposed approaches depend upon the number of participants. In this paper, we give a general
model of MN for an arbitrary number of agents, in which
each participant discusses with
the others her viewpoint by exhibiting it in an actual set of
constraints on the meaning of the negotiated terms. We call this
presentation of individual viewpoints an angle. The agents do
not aim at forming a common viewpoint but, instead, at agreeing about an
acceptable common angle.
We analyze separately the process of MN by two agents (\emph{bilateral} or \emph{pairwise} MN) and by more than two agents (\emph{multiparty} MN), and we use game theoretic models to understand how the process develops in both cases: the models are Bargaining Game for bilateral MN and English Auction for multiparty MN.
We formalize the process of reaching such an agreement by giving a
deduction system that comprises of rules that are
consistent and adequate for representing MN.
\keywords{Meaning negotiation \and Agreement \and Disagreement \and Deduction \and Viewpoints }
\end{abstract}

\section{Introduction and Motivations}
In recent years, it has become clear that computer systems do not work
in isolation. Rather, computer systems are increasingly acting as
elements in a complex, distributed community of people and systems,
which, in order to fulfill their tasks, must cooperate, coordinate their
activities and communicate with each other. In fact, cooperation and
coordination are needed almost everywhere computers are used. Relevant
examples include health institutions, electricity networks, electronic
commerce, robotic systems, digital libraries, military units etc.

Problems of coordination and cooperation are not a novelty due to the
birth of automated systems. They exist at multiple levels of activity in
a wide range of human agents as well. People achieve their own goals
through communication and cooperation with other people; and, in
industrial systems, with machines as well.

The main difficulty in agent cooperation and communication is to
understand each other. People and, in general, \textit{intelligent
agents} come from different organizations and individuals and thus they
have different backgrounds and, maybe, different expression languages.
However, natural agents get to agreements as a means for solving
conflicts. Consequently, artificial agents, in order to be reasonably
similar to human agents, to an extent that grants their usefulness, are
to be designed as agents that discuss to reach an agreement by starting
from distinct viewpoints.


Intelligent agents have been considered in a wide number of reasons and
applications, that is in all the situations in which people can delegate
their interests to somebody else. In fact the word \emph{intelligent}
refers to the ability to behave, to reason and to perceive situations
and the environment the agents are in like humans do. In all the
applications of intelligent agents, a basic mechanism of agreement is
required: information agents, electronic commerce agents, agents in
e-learning systems and legal reasoning have to know the meaning of all
the information they receive from others.\fix{Luca}{Why do we speak of
the user? I would avoid speaking of user (and user of what?)} In all the
situations in which a misunderstanding arises, the system does not work
as the user's expectations and it produces wrong outcomes.

To achieve an agreement there are fundamentally four possibilities:
\begin{itemize}
\item \emph{delegate decision}: consisting in the choice of an external
agent that decides for all the other agents involved;
\item \emph{judge decision}: consisting in the choice of an external
agent that chooses among the proposals of the agents;
\item \emph{merging}: consisting in generating a new theory starting
from the merged ones;
\item \emph{negotiation}: consisting in a sequence of actions aiming at
the definition of a novel, shared position, emerged from the discussion
itself, by means of a group of mechanisms, established as rules of the
negotiation.
\end{itemize}

In general, negotiation is a dialogue between two or more agents by
which they try to reach an agreement about something starting from
different viewpoints about the shared object. A negotiation process is
\emph{quantitative} when the agents discuss about how to share a set of
countable objects, whereas in \emph{Meaning Negotiation}, on which we
focus in this paper, the proposals are pieces of knowledge represented
by terms, i.e. the expressions of what an agent knows about the
negotiated terms. These pieces of knowledge may be accepted or rejected
by the other discussants.

More specifically, Meaning Negotiation (henceforth MN) is a negotiation
process in which the sharing object is the meaning of a set of terms. A common {Luca}{Why ``acceptable''? A definition is a definition.
Maybe replace with ``common'', ``standard'',...} definition of MN is:
the process that takes place when the involved agents have some
knowledge (some data or information) to share but do not agree on what
knowledge agents share and (possibly) how they reach an agreement about
it.

In this paper, we focus on the processes that take place when the agents
who negotiate agree about the mechanisms to reach an agreement and
disagree about the meaning of the negotiated terms. In this case, agents
\emph{know} how to reach an agreement, and see the mechanisms themselves
as a protocol. We aim at designing a model of an inference engine whose
derivations are indeed agreement processes. This approach views meaning
negotiation as an inference process.

To clarify what a negotiation process indeed is, let us introduce an
example that we will also employ in the rest of the paper as a running
example.

\begin{example}\label{es:1}
Consider two agents Alice and Bob that the negotiate the meaning of the
term ``vehicle''. Suppose that Alice thinks that a vehicle always has
two, three, four or six wheels; a handlebar or a steering wheel; a
motor, or two or four bicycle pedals, or a tow bar. On the other hand,
Bob thinks that it always has two, three or four wheels; a handlebar or
a steering wheel; a motor, or two or four bicycle pedals. Alice and Bob
are in disagreement because Bob does not know if a vehicle has a tow bar
or not. \hfill$\Box$
\end{example}

In this example, the MN depends upon the relevance of the terms the
agents use. Alice and Bob define ``vehicle'' in different ways and with
different terms. In fact, Alice uses ``tow bar'' and Bob does not. Bob
does not say anything about the tow bar maybe he does not know what a
tow bar is, or maybe he does not consider as relevant the properties
about the tow bar. In this paper, we assume that the agents make
assertions only about the properties they consider as relevant.

One of the main parameters of the MN is the number of the involved
agents. In Section~\ref{blmn}, we discuss in detail how a negotiation
process takes place between two parties, whilst in Section~\ref{mpmn} we
discuss the situation arising when the number of involved partners is
higher than two.

Before\fix{Luca}{This paragraph should be rewritten, as it is quite
confused. In particular the sentence ``the notion of term, and
definitions, in particular, of term, is part of the structure itself of
the Logic'' which I don't really understand} we do so, we further
exemplify the specific problems of the definition of an inference
engine, by considering a case taken from a common situation. We consider
the Description Logic framework, where the notion of term, and
definitions, in particular, of term, is part of the structure itself of
the Logic. More specifically, in description logic, the
\emph{acceptability} of a concept is tested by the subsumption relation.
Having two concepts $A$ and $B$, $A$ subsumes $B$ ($A \sqsubseteq B$)
when the definition for $B$ is also a definition for $A$ but not vice
versa. Two concepts are equivalently defined when $A \sqsubseteq B$ and
$B \sqsubseteq A$. Suppose that Alice considers $A$ and $B$, where $A
\sqsubseteq B$, as two plausible definitions for a concept $X$. Alice
can accept a new definition $C$ for $X$ iff $A \sqsubseteq C \sqsubseteq
B$ because Alice has a pair of concepts $(A,B)$ describing the same
things that $C$ describes.

In the same way of the above mentioned Description Logic Framework, a
logical formula $\varphi$ is acceptable with respect to an agent when
she shares the interpretation of all its terms. Therefore Alice always
accepts $\varphi$ when it is equivalent to her current
angle\fix{Luca}{We cannot talk about angle here, as it is not defined
yet. All this should come later (or the definition anticipated, but I
don't advise doing so)} $\alpha$, that is $\alpha \leftrightarrow
\varphi$ because $\mathcal{I}(\alpha) = \mathcal{I}(\varphi)$ where
$\mathcal{I}$ is an interpretation function, but Alice will accept
$\varphi$ also when she has a pair of feasible angles $\alpha$ and
$\beta$ such that $\alpha \rightarrow \varphi \rightarrow \beta$ because
$\mathcal{I}(\alpha) \subseteq \mathcal{I}(\varphi) \subseteq
\mathcal{I}(\beta)$. In the last case Alice shares the interpretation of
all the terms in $\varphi$ by means of two angles $\alpha$ and $\beta$
that are partial representations of her knowledge she considers as
acceptable.

An agent always accepts a logical theory when it is equivalent to her
own one, i.e. when the two theories have the same set of semantical
models. The logical equivalence is always a condition of agreement but
there are situations in which it is not a necessary condition and some
weaker conditions are sufficient to claim the agreement. In fact, a
logical theory may be considered acceptable when it is a good compromise
for an agent, i.e the logical theory expresses an acceptable part of
what the agent wants to express. In this sense, the set of semantical
models of the proposal is not equal to the set of semantical models of
the theory of the agent, but the agent has a superset and a subset of
models bounding it.

Conclusively, the participants to a discussion may disagree, in fact, in three different ways:
\begin{itemize}
\item The properties used to define the terms are inconsistent and
contradictory.
\item The relevant properties for an agent are more/less than those
expressed by one or more other agents.
\item Some agents do not know the properties used by someone in the
multiple agent system.
\end{itemize}

The idea of our framework is that the knowledge of an agent represents
her viewpoint about the world and in order to negotiate with the other
agents, an agent possibly has a set of acceptable portions of her
knowledge that she may consider as good compromises with respect to her
viewpoint.

We call \emph{angle} any partial representation of a viewpoint. The
knowledge of a negotiation is built by a single viewpoint and many
angles, i.e. many partial representations of it. Moreover, in this paper
we assume that angles are presented as \emph{logical theories}, and in
particular \emph{propositional} ones. At the beginning of a MN process,
agents are in disagreement, i.e. they have mutually inconsistent
knowledge. By MN, they try to reach a common angle representing a shared
acceptable knowledge, where the MN ends in positive way when the agents
have a common knowledge, and it ends in negative way otherwise: agents
are in \emph{agreement} when they have found a set of constraints on the
meaning of the negotiated terms that is accepted by both agents (this
new theory is named, here, a \emph{common angle}); \emph{disagreement}
when they are not in agreement. To negotiate the meaning of a set of
terms means to propose definitions, properties, typical memberships of
the terms' definitions, and/or to accept or to reject definitions.

\subsection{Bilateral Meaning Negotiation}
\label{blmn}

When the negotiation involves two agents, each proposal has one sender
and one receiver. Game Theory scholars have dealt with several bilateral
negotiation protocols like \emph{Divorce}~\citep{wurman01},
\emph{Pleadings}~\citep{gordon93} and the \emph{Bargaining
Game}~\citep{kambe}. When the negotiation is bilateral the agents are
called buyer and seller (which is typically the first proposing agent).
Both the buyer and the seller have the same feasible actions: they make
a proposal or accept or reject an offer.

As a matter of fact, it is the Bargaining game that offers the most
natural framework for meaning negotiation, due to the need for
negotiators to avoid the meaning of terms to be negotiated to be a
\emph{compromise} between the definitions preferred by the two agents,
that implies that the negotiators play by moving themselves to the
other's viewpoint with the maximally possible flexibility.\fix{Luca}{Inglese incerto ``moving themselves to the
other's viewpoint'' e anche il resto della frase}

In the Bargaining Game, two agents have to share, say, one dollar and do
this by each making a proposal. If the sum of their demands is less than
one, they share the dollar, otherwise they have to make a new demand.
The Bargaining Game is built by two stages:
\begin{itemize}
\item \emph{Demand stage}: agents make a proposal and if the proposals
are compatible, the negotiation ends in positive way; otherwise the
second stage begins.
\item \emph{War of attrition}: agents have incompatible viewpoints and
perform new demands. If the demands are compatible, the process ends
positively, otherwise they make new ones.
\end{itemize}
In the Bargaining Game, players have a \textit{negotiation power} that
represents how often an agent cedes during the negotiation and how much
she resists about her current angle. The negotiation power of an agent
is captured by a set of partially ordered angles of her viewpoint. The
partial order among the angles allows an agent to choose the next
proposal to perform, and to evaluate the acceptability of the received
offers. Moreover, the set of partially ordered angles has a minimum that
identifies the last offer an agent proposes in a negotiation. We say
that each agent has 
\begin{itemize}
\item one single \emph{stubborn} and 
\item many \emph{flexible} angles
\end{itemize} 
that are respectively the limit proposal (i.e., the last offer) and the
acceptable ones, where each flexible angle is consistent with the
stubborn knowledge.

The Meaning Negotiation process ends in a positive way (agreement) when
both agents agree about a common definition of the meaning of the set of
terms, i.e. they propose the same thing, or in a negative way
(disagreement) when they are not in agreement and they have no more
proposal to perform.

\subsection{Multiple parties' Meaning Negotiation}
\label{mpmn}

When the number of agents is more than two, the negotiation is
multiparty and each proposal has one sender and many receivers. A
proposal may be accepted or rejected by all the agents or by only some
of them, and the receivers may answer in different ways. The negotiation
process for multiparty scenarios is computationally harder than the
bilateral one and it needs the organization of the order in which agents
make assertions (proposals, acceptance or rejection of offers) during
the process. The modeling of the Meaning Negotiation in this case
depends also on the role of the involved agents. Having $n+1$ agents in
the negotiation, the possible role distinctions are:
\begin{itemize}
\item $1$-$n$: one seller and many buyers;
\item $n$-$1$: many sellers and one buyer;
\item $n_1$-$n_2$: many sellers and many buyers.
\end{itemize}
In the first case, the agents behave like in an auction. Before entering the auction, the seller establishes a maximal price for the item. The seller begins the game by making the initial request that is the \emph{reservation price}. The auction develops by \emph{beats}.
A beat\index{beat} consists of:
\begin{enumerate}
\item the seller makes a request;
\item each buyer proposes a counteroffer or accepts the seller's proposal.
\end{enumerate}
No more beat begins if the maximal price is reached or if the buying
agents do not make new proposals. In an auction scenario, a proposal is
also called a \emph{bid}. The end of the auction is established by the
seller, i.e. by the auctioneer. In general, in an auction there is only
one winner, i.e. only one agent buys the item in the auction.

The second case, $n$-$1$, is similar to the first one. The sellers have
to convince the buyer to accept the price they propose and when the buyer is not convinced she has to respond with another offer.\fix{Luca}{``and the buyer make her one''?} In the Meaning
Negotiation perspective, a buyer is not different from a seller because
they have the same feasible action: accept an offer, reject an offer or
make a proposal. Even if the agents generally have different strategies
depending on their role, i.e. typically a buyer enhances instead the
seller fall\fix{Luca}{``the seller falls the last offer''? ``falls''
\`e la parola giusta?} the last offer, the purpose of the seller and of
the buyer is the same: to meet the opponent's request. Therefore, buyers
and sellers make new proposals in the same way, that is by \emph{ceding}
their last one.

The third case, $n_1$-$n_2$, is called \emph{fish market}. For a
reference on the different ways to perform negotiation,
see~\cite{lomuscio03}. It is not possible to make a modeling of the MN
of this multiple-agent system structure because there is no agent
monitoring the process and no behavioral guidelines for the players. In
the first two cases, the auctioneers, the seller in the first and the
buyer in the second, are the agents who control the Meaning Negotiation
process and check whether an agreement is reached between the involved
agents. As in the auction game, in the fish market each agent makes a
proposal or accepts/rejects the opponents' one but there is no
coordination among the agents. It may be the case that two or more
agents make proposals simultaneously so each agent is a buyer, i.e. she
makes an offer, and a seller, i.e. she evaluates the received offers, at
the same time. The result is that a common proposal is difficult to
find. In the worst case, where there are $n$ agents involved in
negotiation in total, this means there can be up to $n(n-1)/2$
negotiation threads. Clearly, from an analysis point of view, this makes
such negotiation hard to handle.

In this paper, as in the main approach in the current Artificial
Intelligence literature, we model the multiparty MN by reserving an
agent, typically the first bidding one, to be the referee of the process
and the game used to represent it is the auction. In Game Theory, there
are several auction types~\citep{benameur02}: English, Dutch, Vickrey,
First-price sealed-bid etc. The types of auction differ on the behavior
of the agents involved and on the number of the proposals the agents
make. In this paper we use the English Auction because the agents behave
as in the Bargaining Game. The English Auction Game begins by the
proposal of the auctioneer that is called \emph{reservation price} and
it is the minimum price the agents have to pay to win the auction. In
the next step of the English Auction, each player makes her offer by
incrementing the last bidden one, i.e. the auctioneer's proposal. There
is not a fixed number of turns for agents' bidding, instead the game
continues until no more bids are performed. The game ends with a winner
that is the agent who bids the highest offer.

In a MN perspective, the English Auction game is slightly different in
the outcome. The goal of the negotiation is agents in sharing a
viewpoint. Therefore the positive ending condition of the game is that
all the agents make the same bid and the bidden proposal is the
representation of their viewpoints.

There are MN contexts in which it is sufficient to have a ``major'' part
of agreeing agents to consider positive the negotiation. In general
``major part'' means that a number of agents, typically more then 50\% ,
but it may mean that a part of the most trustworthy agents are in
agreement. In the former case, the minimum number of agreeing agents is
a parameter of the game: suppose $\alpha$ is the chosen number for
``major part'', the MN continues until at least $\alpha$ agents agree
about a common angle. The minimum number of agreeing agents is called
\emph{degree of sharing}. A MN process for more than two agents, say $n$
agents, has two positive ending conditions and two types of positive
outcomes, if a positive outcome exists:
\begin{description}
\item[\emph{partially positive}]: when the degree of sharing is less than the number of the participants ($\alpha< n$);
\item[\emph{totally positive}]: when the degree of sharing is equal to the number of the negotiating agents ($\alpha = n$).
\end{description}
The latter case prevails when there are specialists about the
negotiation subject into the multiple agent system and their opinion is
more relevant then the opponents' ones. When participating to a
negotiation process, the agents assume a viewpoint and many admissible
angles of it. A specialist knows more about the negotiation subject than
a less expert agent and her negotiation behavior will be to make
concessions as few as possible. Conversely, if a no expert agent knows
that an agent in the MAS is a specialist, then she trusts the specialist
and probably makes concessions with respect to the proposals of the
specialist. The degree of knowledge of an agent translates into the
trustworthiness with respect to herself. In this paper, the
trustworthiness of the agent is not specifically considered because it
is out of the scope of the paper and it is left as a future work.

The role of the auctioneer is to monitor the game in order to understand
when it ends and whether in a positive or a negative way. In general,
the auctioneer is the first bidding agent but in a negotiation
perspective she may play in two ways: active or passive. An active
referee is a participant of the negotiation and the reservation price is
her viewpoint. Moreover, an active referee makes herself proposals
during the auction as all the other agents and she is considered in the
agreement test. A passive auctioneer does not affect the negotiation.
She only tests the process and makes only one bid, the first one for the
reservation price.

\subsection{Aims of proposed approach}

The aim of the paper is to give a general model to represent the process
of MN by means of a deduction system. Our formalization is based upon
Game Theory notions of behavior of the agents during a
negotiation/litigation. The negotiation process has already been dealt
in terms of games but, to the best of our knowledge, only quantitative
negotiation were studied. MN is not quantitative thus one of the main
problem in dealing with it is the identification of the agreement and
disagreement situations, i.e. the mutual evaluation of the proposals of
the players. The purpose of the paper is to extend the current
literature with the formalization of the MN problem by means of a
deduction system that is independent of the number of the involved
agents and of their expression languages. Our work begins with the study
of the representation of the knowledge of the agents in a MN, and in
particular the representation of the properties the agents consider as
necessary and unforgivable in defining the meaning of the set of terms
they are negotiating and, vice versa, which are the facultative ones,
because these properties identify the negotiation space between agents.
We call the first one the stubborn knowledge of the agent and flexible
knowledge the second one.

The \textit{first contribution} of the paper is the definition of the meaning negotiating agent in terms of her stubborn and flexible knowledge.

The \textit{second contribution} of the paper is the study of the agreement and disagreement situations between the agents and the definition of the different ways in which they may be in disagreement (absolute, relative, essence and compatibility).

As said above, one important issue in MN is the evaluation of a received proposal. Agents make proposal and evaluate the opponents' one. The evaluation mechanism is not trivial when the negotiation is not quantitative. When is one definition of a set of terms better than another one? When are two or more definitions equivalent? Here, we study how a proposal is evaluated with respect to the knowledge of an agent. In our model, the types of
disagreement depend upon the relation among the proposal $p$ and the
stubborn and the flexible knowledge of the agent $i$
who receives and evaluates $p$:
\begin{itemize}
\item \emph{Call-away} occurs when $p$ is a generalization\footnote{A theory $A$ is a \emph{generalization} of a theory $B$ when the models of $A$ are a superset of the models of $B$.} of the
stubborn knowledge of $i$, thus it would correspond to dropping out some unquestionable knowledge.
\item \emph{Absolute disagreement} occurs when the stubborn knowledge of
$i$ is inconsistent with respect to $p$.
\item \emph{Essence disagreement} occurs when the flexible knowledge of
$i$ is inconsistent with respect to $p$.
\item \emph{Compatibility} occurs when $p$ is consistent with the
flexible knowledge of $i$ but it is not a generalization or a
restriction of $i$'s viewpoint.
\item \emph{Relative disagreement} occurs when $p$ is a generalization
of the flexible knowledge of $i$.
\end{itemize}
The call-away situations arise when an agent does not accept all the
necessary requests of the other one and thus exits the MN so that the MN
ends negatively.

An important point in MN as well as of the Multiple Agent Systems (MAS), is the strategical component in the definition of negotiating agent. In this paper we do not give any definition of strategy of agents but we assume that whenever an agent has to choose the next move, she has a way to do it. In general, in MAS literature there are two main ways in which the agents behave: collaborative and competitive. A collaborative agent always chooses the move that improves the welfare of the MAS she is in, whereas a competitive agent moves in order to achieve her goals and, possibly, to prevent the other ones. The study of the strategies in MN process needs the definition of MAS welfare and goals, and of the attainment of a goal.

The rest of the paper is organized as follows: in
Section~\ref{sec:formalization} we formalize the negotiating agents in
terms of their knowledge and language, Section~\ref{sec:process} defines
the agreement and disagreement relations between agents and gives the
deduction rules for bilateral and multiparty MN and
Section~\ref{sec:relWork} discusses the current approaches of Artificial
Intelligence community for MN. The paper ends with the summary of the
contributions of our work and with a discussion of future work
(Section~\ref{sec:conclusions}).

\section{A Formalization of Negotiating Agents}\label{sec:formalization}
We consider here a general MN process, so we abstract away from the
particular terms whose meaning the agents are negotiating. We first
consider the knowledge of negotiating agents
(Section~\ref{sec:knowledge}), i.e. what agents know about the meaning
of the set of terms they are negotiating, and then their language
(Section~\ref{sec:language}), i.e. how they represent their knowledge
and how they make proposals during the MN.

\subsection{The Knowledge of Negotiating Agents}\label{sec:knowledge}
When agents give the definition of a concept, they:
\begin{itemize}
\item give the necessary (\emph{stubborn}) properties and the
characterizing (\emph{flexible}) ones;
\item give the properties that necessarily have not to hold and the ones
that plausibly (flexibly) have not to hold; and
\item give the formulas asserting what has not (stubbornly), or may not
(flexibly), be used in the definition. 
\end{itemize}

The notion of relevance of a formula is interesting at this stage
of the definition, but instead of introducing a novel
operator, 
we simply consider a formula as not relevant to an agent if she does not assert it.
When $i$ \emph{asserts} a formula $\varphi$, she has a way to evaluate
it: she thinks $\varphi$ as positive or negative. If $i$ does
not assert $\varphi$ then either
$i$ does not know $\varphi$, i.e., she is not able to evaluate it
or $i$ does not think $\varphi$ is relevant in defining the
negotiated meaning.
So, we assume that \emph{whenever $i$ thinks $\varphi$ as not relevant for the negotiation, $i$ never asserts $\varphi$ during the negotiation}.

\begin{example}\label{es:2}
As in Example~\ref{es:1}, consider the definition of the term
``vehicle''. Alice (stubbornly) thinks that it always has two, three,
four or six wheels; a handlebar or a steering wheel; a motor, or two or
four bicycle pedals, or a tow bar. Moreover, Alice (flexibly) thinks
that a ``vehicle'' may be defined only as a car, then having four
wheels, a steering wheel, and a motor; or only as a bicycle, then having
two wheels, a handlebar and two bicycle pedals.
In other words, Alice has two acceptable ways to define a vehicle
(namely, a car or a bicycle as particular ``vehicles'') but she has only
one general description of a ``vehicle''. \
\hfill$\Box$
\end{example}

The necessary and the characterizing properties of a concept definition
are closely related to \textit{EGG/YOLK} objects, introduced by
Lehmann and Cohn (\citeyear{lehmann94}) 
as a way to represent class
membership based on typicality of the members: the egg is the set of the
class members and the yolk is the set of the \emph{typical} ones. For
instance, the class of ``employees'' of a company $A$ may be defined as
``the set of people that receive money from the company in exchange for
carrying out the instructions of a person who is an employee of that
company'', thus excluding, e.g., the head of the company (who
has no boss), and the typical employee would include regular workers
like secretaries and foremen. Another company $B$ might have a different
definition, e.g., including the head of the company, resulting in a
mismatch. Nevertheless, if both companies provide some typical examples
of ``employees'' it is possible that all of $A$'s typical employees fit
$B$'s definition, and all of $B$'s typical employees fit $A$'s
definition: $\mathit{YOLK}_B \leq \mathit{EGG}_A$ and $\mathit{YOLK}_A
\leq \mathit{EGG}_B$, in the terminology of \citep{lehmann94}.

In this paper, we use the same idea to express that negotiating
agents have a \emph{preference} over their knowledge: the
properties an agent thinks as necessary are the typical ones,
and the characterizing properties are those that are not typical but
plausible.
We focus on the models of the knowledge of an agent. The stubborn
properties of a concept definition are the most acceptable ones,
therefore they thus have more elements satisfying them than the flexible
properties have. Hence, we represent the elements satisfying the
stubborn properties in the egg and those satisfying the flexible ones in
the yolk.
Differently from the original model, concept definitions are here
restricted by stubborn properties to the largest acceptable set of
models, hence represented by the egg, whilst the yolk is employed to
denote the most restricted knowledge, that is, the one on which the
agents are flexible.

The stubborn properties never change during the negotiation; therefore,
the egg is fixed at the beginning of the MN. Instead, the flexible part
of the definition of a concept is the core of the proposal of a
negotiating agent. Each proposal differs from the further ones in two
possible ways: it may give a definition of the negotiated object that is
more descriptive than the next ones, or the given definition specifies
properties that the next ones do not and vice versa. In the former case,
we say that the agent carries out a \emph{weakening action}, in the
latter the agent carries out a \emph{changing theory action}. In this paper, we do not consider how and why an agent chooses the next action to perform, but a general approach for dealing with agency in \emph{multiple agent system (MAS)} is based on the representation of the choice of the action to perform by
\emph{attitudes}, which ``are driving forces behind the actions of
agents''~\citep{meyer99}. In other words, attitudes are \emph{the
representation of the reasons that guide the agents in their behavior}.
They are preferences between the criteria used to evaluate the feasible
actions. 
%
In general, the main criteria for evaluating an action are:
\begin{enumerate}
\item The MAS welfare: is the action positive for all the agents in the MAS?
\item The personal advantage: is the action individually positive for the agent in choosing an action?
\end{enumerate}
%
By attitude, we mean the \emph{preference order of the evaluation criteria}.
Following the enumeration in the list above, the main attitudes in agency are:
\begin{itemize}
\item\textit{collaborative}: the main goal of the agent is the welfare of the MAS: $1$ is preferred to $2$;
\item\textit{competitive}: the action performed by a competitive agent are advantageous or not damaging herself: $2$ is preferred to $1$.
\end{itemize}
In a MN perspective, a collaborative agent aims at ending the process as soon as possible, whilst a competitive agent tends to stay as close as possible to her initial viewpoint.
The collaborative and the competitive attitudes are dual.
%

However, none of the weakening or changing theory actions can be carried out with respect to a proposal if
the proposal describes the necessary properties of the object in the MN.
We say that in such a situation the agents always make a
\emph{stubbornness action} that is equivalent to \emph{no more change}.

\subsection{The Language of Negotiating Agents}\label{sec:language}

Each agent $i$ is represented by her language $\L_i$, which is composed
of two disjoint sublanguages (where we intend, with ``language'', the
set of well-formed formulae of a logical language):
\begin{itemize}
\item a \emph{stubbornness} language containing the properties $i$ deems
as necessary in defining the negotiated meaning and
\item a \emph{flexible} language containing the properties $i$ deems as
not necessary in the MN.
\end{itemize}

\begin{definition}[$\sign_i$ and $\L_i$]\label{agSignature}
Consider an abstract set of terms and let $\agset$ be the set of the
\emph{negotiating agents}. The \emph{signature} $\sign_i$ of an agent
$i\in\agset$ is the pair $\langle \mathcal{P}_i,\alpha_i\rangle$ where
\begin{itemize}
\item $\P_i$ is the set of the predicate symbols;
\item $\alpha_i:\mathcal{P}_i\rightarrow \mathbb{N}$ is the arity
function for predicate symbols.
\end{itemize}
The \emph{language} $\L_i$ of $i\in\agset$ comprises of
$\sign_i$-formulas defined inductively as follows:
\begin{itemize}
\item If $P \in \P_i$, $\alpha_i(P)=n$ and $t_1,\dots,t_n$ are terms
then $P(t_1,\dots,t_n)$ is a $\sign_i$-formula.
\item If $\varphi$ and $\psi$ are $\sign_i$-formulas then $\neg
\varphi$, $\varphi\wedge \psi$, $\varphi \vee \psi$, and $\varphi
\rightarrow \psi$ are $\sign_i$-formulas.
\end{itemize}
\end{definition}


\begin{definition}[Stubbornness and Flexibility of an agent]
The agent $i$ considers the formulas in $\L_i$ in two ways:
\emph{stubborn} or \emph{flexible}. That is, the language $\L_i$ is
divided in two disjoint sets: $\L_i=\LS{i}\cup\LF{i}$, where
\begin{description}
\item[$\LS{i}$] is the set of stubborn formulas;
\item[$\LF{i}$] is the set of flexible formulas.
\end{description}
We further define
$$
\stub_i = \bigwedge_{\varphi \in \LS{i}}\varphi
$$
and 
$$
\flex{}_i = \bigwedge_{\varphi \in \LF{i}}\varphi
$$
\end{definition}

\begin{table}[t]\centering
\begin{displaymath}	
\renewcommand{\arraystretch}{2.5}
\begin{array}{lll}
{\footnotesize
\vcenter{
\infer[(W)]{\flex{k+1}_i}
{
\flex{k}_i\rightarrow\flex{k+1}_i &
\neg( \stub_i \leftrightarrow \flex{k}_i)
}}
}
& \quad
\begin{minipage}{1.2cm}
\begin{tikzpicture}[level distance=1mm]
\draw (0,0) ellipse (6.3mm and 3.3mm);
\draw[fill = gray!60] (0,0) ellipse (3.5mm and 2mm);
\draw[fill = gray!20] (0,0) ellipse (1.5mm and 1mm);
\end{tikzpicture}
\end{minipage} \\
{\footnotesize
\vcenter{\infer[(C)]{\flex{k+1}_i}{\flex{k}_i & \neg( \stub_i \leftrightarrow \flex{k}_i)& \neg( \flex{k}_i \rightarrow \flex{k+1}_i) & \neg( \flex{k+1}_i \rightarrow \flex{k}_i)}}
}
& \quad
\begin{minipage}{1.2cm}
\begin{tikzpicture}[level distance=1mm]
\draw (0.1,0) ellipse (5mm and 2.5mm);
\draw[fill = gray!80] (-0.1,0) ellipse (1.5mm and 1mm);
\draw[fill = gray!30] (0.3,0) ellipse (1.5mm and 1mm);
\end{tikzpicture}
\end{minipage}
\
\begin{minipage}{1cm}
\begin{tikzpicture}[level distance=1mm]
\draw (0.1,0) ellipse (5mm and 2.5mm);
\draw[fill = gray!30] (0.2,0) ellipse (1.7mm and 1.1mm);
\draw[fill = gray!80] (0,0) ellipse (1.7mm and 1.1mm);
\draw (0.2,0) ellipse (1.7mm and 1.1mm);
\end{tikzpicture}
\end{minipage} \\
{\footnotesize
\vcenter{\infer[(S)]{\varphi}{\varphi & \stub_i \leftrightarrow \varphi}}
}
& \quad
\begin{minipage}{1.2cm}
\begin{tikzpicture}[level distance=1mm]
\draw[fill = gray!30] (0,0) ellipse (6.3mm and 3.3mm);
\draw[fill = gray!80] (0,0) ellipse (6mm and 3mm);
\end{tikzpicture}
\end{minipage}
\end{array}
\end{displaymath}
\caption{Rules for making new proposals and the corresponding EGG/YOLKs. The dark gray yolk identifies $\flex{k+1}_i$ and the light gray one identifies $\flex{k}_i$.}\label{proposal}
\end{table}

During a negotiation process, the viewpoint of each agent is presented in
a specific \textit{angle}. In other words, a viewpoint is a hierarchy of
theories, related by the partial order relation of weakening, and an
element of this hierarchy is an angle. Each agent presents angles in
sequence during the negotiation. Thus we call \textit{current angle
formula} (\emph{CAF}) the angle presented at the current stage of the
negotiation. A flexible formula $\flex{k}_i$ expresses the
\emph{$k^\mathrm{th}$ angle} discussed in the MN by the agent $i$ and it changes during
the process. We assume here that for each CAF $\flex{k}_i$ there is a
stubborn formula in $\LS{i}$ that is a generalization of it. In
general, during a negotiation of the meaning of a term, the agents relax
their viewpoint in order to meet the opponent's one, and they do this
only if the relaxing formula is not too general. Then, for each
assertion in the MN, the agents have a maximal generalization
of it and this is a formula in the stubbornness set. For instance, if
the object of the negotiation is the meaning of \emph{pen}, an agent is
flexible on the ink color of the object but not on the fact that the
object contains ink; then, the \emph{red ink} predicate is a flexible
one and the \emph{contains ink} predicate is a stubborn one.

$\flex{k}_i$ changes during the MN by applying to it one of the rules
for making new proposals given in Table~\ref{proposal}:
weakening $(W)$, changing theory $(C)$ or stubbornness $(S)$. The
EGG/YOLK representations show the collocation of the new proposal (in
the stubbornness situation the new proposal is the same as the last
one).

Let $\flex{k}_i$ be the last proposal of an agent $i$ during a MN. There
are two ways for $i$ to make a new proposal $\flex{k+1}_i$. The
weakening rule $(W)$ states that $i$ can propose $\flex{k+1}_i$ if
$\flex{k+1}_i$ is entailed by $\flex{k}_i$ (i.e., $\flex{k}_i\rightarrow
\flex{k+1}_i$) and $\flex{k}_i$ is not the most general formula the
agent can negotiate (corresponding to her stubbornness viewpoint, i.e.,
$\flex{k}_i\leftrightarrow \stub_i$). Note that if $i$ weakens, say,
$\flex{0}_i$ to the new CAF $\flex{1}_i$, then $i$ may be no more able
to satisfy $\flex{0}_i$.

The rule $(C)$ states that $i$ can just change angle. Suppose that
$\flex{k}_i$ is the last proposal of an agent $i$ during a MN. There are
two ways for $i$ to make a new proposal $\flex{k+1}_i$. In the first
case, expressed by the weakening rule $(W)$, $i$ proposes $\flex{k+1}_i$
if $\flex{k+1}_i$ is entailed by $\flex{k}_i$ (i.e.,
$\flex{k}_i\rightarrow \flex{k+1}_i$) and $\flex{k}_i$ is not the most
general formula the agent can negotiate (corresponding to her
stubbornness viewpoint, i.e., $\flex{k}_i\leftrightarrow \stub_i$). In
the second case, expressed by the rule $(C)$, $i$ just changes theory.
Although we do not consider MN strategies in detail here, in general, an
agent chooses whether to perform a weakening or a changing theory action
by applying the corresponding rule, but there are situations in which
one action is better than the other. For instance, when an agent checks
the compatibility situation it seems better to weaken the theory rather
than changing it so to try to entail the opponent's viewpoint, while in
essence disagreement situations it seems better to change the theory
rather than weakening it so to try to meet the opponent's viewpoint.

If agent $i$ is in stubbornness does she continue the negotiation or
does she have to exit it? We assume that the agent exits the MN only if
all the agents in the negotiation are stubborn. But an agent does not
know the opponent's stubbornness viewpoint, so the exit condition is
recognized only by the system. However, the stubborn agent always makes
the same proposal during the MN, as expressed by the rule $(S)$. If
$\flex{k}_i\leftrightarrow \stub_i$ then $\flex{k_1}_i = \flex{k_1+1}_i$
for all $k_1>k$.

Let us now go deeply inside the negotiation process constraints. If an
agent $i$ makes a weakening of $\flex{0}_i$ and has $\flex{1}_i$ as the
CAF, then $i$ is no more able to satisfy $\flex{0}_i$. As we show below,
the process of negotiation, means relaxing of individual hierarchies. In
particular, based upon the reasoning above, $\flex{k}_{i}$ is the
$k^{\mathrm{th}}$ angle of agent $i$.

We introduce a set of $\sign_i$-structures as agents change angles
during the negotiation process and these viewpoints have to be satisfied
in different structures. We thus define the semantical structure of a
signature, which is built by a domain set and an interpretation function
mapping predicate symbols into tuples of elements of the domain. We use
a parameter $k$ to denote the $k^{\mathrm{th}}$
structure of the $k^{\mathrm{th}}$ angle.

\begin{definition}\label{struct}
Given a signature $\sign_i = \langle \P_i,\alpha_i\rangle$, a
\emph{$\sign_i$-structure} $\agstruct{i}{}$ is a pair
$\langle\dstr{i}{},\istr{i}{}\rangle$ where the \emph{domain}
$\dstr{i}{}$ is a finite non-empty set and the \emph{interpretation
function} $\istr{i}{}$ is such that $\istr{i}{}(P)\subseteq \dstr{i}{n}$
for all $P \in \P_i$ for which $\alpha(P)=n$.

We define the set of $\sign_i$-structures $\agstruct{i}{k}$ as
$\mathcal{S}_i= \{\agstruct{i}{k} \mid \agstructdef{i}{k}\}$
where
$\dstr{i}{k} \subseteq \dstr{i}{}$ is the domain set with respect to agent $i$ and,
for all pairs
$(\istr{i}{k},\istr{i}{k+1})$, if the $(k+1)^{\mathrm{th}}$ rule that
agent
$i$ applied is:
    \begin{itemize}
    \item $(W)$, then $\istr{i}{k}(P) \subseteq \istr{i}{k+1}(P)$ for all $P \in \P_i$;
    \item $(C)$, then $\istr{i}{k}(P) \neq \istr{i}{k+1}(P)$, $\istr{i}{k}(P) \nsubseteq \istr{i}{k+1}(P)$ and $\istr{i}{k+1}(P) \nsubseteq \istr{i}{k}(P)$ for all $P \in \P_i$;
    \item $(S)$, then $\istr{i}{k}(P) = \istr{i}{k+1}(P)$, for all $P \in \P_i$.
    \end{itemize}
If $\varphi$ and $\psi$ are $\sign_i$-formulas then:
\begin{itemize}
\item $\agstruct{i}{k} \models P(t_1,\dots,t_n)$ iff $(\mathcal{I}_i(t_1),\dots, \mathcal{I}_i(t_n))\in\mathcal{I}_i(P)$, where $P\in\P_i$ and $t_1,\dots,t_n$ are terms;
\item $\agstruct{i}{k} \models \neg \varphi$ iff $\agstruct{i}{k}\not\models \varphi$;
\item $\agstruct{i}{k} \models \varphi \wedge \psi$ iff $\agstruct{i}{k} \models \varphi$ and $\agstruct{i}{k}\models \psi$;
\item $\agstruct{i}{k} \models \varphi \vee \psi$ iff $\agstruct{i}{k}\models \varphi$ or $\agstruct{i}{k}\models \psi$;
\item $\agstruct{i}{k} \models \varphi \rightarrow \psi$ iff $\agstruct{i}{k}\models \psi$ or $\agstruct{i}{k}\models \neg\varphi$.
\end{itemize}
\end{definition}

\begin{example}\label{es:3}
Suppose Alice defines ``vehicle'' as in Example~\ref{es:1}. Then
\begin{eqnarray*}
\stub_{A}&=&(\mathit{has2wheels} \vee \mathit{has3wheels} \vee \mathit{has4wheels}\vee \mathit{has6wheels}) \wedge \\
        &\ &(\mathit{hasHandlebar} \vee \mathit{hasSteeringWheel})\wedge\\
        &\ &(\mathit{hasMotor} \vee \mathit{has2bicyclePedals} \vee \mathit{has4bicyclePedals} \vee \mathit{hasTowBar})
\end{eqnarray*}
is the stubbornness part of Alice's knowledge whose interpretation is
$\mathcal{I}(\stub_{A})=\{\mathsf{bicycle,tandem,motorbike,scooter,truck,car,trailer,chariot}\}$.
Let
$$\flex{k}_{A}= \mathit{has4wheels} \wedge \mathit{hasSteeringWheel} \wedge (\mathit{hasMotor} \vee \mathit{has2bicyclePedals})$$
be the CAF of Alice that it is not equivalent to her stubbornness knowledge and its interpretation is $\mathcal{I}(\flex{k}_{A})=\{\mathsf{car,truck}\}\subset\mathcal{I}(\stub_{A})$. Suppose Alice changes her CAF by means of a weakening action ($W$); then:
\begin{eqnarray*}
\flex{k+1}_{A} &=& (\mathit{has4wheels} \vee \mathit{has2wheels})\wedge (\mathit{hasSteeringWheel}\vee \mathit{hasHandlebar}) \wedge \\
&\ & (\mathit{hasMotor}\vee \mathit{has2bicyclePedals})
\end{eqnarray*}
The interpretation of $\flex{k+1}_{A}$ is $\mathcal{I}(\flex{k+1}_{A})=\{\mathsf{motorbike,scooter,car,truck}\}\subset\mathcal{I}(\flex{k}_{A})$.
Otherwise, suppose Alice changes her CAF by means of a changing theory action ($C$); then:
$$\flex{k+1}_{A}= \mathit{has6wheels}\wedge \mathit{hasSteeringWheel} \wedge (\mathit{hasMotor} \vee \mathit{hasTowBar})$$

The interpretation of $\flex{k+1}_{A}$ is $\mathcal{I}(\flex{k+1}_{A})=\{\mathsf{truck,trailer}\}$ and $\mathcal{I}(\flex{k+1}_{A})\nsubseteq \mathcal{I}(\flex{k}_{A})$. \
\hfill$\Box$
\end{example}


\section{The MN Process}\label{sec:process}
In this section, we formalize the MN process by defining the negotiation
language (Section~\ref{sec:MN-language}), i.e. how the agents send their
proposals to the opponents, and the negotiation rules
(Section~\ref{sec:MN-rules}) governing the development of the MN
process, both for bilateral (Section~\ref{subsubsec:1-1MNRules}) and
1-n negotiation (Section~\ref{subsubsec:1-nMNRules}). We then
(Section~\ref{subsec:MNdevelop}) show how the bilateral MN develops
depending upon the relation between the stubborn knowledge of the
agents. We do not show the development of the 1-n MN because it
can be viewed as $n-1$ instances of bilateral MN between the auctioneer
and the other agents, where $n$ is the number of the involved agents.

During the MN, agents make proposals and say if they are in agreement or
not with respect to the proposals made by the opponents. Proposals are
negotiation formulas like $j:\varphi$, where we assume that the opponent
$i$ is able to recognize the name label $j$ in $j:\varphi$ and remove it
in order to evaluate $\varphi$.

In general, negotiating agents may not share the same language but have
different signatures. Hence, when $i$ evaluates an assertion by
$j$, she first has to \emph{translate} the symbols occurring in it to
symbols belonging to her signature. Such a translation depends, of
course, on the particular terms that are being considered for the
negotiation, so we assume abstractly that for each pair of agents
$(i,j)$ there is the \emph{translation function}
$\trad{i}{j}$ such that:
$$
\trad{i}{j}:\sign_j\rightarrow \sign_i\,.
$$

When $j$ asserts $\varphi$ (i.e., $j:\varphi$), $i$ is not able to find
which part of $\varphi$ is in the stubbornness set of $j$, since she
only knows that $\varphi = \stub_j \wedge \psi^k$ where $\stub_j$ is the
conjunction of all the formulas in $\LS{j}$ and $\psi^k$ is the
$k^{\mathrm{th}}$ angle of $j$.

In the following, we describe the main conditions an agent has to test
in order to evaluate the opponent proposal and to identify the
negotiation condition she is in. We suppose that $j$ is the first
proponent (bidding) agent and that $i$ is the agent evaluating $j$'s
proposal. Figures~\ref{fig:EYcall-away}, ~\ref{fig:EYagr}
~\ref{fig:EYabs}, ~\ref{fig:EYess}, ~\ref{fig:EYrel}, and
~\ref{fig:EYcomp} show the EGG/YOLK representations in which $i$ is
identified by the plain line and $j$ by the dashed line for each
condition that $i$ tests; the numbering is that of \citep{lehmann94}.
Let $\varphi$ be the proposal of $j$.
When the MN begins the agent receiving the first proposal controls that it is not too general and not too restrictive with respect to her viewpoint.
\begin{itemize}
\item
If the received proposal $\varphi$ is too general ($\stub_i \rightarrow \trad{i}{j}(\varphi)$), then the agent $i$ cannot negotiate with $j$ because no generalization of her stubbornness knowledge is acceptable. In this case $i$ thinks they are in \emph{call-away} and the negotiation ends in a negative way.
The corresponding EGG/YOLK representation is shown in
Figure~\ref{fig:EYcall-away}.

\item Otherwise, in the case in which the received proposal $\varphi$ is too restrictive ($\trad{i}{j}(\varphi)\rightarrow \flex{0}_i$) the only action $i$ can perform is to re-initiate the MN by proposing her angle that is a generalization of $\varphi$.
\end{itemize}

\begin{figure}[t]\centering
\begin{tabular}{c}
\begin{minipage}{2.5cm}
22:  \\
\begin{tikzpicture}[level distance=1mm]
\draw[densely dashed] (0,0) ellipse (6.5mm and 3.5mm);
\draw[densely dashed,fill = gray!20] (0,0) ellipse (5mm and 2.8mm);
\draw (0,0) ellipse (3.5mm and 2mm);
\draw[fill = gray!60] (0,0) ellipse (1.5mm and 1mm);
\end{tikzpicture}
\end{minipage}
\end{tabular}
\caption[Call-away Egg/Yolk configurations]{The EGG/YOLK representation of the opponent's offer from agent $i$'s  viewpoint identified by the plain lines, for call-away.}\label{fig:EYcall-away}
\end{figure}
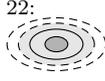

\begin{figure}[t]\centering
\begin{tabular}{c c c c}
\begin{minipage}{1.8cm}
17: \\
\begin{tikzpicture}[level distance=1mm]
\draw[fill = gray!60] (0.1,0) ellipse (3mm and 1.6mm);
\draw[densely dashed] (0.4,0) ellipse (5mm and 2.5mm);
\draw[densely dashed,fill = gray!20] (0.2,0) ellipse (1.4mm and 0.7mm);
\draw (0,0) ellipse (5mm and 2.5mm);
\end{tikzpicture}
\end{minipage}&
\begin{minipage}{1.5cm}
21:  \\
\begin{tikzpicture}[level distance=1mm]
\draw (0,0) ellipse (6.5mm and 3.5mm);
\draw[fill = gray!60] (0,0) ellipse (5mm and 2.8mm);
\draw[densely dashed] (0,0) ellipse (3.5mm and 2mm);
\draw[densely dashed,fill = gray!20] (0,0) ellipse (1.5mm and 1mm);
\end{tikzpicture}
\end{minipage}&
\begin{minipage}{1.8cm}
23: \\
\begin{tikzpicture}[level distance=1mm]
\draw (0.2,0) ellipse (6.5mm and 3.5mm);
\draw[fill = gray!60] (0.1,0) ellipse (3.5mm and 1.5mm);
\draw[densely dashed] (0.3,0) ellipse (3.5mm and 2mm);
\draw[densely dashed,fill = gray!20] (0.15,0) ellipse (1.5mm and 1mm);
\end{tikzpicture}
\end{minipage}&
\begin{minipage}{1.8cm}
29: \\
\begin{tikzpicture}[level distance=1mm]
\draw (0,0) ellipse (5mm and 2.5mm);
\draw[fill = gray!60] (0.2,0) ellipse (2mm and 1.2mm);
\draw[densely dashed] (0.4,0) ellipse (5mm and 2.5mm);
\draw[densely dashed,fill = gray!20] (0.2,0) ellipse (1.4mm and 0.7mm);
\end{tikzpicture}
\end{minipage}\\
\begin{minipage}{1.5cm}
33:  \\
\begin{tikzpicture}[level distance=1mm]
\draw (0,0) ellipse (6.5mm and 3.5mm);
\draw[fill = gray!60] (0,0) ellipse (4mm and 2.5mm);
\draw[densely dashed] (0,0) ellipse (3.8mm and 2.3mm);
\draw[densely dashed,fill = gray!20] (0,0) ellipse (1.5mm and 1mm);
\end{tikzpicture}
\end{minipage}&
\begin{minipage}{1.8cm}
36: \\
\begin{tikzpicture}[level distance=1mm]
\draw (0,0) ellipse (6.5mm and 3.5mm);
\draw[densely dashed] (0,0) ellipse (5mm and 2.8mm);
\draw[fill = gray!60] (0,0) ellipse (3.5mm and 2mm);
\draw[densely dashed,fill = gray!20] (0,0) ellipse (1.5mm and 1mm);
\end{tikzpicture}
\end{minipage}&
\begin{minipage}{1.8cm}
38: \\
\begin{tikzpicture}[level distance=1mm]
\draw[densely dashed] (0,0) ellipse (6.5mm and 3.5mm);
\draw (0,0) ellipse (5mm and 2.8mm);
\draw[fill = gray!60] (0,0) ellipse (3.5mm and 2mm);
\draw[densely dashed,fill = gray!20] (0,0) ellipse (1.5mm and 1mm);
\end{tikzpicture}
\end{minipage}&
\begin{minipage}{1.5cm}
39: \\
\begin{tikzpicture}[level distance=1mm]
\draw (0,0) ellipse (5mm and 2.5mm);
\draw[fill = gray!60] (0.2,0) ellipse (2mm and 1.2mm);
\draw[densely dashed] (0.4,0) ellipse (5mm and 2.5mm);
\draw[densely dashed,fill = gray!20] (0.2,0) ellipse (1.8mm and 1mm);
\end{tikzpicture}
\end{minipage}\\
\begin{minipage}{1.5cm}
40: \\
\begin{tikzpicture}[level distance=1mm]
\draw (0,0) ellipse (6.5mm and 3.5mm);
\draw[densely dashed] (0,0) ellipse (5mm and 2.8mm);
\draw[densely dashed,fill = gray!20] (0,0) ellipse (3.5mm and 2mm);
\draw[fill = gray!60] (0,0) ellipse (3.3mm and 1.8mm);
\end{tikzpicture}
\end{minipage}&
\begin{minipage}{1.8cm}
41: \\
\begin{tikzpicture}[level distance=1mm]
\draw[densely dashed] (0,0) ellipse (6.5mm and 3.5mm);
\draw (0,0) ellipse (5mm and 2.8mm);
\draw[fill = gray!60] (0,0) ellipse (3.5mm and 2mm);
\draw[densely dashed,fill = gray!20] (0,0) ellipse (3.3mm and 1.8mm);
\end{tikzpicture}
\end{minipage}&
\begin{minipage}{1.8cm}
42d: \\
\begin{tikzpicture}[level distance=1mm]
\draw[densely dashed] (0,0) ellipse (6.5mm and 3.5mm);
\draw (0,0) ellipse (6.3mm and 3.3mm);
\draw[fill = gray!60] (0,0) ellipse (3.5mm and 2mm);
\draw[densely dashed,fill = gray!20] (0,0) ellipse (1.5mm and 1mm);
\end{tikzpicture}
\end{minipage}&
\begin{minipage}{1.5cm}
42e: \\
\begin{tikzpicture}[level distance=1mm]
\draw[densely dashed] (0,0) ellipse (6.5mm and 3.5mm);
\draw (0,0) ellipse (6.3mm and 3.3mm);
\draw[fill = gray!60] (0,0) ellipse (3.5mm and 2mm);
\draw[densely dashed,fill = gray!20] (0,0) ellipse (3.3mm and 1.8mm);
\end{tikzpicture}
\end{minipage}\\
\end{tabular}
\caption[Agreement Egg/Yolk configurations]{The EGG/YOLK representation of the opponent's offer from agent $i$'s  viewpoint identified by the plain lines, for agreement.}\label{fig:EYagr}
\end{figure}

\begin{figure}[t]\centering
\begin{tabular}{c c c c c}
\begin{minipage}{1.7cm}
1: \\
\begin{tikzpicture}[level distance=1mm]
\draw (0,0) ellipse (3.5mm and 2mm);
\draw[fill = gray!60] (0,0) ellipse (1.5mm and 1mm);
\draw[densely dashed] (0.8,0) ellipse (3.5mm and 2mm);
\draw[densely dashed,fill = gray!20] (0.8,0) ellipse (1.5mm and 1mm);
\end{tikzpicture}
\end{minipage} &
\begin{minipage}{1.8cm}
2: \\
\begin{tikzpicture}[level distance=1mm]
\draw (0.2,0) ellipse (5mm and 2.5mm);
\draw[fill = gray!60] (0,0) ellipse (1.5mm and 1mm);
\draw[densely dashed] (0.8,0) ellipse (5mm and 2.5mm);
\draw[densely dashed,fill = gray!20] (1,0) ellipse (1.5mm and 1mm);
\end{tikzpicture}
\end{minipage}&
\begin{minipage}{1.8cm}
3: \\
\begin{tikzpicture}[level distance=1mm]
\draw (0.2,0) ellipse (5mm and 2.5mm);
\draw[fill = gray!60] (0.3,0) ellipse (1.5mm and 1mm);
\draw[densely dashed] (0.8,0) ellipse (5mm and 2.5mm);
\draw[densely dashed,fill = gray!20] (1,0) ellipse (1.5mm and 1mm);
\end{tikzpicture}
\end{minipage}&
\begin{minipage}{1.8cm}
6: \\
\begin{tikzpicture}[level distance=1mm]
\draw (0.2,0) ellipse (5mm and 2.5mm);
\draw[fill = gray!60] (0.5,0) ellipse (1.5mm and 1mm);
\draw[densely dashed] (0.8,0) ellipse (5mm and 2.5mm);
\draw[densely dashed,fill = gray!20] (1,0) ellipse (1.5mm and 1mm);
\end{tikzpicture}
\end{minipage}&
\begin{minipage}{1.8cm}
8: \\
\begin{tikzpicture}[level distance=1mm]
\draw[densely dashed] (0.3,0) ellipse (6.5mm and 3.5mm);
\draw[densely dashed,fill = gray!20] (-0.1,0) ellipse (1.5mm and 1mm);
\draw (0.5,0) ellipse (3.5mm and 2mm);
\draw[fill = gray!60] (0.5,0) ellipse (1.5mm and 1mm);
\end{tikzpicture}
\end{minipage}\\
\end{tabular}
\caption[Absolute Disagreement Egg/Yolk configurations]{The EGG/YOLK representation of the opponent's offer from agent $i$'s  viewpoint identified by the plain lines, for absolute disagreement.}\label{fig:EYabs}
\end{figure}

\begin{figure}[t]\centering
\begin{tabular}{c c c c}
\begin{minipage}{1.8cm}
4: \\
\begin{tikzpicture}[level distance=1mm]
\draw[densely dashed,fill = gray!20] (0.65,0) ellipse (1.7mm and 1mm);
\draw (0.2,0) ellipse (5mm and 2.5mm);
\draw[fill = gray!60] (-0.05,0) ellipse (1.7mm and 1mm);
\draw[densely dashed] (0.7,0) ellipse (5mm and 2.5mm);
\end{tikzpicture}
\end{minipage}&
\begin{minipage}{1.8cm}
5: \\
\begin{tikzpicture}[level distance=1mm]
\draw[densely dashed,fill = gray!20] (0.45,0) ellipse (1.7mm and 1mm);
\draw (0.2,0) ellipse (5mm and 2.5mm);
\draw[fill = gray!60] (-0.05,0) ellipse (1.7mm and 1mm);
\draw[densely dashed] (0.7,0) ellipse (5mm and 2.5mm);
\end{tikzpicture}
\end{minipage}&
\begin{minipage}{1.8cm}
7: \\
\begin{tikzpicture}[level distance=1mm]
\draw (0.3,0) ellipse (6.5mm and 3.5mm);
\draw[fill = gray!60] (-0.1,0) ellipse (1.5mm and 1mm);
\draw[densely dashed] (0.5,0) ellipse (3.5mm and 2mm);
\draw[densely dashed,fill = gray!20] (0.5,0) ellipse (1.5mm and 1mm);
\end{tikzpicture}
\end{minipage} &
\begin{minipage}{1.5cm}
9: \\
\begin{tikzpicture}[level distance=1mm]
\draw[densely dashed,fill = gray!20] (0.7,0) ellipse (1.7mm and 1mm);
\draw (0.2,0) ellipse (5mm and 2.5mm);
\draw[fill = gray!60] (0.2,0) ellipse (1.7mm and 1mm);
\draw[densely dashed] (0.7,0) ellipse (5mm and 2.5mm);
\end{tikzpicture}
\end{minipage}\\
\begin{minipage}{1.8cm}
10: \\
\begin{tikzpicture}[level distance=1mm]
\draw[densely dashed,fill = gray!20] (0.4,0) ellipse (1.7mm and 1mm);
\draw (0.2,0) ellipse (5mm and 2.5mm);
\draw[fill = gray!60] (0,0) ellipse (1.7mm and 1mm);
\draw[densely dashed] (0.55,0) ellipse (5mm and 2.5mm);
\end{tikzpicture}
\end{minipage}&
\begin{minipage}{1.8cm}
11: \\
\begin{tikzpicture}[level distance=1mm]
\draw[densely dashed,fill = gray!20] (0.7,0) ellipse (1.7mm and 1mm);
\draw (0.2,0) ellipse (5mm and 2.5mm);
\draw[fill = gray!60] (0.3,0) ellipse (1.7mm and 1mm);
\draw[densely dashed] (0.55,0) ellipse (5mm and 2.5mm);
\end{tikzpicture}
\end{minipage}&
\begin{minipage}{1.8cm}
12: \\
\begin{tikzpicture}[level distance=1mm]
\draw (0.2,0) ellipse (6.5mm and 3.5mm);
\draw[fill = gray!60] (-0.05,0) ellipse (1.7mm and 1.1mm);
\draw[densely dashed] (0.35,0) ellipse (3.5mm and 2mm);
\draw[densely dashed,fill = gray!20] (0.4,0) ellipse (1.5mm and 1mm);
\end{tikzpicture}
\end{minipage}&
\begin{minipage}{1.8cm}
13: \\
\begin{tikzpicture}[level distance=1mm]
\draw[densely dashed] (0.2,0) ellipse (6.5mm and 3.5mm);
\draw[densely dashed,fill = gray!20] (-0.05,0) ellipse (1.7mm and 1.1mm);
\draw (0.35,0) ellipse (3.5mm and 2mm);
\draw[fill = gray!60] (0.4,0) ellipse (1.5mm and 1mm);
\end{tikzpicture}
\end{minipage}\\
\begin{minipage}{1.5cm}
25: \\
\begin{tikzpicture}[level distance=1mm]
\draw[densely dashed,fill = gray!20] (0.48,-0.05) ellipse (1mm and 0.7mm);
\draw (0.2,0) ellipse (5mm and 2.5mm);
\draw[fill = gray!60] (0.22,0.05) ellipse (1mm and 0.7mm);
\draw[densely dashed] (0.5,0) ellipse (5mm and 2.5mm);
\end{tikzpicture}
\end{minipage}&
\begin{minipage}{1.8cm}
26: \\
\begin{tikzpicture}[level distance=1mm]
\draw (0.1,0) ellipse (6.5mm and 4mm);
\draw[fill = gray!60] (-0.15,0) ellipse (1.7mm and 1.1mm);
\draw[densely dashed] (0.1,0) ellipse (5mm and 2.5mm);
\draw[densely dashed,fill = gray!20] (0.35,0) ellipse (1.7mm and 1.1mm);
\end{tikzpicture}
\end{minipage}&
\begin{minipage}{1.8cm}
27: \\
\begin{tikzpicture}[level distance=1mm]
\draw[densely dashed] (0.1,0) ellipse (6.5mm and 4mm);
\draw[densely dashed,fill = gray!20] (-0.15,0) ellipse (1.7mm and 1.1mm);
\draw (0.1,0) ellipse (5mm and 2.5mm);
\draw[fill = gray!60] (0.35,0) ellipse (1.7mm and 1.1mm);
\end{tikzpicture}
\end{minipage}&
\begin{minipage}{1.5cm}
42a: \\
\begin{tikzpicture}[level distance=1mm]
\draw[densely dashed] (0.1,0) ellipse (5.2mm and 2.7mm);
\draw[densely dashed,fill = gray!20] (-0.1,0) ellipse (1.5mm and 1mm);
\draw (0.1,0) ellipse (5mm and 2.5mm);
\draw[fill = gray!60] (0.3,0) ellipse (1.5mm and 1mm);
\end{tikzpicture}
\end{minipage} \\
\end{tabular}
\caption[Essence Disagreement Egg/Yolk configurations]{The EGG/YOLK representation of the opponent's offer from agent $i$'s  viewpoint identified by the plain lines, for essence disagreement.}\label{fig:EYess}
\end{figure}

\begin{figure}[t]\centering
\begin{tabular}{c c c c}
\begin{minipage}{1.5cm}
18: \\
\begin{tikzpicture}[level distance=1mm]
\draw[densely dashed,fill = gray!20] (0.3,0) ellipse (3mm and 1.6mm);
\draw[densely dashed] (0.4,0) ellipse (5mm and 2.5mm);
\draw[fill = gray!60] (0.2,0) ellipse (1.4mm and 0.7mm);
\draw (0,0) ellipse (5mm and 2.5mm);
\end{tikzpicture}
\end{minipage}&
\begin{minipage}{1.5cm}
24: \\
\begin{tikzpicture}[level distance=1mm]
\draw[densely dashed] (0.2,0) ellipse (6.5mm and 3.5mm);
\draw[densely dashed,fill = gray!20] (0.1,0) ellipse (3.5mm and 1.5mm);
\draw (0.3,0) ellipse (3.5mm and 2mm);
\draw[fill = gray!60] (0.15,0) ellipse (1.5mm and 1mm);
\end{tikzpicture}
\end{minipage}&
\begin{minipage}{1.5cm}
30: \\
\begin{tikzpicture}[level distance=1mm]
\draw (0,0) ellipse (5mm and 2.5mm);
\draw[densely dashed,fill = gray!20] (0.2,0) ellipse (2mm and 1.2mm);
\draw[densely dashed] (0.4,0) ellipse (5mm and 2.5mm);
\draw[fill = gray!60] (0.2,0) ellipse (1.4mm and 0.7mm);
\end{tikzpicture}
\end{minipage}&
\begin{minipage}{1.5cm}
34:  \\
\begin{tikzpicture}[level distance=1mm]
\draw[densely dashed] (0,0) ellipse (6.5mm and 3.5mm);
\draw[densely dashed,fill = gray!20] (0,0) ellipse (4mm and 2.5mm);
\draw (0,0) ellipse (3.8mm and 2.3mm);
\draw[fill = gray!60] (0,0) ellipse (1.5mm and 1mm);
\end{tikzpicture}
\end{minipage}\\
\begin{minipage}{1.8cm}
35: \\
\begin{tikzpicture}[level distance=1mm]
\draw (0,0) ellipse (6.5mm and 3.5mm);
\draw[densely dashed] (0,0) ellipse (5mm and 2.8mm);
\draw[densely dashed,fill = gray!20] (0,0) ellipse (3.5mm and 2mm);
\draw[fill = gray!60] (0,0) ellipse (1.5mm and 1mm);
\end{tikzpicture}
\end{minipage}&
\begin{minipage}{1.5cm}
37: \\
\begin{tikzpicture}[level distance=1mm]
\draw[densely dashed] (0,0) ellipse (6.5mm and 3.5mm);
\draw (0,0) ellipse (5mm and 2.8mm);
\draw[densely dashed,fill = gray!20] (0,0) ellipse (3.5mm and 2mm);
\draw[fill = gray!60] (0,0) ellipse (1.5mm and 1mm);
\end{tikzpicture}
\end{minipage}&
\begin{minipage}{1.5cm}
42c: \\
\begin{tikzpicture}[level distance=1mm]
\draw[densely dashed] (0,0) ellipse (6.5mm and 3.5mm);
\draw (0,0) ellipse (6.3mm and 3.3mm);
\draw[densely dashed,fill = gray!20] (0,0) ellipse (3.5mm and 2mm);
\draw[fill = gray!60] (0,0) ellipse (1.5mm and 1mm);
\end{tikzpicture}
\end{minipage}\\
\end{tabular}
\caption[Relative Disagreement Egg/Yolk configurations]{The EGG/YOLK representation of the opponent's offer from agent $i$'s  viewpoint identified by the plain lines, for relative disagreement.}\label{fig:EYrel}
\end{figure}

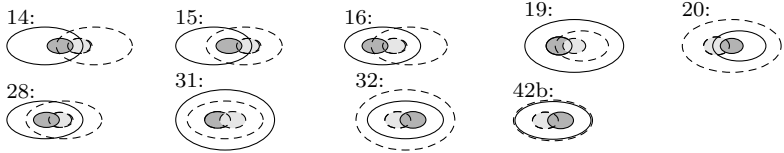
\begin{figure}[t]\centering
\begin{tabular}{c c c c c}
\begin{minipage}{1.8cm}
14: \\
\begin{tikzpicture}[level distance=1mm]
\draw[densely dashed,fill = gray!20] (0.45,0) ellipse (1.7mm and 1mm);
\draw (0,0) ellipse (5mm and 2.5mm);
\draw[fill = gray!60] (0.2,0) ellipse (1.7mm and 1mm);
\draw[densely dashed] (0.65,0) ellipse (5mm and 2.5mm);
\draw[densely dashed] (0.45,0) ellipse (1.5mm and 1mm);
\end{tikzpicture}
\end{minipage}&
\begin{minipage}{1.8cm}
15: \\
\begin{tikzpicture}[level distance=1mm]
\draw[densely dashed,fill = gray!20] (0.45,0) ellipse (1.7mm and 1mm);
\draw (0,0) ellipse (5mm and 2.5mm);
\draw[fill = gray!60] (0.2,0) ellipse (1.7mm and 1mm);
\draw[densely dashed] (0.4,0) ellipse (5mm and 2.5mm);
\draw[densely dashed] (0.45,0) ellipse (1.5mm and 1mm);
\end{tikzpicture}
\end{minipage}&
\begin{minipage}{1.8cm}
16: \\
\begin{tikzpicture}[level distance=1mm]
\draw[densely dashed, fill = gray!20] (0.25,0) ellipse (1.7mm and 1mm);
\draw (0.1,0) ellipse (5mm and 2.5mm);
\draw[fill = gray!60] (0,0) ellipse (1.7mm and 1mm);
\draw[densely dashed] (0.45,0) ellipse (5mm and 2.5mm);
\draw[densely dashed] (0.25,0) ellipse (1.5mm and 1mm);
\end{tikzpicture}
\end{minipage}&
\begin{minipage}{1.5cm}
19: \\
\begin{tikzpicture}[level distance=1mm]
\draw (0.2,0) ellipse (6.5mm and 3.5mm);
\draw[fill = gray!60] (0,0) ellipse (1.7mm and 1.2mm);
\draw[densely dashed] (0.3,0) ellipse (3.5mm and 2mm);
\draw[densely dashed,fill = gray!20] (0.2,0) ellipse (1.5mm and 1mm);
\draw (0,0) ellipse (1.7mm and 1.2mm);
\end{tikzpicture}
\end{minipage} &
\begin{minipage}{1.8cm}
20: \\
\begin{tikzpicture}[level distance=1mm]
\draw[densely dashed] (0.2,0) ellipse (6.5mm and 3.5mm);
\draw[densely dashed,fill = gray!20] (0,0) ellipse (1.7mm and 1.2mm);
\draw (0.3,0) ellipse (3.5mm and 2mm);
\draw[fill = gray!60] (0.2,0) ellipse (1.5mm and 1mm);
\draw[densely dashed] (0,0) ellipse (1.7mm and 1.2mm);
\end{tikzpicture}
\end{minipage}\\
\begin{minipage}{1.8cm}
28: \\
\begin{tikzpicture}[level distance=1mm]
\draw[densely dashed,fill = gray!20] (0.2,0) ellipse (1.7mm and 1mm);
\draw (0,0) ellipse (5mm and 2.5mm);
\draw[fill = gray!60] (0.02,0) ellipse (1.7mm and 1mm);
\draw[densely dashed] (0.25,0) ellipse (5mm and 2.5mm);
\draw[densely dashed] (0.2,0) ellipse (1.5mm and 1mm);
\end{tikzpicture}
\end{minipage}&
\begin{minipage}{1.8cm}
31: \\
\begin{tikzpicture}[level distance=1mm]
\draw (0.1,0) ellipse (6.5mm and 4mm);
\draw[fill = gray!60] (0,0) ellipse (1.7mm and 1.1mm);
\draw[densely dashed] (0.1,0) ellipse (5mm and 2.5mm);
\draw[densely dashed,fill = gray!20] (0.2,0) ellipse (1.7mm and 1.1mm);
\draw[densely dashed] (0,0) ellipse (1.7mm and 1.1mm);
\end{tikzpicture}
\end{minipage}&
\begin{minipage}{1.5cm}
32: \\
\begin{tikzpicture}[level distance=1mm]
\draw[densely dashed] (0.1,0) ellipse (6.5mm and 4mm);
\draw[densely dashed,fill = gray!20] (0,0) ellipse (1.7mm and 1.1mm);
\draw (0.1,0) ellipse (5mm and 2.5mm);
\draw[fill = gray!60] (0.2,0) ellipse (1.7mm and 1.1mm);
\draw[densely dashed] (0,0) ellipse (1.7mm and 1.1mm);
\end{tikzpicture}
\end{minipage}&
\begin{minipage}{1.8cm}
42b: \\
\begin{tikzpicture}[level distance=1mm]
\draw[densely dashed] (0.1,0) ellipse (5.2mm and 2.7mm);
\draw[densely dashed,fill = gray!20] (0,0) ellipse (1.7mm and 1.1mm);
\draw (0.1,0) ellipse (5mm and 2.5mm);
\draw[fill = gray!60] (0.2,0) ellipse (1.7mm and 1.1mm);
\draw[densely dashed] (0,0) ellipse (1.7mm and 1.1mm);
\end{tikzpicture}
\end{minipage}\\
\end{tabular}
\caption[Compatibility Egg/Yolk configurations]{The EGG/YOLK representation of the opponent's offer from agent $i$'s  viewpoint identified by the plain lines, for compatibility.}\label{fig:EYcomp}
\end{figure}

When both of the previous cases are negative, the agent $i$ evaluates how much acceptable is $\varphi$.
\begin{itemize}
\item The ideal\fix{Luca}{Inglese. The ``hopefully situation'' non
ha senso} situation is the agreement. As said before, a proposal is
considered acceptable when it is equivalent to the current angle of $i$
($\flex{k}_i \leftrightarrow \trad{i}{j}(\varphi)$) or when it is
representable by means of a pair of feasible angles, $(\flex{k}_i
\rightarrow \trad{i}{j}(\varphi)) \wedge
(\trad{i}{j}(\varphi)\rightarrow\flex{k+1}_i)$. In our formalism, if
$\flex{k}_i \rightarrow \trad{i}{j}(\varphi)$ then there always exists
$\flex{k+1}_i$ such that the previous condition is true because, as said
in Section~\ref{sec:language}, for each $\flex{k}_i$ there is a stubborn
formula in $\LS{i}$ that is a generalization of it. Thus, $\flex{k}_i
\rightarrow \trad{i}{j}(\varphi)$ and $\flex{k}_i\rightarrow \stub_i$
yield $\trad{i}{j}(\varphi)\rightarrow \stub_i$. In fact, it is not
possible that $\stub_i\rightarrow \trad{i}{j}(\varphi)$ because this is
the call-away condition. Thus, the sufficient condition to reach the
agreement is $\flex{k}_i\rightarrow\trad{i}{j}(\varphi)$. The egg-yolk
configurations for agreement are depicted in Figure~\ref{fig:EYagr}.
When agents are not in agreement, they disagree in many ways and $i$
finds which type of disagreement is between $\varphi$ and her current
angle $\flex{k}_i$.

\item If the proposal of $j$ is not consistent with the stubbornness
knowledge of agent $i$, $\neg(\stub_i \wedge \trad{i}{j}(\varphi))$ then
the agents are in absolute disagreement (Figure~\ref{fig:EYabs}).

\item If $i$ and $j$ are not in absolute disagreement, $i$'s CAF is
consistent with respect to $j$'s proposal, and there is no
generalization/restriction relation between $\flex{k}_i$ and $\varphi$,
$\neg(\flex{k}_i \wedge \trad{i}{j}(\varphi)) \wedge (\stub_i\vee
\trad{i}{j}(\varphi))$, then the agents are in essence disagreement
(Figure~\ref{fig:EYess}).

\item If $i$ and $j$ are neither in essence nor in absolute disagreement
and $\varphi$ is a generalization of $i$'s CAF, $(\flex{k}_i \rightarrow
\trad{i}{j}(\varphi))\wedge \neg(\trad{i}{j}(\varphi)\rightarrow
\flex{k}_i)$, then the agents are in relative disagreement
(Figure~\ref{fig:EYrel}).

\item If $i$ and $j$ are neither in absolute nor in relative
disagreement, $i$'s CAF is consistent with respect to $\varphi$, and
$i$'s CAF is not a weakening of $\varphi$, $(\flex{k}_i \vee
\trad{i}{j}(\varphi))\wedge\neg(\flex{k}_i \rightarrow
\trad{i}{j}(\varphi))\wedge \neg(\trad{i}{j}(\varphi)\rightarrow
\flex{k}_i)$, then the agents are in the compatibility relation
(Figure~\ref{fig:EYcomp}).
\end{itemize}

After evaluating the received proposal, an agent informs the opponent
about the negotiation situation she thinks to be in, in order to give to
the opponent a motivation of the potential disagreement, i.e. the non
acceptability of her proposal. To this end, we extend the formulas in
the agent language:

\begin{definition}[$\L_{i}$ extension]
\label{agLExt}
If $\varphi$ is a proposal received by $i$ in the negotiation process,
then it is a formula asserted by somebody as $j : \varphi$, with $j \neq i$. We extend
the language $\L_{i}$ with the formulas $\mathbf{absDis}(j:\varphi)$,
$\mathbf{essDis}(j:\varphi)$, $\mathbf{relDis}(j:\varphi)$,
$\mathbf{comp}(j:\varphi)$, and $\mathbf{agree}(j:\varphi)$.
For $\agstructdef{i}{k}$ a $\sign_i$-structure, the semantics of these
additional formulas is:
\begin{itemize}
\item $\agstruct{i}{k}\models \mathbf{absDis}(j:\varphi)$ iff
$\agstruct{i}{k}\models \neg(\stub_i \wedge \trad{i}{j}(\varphi))$;
\item $\agstruct{i}{k}\models \mathbf{essDis}(j:\varphi)$ iff
$\agstruct{i}{k}\models (\stub_i \vee \trad{i}{j}(\varphi))\wedge\neg(\flex{k}_i\wedge \trad{i}{j}(\varphi))$;
\item $\agstruct{i}{k}\models \mathbf{relDis}(j:\varphi)$ iff
$\agstruct{i}{k}\models (
\flex{k}_i \rightarrow \trad{i}{j}(\varphi))\wedge\neg(\trad{i}{j}(\varphi)\rightarrow \flex{k}_i)$;
\item $\agstruct{i}{k}\models \mathbf{comp}(j:\varphi)$ iff
$\agstruct{i}{k}\models (\flex{k}_i \vee \trad{i}{j}(\varphi))\wedge \neg(
\flex{k}_i \rightarrow \trad{i}{j}(\varphi))\wedge\neg(\trad{i}{j}(\varphi)\rightarrow \flex{k}_i)$;
\item $\agstruct{i}{k}\models \mathbf{agree}(j:\varphi)$ iff
$\agstruct{i}{k}\models (\flex{k}_i \rightarrow \trad{i}{j}(\varphi))$.
\end{itemize}
\end{definition}

We did not define a sentence $\mathbf{callAway}(j:\varphi)$ as the
call-away condition interrupts the MN. It is also important to remark
that in our system we restrict the evaluation of agent proposals to
formulas in the basic agent language, so no assertion can be made by
agents using extended (and nested) formulas like
$\mathbf{agree}(\mathbf{comp}(j:\varphi))$. This restriction avoids
nested MN processes.


In the two following subsections, we define the negotiation language and
the deductive rules for the MN process.

\subsection{MN language}\label{sec:MN-language}

The negotiation language $\L$ is built by the assertions of the agents
during the negotiation, i.e., labeled formulas $i:\varphi$ meaning that
agent $i \in \agset$ asserts the formula $\varphi \in \L_i$. That is,
$i:\varphi$ represents a proposal the agent $i$ makes in the negotiation
and typically represents her CAF or the evaluation of $\psi$ asserted by
$j$ when $\varphi$ is $\mathbf{R}(j:\psi)$ where $\mathbf{R}$ is one of
the disagreement relations $\mathbf{absDis}$, $\mathbf{essDis}$,
$\mathbf{relDis}$.

\begin{definition}[$\sign$-formula]	\label{negFormulas}
The \emph{signature} of the \emph{MN language} $\L$ is $\sign = \langle
\P,\{\alpha_i\}_{i \in \agset}\rangle$ where $\P=\bigcup_{i\in
\agset}\P_i$ and $\alpha_i:\mathcal{P}_i\rightarrow \mathbb{N}$ is the
arity function for predicate symbols. Let $\varphi$ be a $\L_i$ formula
for some $i\in\agset$; then $\L$ comprises of \emph{$\sign$-formulas}
defined as follows:
\begin{itemize}
\item $i:\varphi$ is a $\sign$-formula;
\item if $\varphi_1$ and $\varphi_2$ are $\sign$-formulas then $\varphi_1 \cap \varphi_2$ is a $\sign$-formula.
\end{itemize}

Let $\negstruct{k} = (\{\agstruct{i}{k}\}_{i \in \agset,
k\in\mathbb{N}},\mathcal{F})$ be a $\sign$-structure where
$\{\agstruct{i}{k}\}_{i \in \agset, k\in\mathbb{N}}$ is the domain
set and $\mathcal{F}$ is an evaluation function that maps name labels into
$\agset$. 
Then:
\begin{itemize}
\item $\negstruct{k}\models i:\varphi$ iff $\agstruct{\mathcal{F}(i)}{k}\models \varphi$;
\item $\negstruct{k} \models \varphi_1 \cap \varphi_2$ iff $\negstruct{k}\models \varphi_1$ and $\negstruct{k}\models \varphi_2$.
\end{itemize}
\end{definition}
We need only the conjunction operation because during the negotiation, the stream of dialog begins with the proposal of the first bidding agent, that is the auctioneer in the $1-n$ MN, and continues with pairs of offer evaluation and proposal of the following proposing agents.

\subsection{MN Rules}\label{sec:MN-rules}

In this section, we provide the deductive rules for the MN process. We
distinguish between pairwise MN and one-to-many MN because in the latter
case an agent behaves differently when she is the
auctioneer.\fix{Luca}{Inglese incerto: ``she is reserved to be the
auctioneer''. La parola reserved suona male. Perch\'e non solo ``she is
the auctioneer''. Reserved viene usato anche dopo. \`E un termine
standard? {\bf Matteo} Non lo e' ho rimosso.} Moreover, in a $1-n$ MN the supervisor system ends the
negotiation in a positive way when all or an acceptable part of the
agents share a common angle: in the former case the negotiation is
totally positive and in the latter it is partially positive.

\begin{figure}[t]\centering
\subfigure[A start.]{
\label{mp}
\begin{tikzpicture}[level distance=4mm]\scriptsize
\draw (0,20) node (alice) {A};
\draw (0,19.2) node (a) {\includegraphics[width=0.5cm]{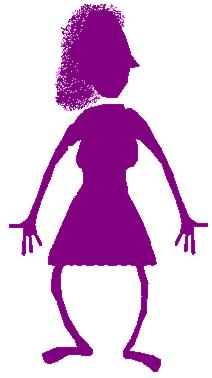}};
\draw (4,20) node (bob) {B};
\draw (4,19.2) node (b) {\includegraphics[width=0.5cm]{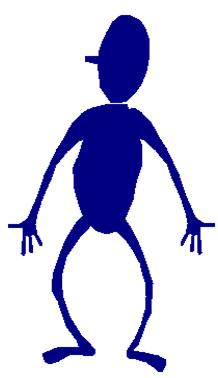}};
\draw[->,thick] (0.5,20) .. controls +(right:0.5cm) and +(left:0.5cm).. node[above,sloped] {$\varphi$} (3.5,20);
\draw[<-,thick] (0.5,19.5) .. controls +(right:0.5cm) and +(left:0.5cm).. node[above,sloped] {$\mathbf{absDis(\varphi),\psi}$} (3.5,19.5);
\draw[->,thick] (0.5,19) .. controls +(right:0.5cm) and +(left:0.5cm).. node[above,sloped] {$\mathbf{absDis(\psi),\varphi'}$} (3.5,19);
\draw (2,18.8) node (c) {$\vdots$};
\end{tikzpicture}
}
\hspace{0.1cm}
\subfigure[A violation.]{
\label{mpViolation}%
\begin{tikzpicture}[level distance=4mm]\scriptsize
\draw (0,20) node (alice) {A};
\draw (0,19.2) node (a) {\includegraphics[width=0.5cm]{alice.jpg}};
\draw (4,20) node (bob) {B};
\draw (4,19.2) node (b) {\includegraphics[width=0.5cm]{bob.jpg}};
\draw[->,thick] (0.5,20) .. controls +(right:0.5cm) and +(left:0.5cm).. node[above,sloped] {$\varphi$} (3.5,20);
\draw[<-,thick] (0.5,19.6) .. controls +(right:0.5cm) and +(left:0.5cm).. node[above,sloped] {$\mathbf{comp(\varphi),\psi}$} (3.5,19.6);
\draw[->,thick] (0.5,19.2) .. controls +(right:0.5cm) and +(left:0.5cm).. node[above,sloped] {$\mathbf{absDis(\psi),\varphi'}$} (3.5,19.2);
\draw (2,18.8) node (c) {$\vdots$};
\end{tikzpicture}}%
\caption{Two bilateral MN scenarios.}
\end{figure}

\renewcommand\arraystretch{1.5}
\begin{table}\centering
\begin{displaymath}
\renewcommand{\arraystretch}{3}
\begin{array}{c}
\infer[(AD)]{i:\mathbf{absDis}(j:\varphi)\cap i:\flex{1}_i}
{j:\varphi & \neg(\stub_i \wedge \trad{i}{j}(\varphi))}
\\
\infer[(ED)]{i:\mathbf{essDis}(j:\varphi)\cap i:\flex{1}_i}
{j:\varphi & \neg(\flex{0}_i\wedge \trad{i}{j}(\varphi))\wedge (\stub_i\vee \trad{i}{j}(\varphi)}
\\
\infer[(I)]{i:\flex{0}_i}
{j:\varphi & \neg(\flex{0}_i \rightarrow \trad{i}{j}(\varphi))\wedge(\trad{i}{j}(\varphi)\rightarrow \flex{0}_i)}
\\
\infer[(Ag)]{i:\mathbf{agree}(j:\varphi)\cap i:\trad{i}{j}(\varphi)}
{j:\varphi & (\flex{0}_i \rightarrow \trad{i}{j}(\varphi))}
\\
\infer[(Co)]{i:\mathbf{comp}(j:\varphi)\cap i:\flex{1}_i}
{j:\varphi & (\flex{0}_i \vee \trad{i}{j}(\varphi))\wedge\neg(\flex{0}_i\rightarrow\trad{i}{j}(\varphi))\wedge\neg(\trad{i}{j}(\varphi)\rightarrow\flex{0}_i)}
\end{array}		
\end{displaymath}
\caption{Rules for the second proposing agent.}
    \label{tab:net1}
\end{table}

\begin{table}[t]
\begin{displaymath}	
	\renewcommand{\arraystretch}{3}
\begin{array}{c}
\infer[(AD\text{-}AD)]{i:\mathbf{absDis}(j:\psi)\cap i:\flex{k+1}_i}
{j:\mathbf{absDis}(i:\flex{k}_i)\cap j:\psi & \neg(\stub_i \wedge \trad{i}{j}(\psi))}
\\
\infer[(AD\text{-}ED)]{i:\mathbf{essDis}(j:\psi)\cap i:\flex{k+1}_i}
{j:\mathbf{absDis}(i:\flex{k}_i)\cap j:\psi & (\stub_i \vee \trad{i}{j}(\psi))\wedge\neg(\flex{k+1}_i \wedge \trad{i}{j}(\psi))}\\

\infer[(AD\text{-}Co)]{i:\mathbf{comp}(j:\psi)\cap i:\flex{k+1}_i}
{j:\mathbf{absDis}(i:\flex{k}_i)\cap j:\psi & (\flex{k+1}_i \vee \trad{i}{j}(\psi))\wedge\neg(\flex{k+1}_i\rightarrow\trad{i}{j}(\psi))\wedge\neg(\trad{i}{j}(\psi)\rightarrow\flex{k+1}_i)}
\\
\infer[(AD\text{-}RD)]{i:\mathbf{relDis}(j:\psi)\cap i:\flex{k+1}_i}
{j:\mathbf{absDis}(i:\flex{k}_i)\cap j:\psi & (\flex{k+1}_i \rightarrow \trad{i}{j}(\psi))\wedge\neg(\trad{i}{j}(\psi)\rightarrow\flex{k+1}_i)}\\

\infer[(AD\text{-}Ag)]{i:\mathbf{agree}(j:\psi)\cap i:\trad{i}{j}(\psi)}
{j:\mathbf{absDis}(i:\flex{k}_i)\cap j:\psi & (\flex{k+1}_i \rightarrow \trad{i}{j}(\psi))}
\\
\infer[(ED\text{-}AD)]{i:\mathbf{absDis}(j:\psi)\cap i:\flex{k+1}_i}
{j:\mathbf{essDis}(i:\flex{k}_i)\cap j:\psi &  \neg(\stub_i \wedge \trad{i}{j}(\psi))}\\

\infer[(ED\text{-}ED)]{i:\mathbf{essDis}(j:\psi)\cap i:\flex{k+1}_i}
{j:\mathbf{essDis}(i:\flex{k}_i)\cap j:\psi & (\stub_i \vee \trad{i}{j}(\psi))\wedge\neg(\flex{k+1}_i \wedge \trad{i}{j}(\psi))}
\\
\infer[(ED\text{-}Co)]{i:\mathbf{comp}(j:\psi)\cap i:\flex{k+1}_i}
{j:\mathbf{essDis}(i:\flex{k}_i)\cap j:\psi & (\flex{k+1}_j\vee \trad{i}{j}(\psi))\wedge\neg(\flex{k+1}_i\rightarrow\trad{i}{j}(\psi))\wedge\neg(\trad{i}{j}(\psi)\rightarrow\flex{k+1}_i)}\\

\infer[(ED\text{-}RD)]{i:\mathbf{relDis}(j:\psi)\cap i:\flex{k+1}_i}
{j:\mathbf{essDis}(i:\flex{k}_i)\cap j:\psi & (\flex{k+1}_i \rightarrow \trad{i}{j}(\psi))\wedge\neg(\trad{i}{j}(\psi)\rightarrow\flex{k+1}_i)}
\\
\infer[(ED\text{-}Ag)]{i:\mathbf{agree}(j:\psi)\cap i:\trad{i}{j}(\psi)}
{j:\mathbf{essDis}(i:\flex{k}_i)\cap j:\psi & (\flex{k+1}_i \rightarrow \trad{i}{j}(\psi))}\\

\infer[(Co\text{-}ED)]{i:\mathbf{essDis}(j:\psi)\cap i:\flex{k+1}_i}
{j:\mathbf{comp}(i:\flex{k}_i)\cap j:\psi & (\stub_i \vee \trad{i}{j}(\psi))\wedge\neg(\flex{k+1}_i \wedge \trad{i}{j}(\psi))}
\\
\infer[(Co\text{-}Co)]{i:\mathbf{comp}(j:\psi)\cap i:\flex{k+1}_i}
{j:\mathbf{comp}(i:\flex{k}_i)\cap j:\psi & (\flex{k+1}_j\vee \trad{i}{j}(\psi))\wedge\neg(\flex{k+1}_i\rightarrow\trad{i}{j}(\psi))\wedge\neg(\trad{i}{j}(\psi)\rightarrow\flex{k+1}_i)}\\

\infer[(Co\text{-}RD)]{i:\mathbf{relDis}(j:\psi)\cap i:\flex{k+1}_i}
{j:\mathbf{comp}(i:\flex{k}_i)\cap j:\psi &  (\flex{k+1}_i \rightarrow \trad{i}{j}(\psi))\wedge\neg(\trad{i}{j}(\psi)\rightarrow\flex{k+1}_i)}
\\
\infer[(Co\text{-}Ag)]{i:\mathbf{agree}(j:\psi)\cap i:\trad{i}{j}(\psi)}
{j:\mathbf{comp}(i:\flex{k}_i)\cap j:\psi & (\flex{k+1}_i \rightarrow \trad{i}{j}(\psi))}\\

\infer[(RD\text{-}Ag)]{i:\mathbf{agree}(j:\psi) \cap i:\trad{i}{j}(\psi)}
{j:\mathbf{relDis}(i:\varphi)\cap j:\psi}
\end{array}
\end{displaymath}
\caption{Rules for the following proposing agents.}
\label{tab:1-1net2}
\end{table}
\renewcommand\arraystretch{1}

\subsubsection{MN Rules: 1-1 MN}\label{subsubsec:1-1MNRules}

We give the transition rules\fix{Luca}{Le regole vanno spiegate tutte, meglio se a gruppi. {\bf Matteo} No, qui non intervengo, ci vuole troppo tempo. Preferisco che andiamo avanti e nella seconda versione per Arxiv e poi per JOLLI sia tutto in ordine, se ci resta solo questo dovrebbe essere fattibile.} the agents use to negotiate depending on
the mutual negotiation position they test and on their flexibility;
these rules are coupled with those in Table~\ref{proposal}. No rules are
needed for the first proposing agent because she only makes a proposal;
conversely the second and the following proposing agents make proposals
and assert the evaluation of the received offer.

There are different rules for the second proposing agent and the
following ones. The second proposing agent has to check if the first
proposal is too general or too restrictive and thus if the negotiation
has to end immediately in the former case (call-away relation), or to
re-initiate with a new proposal in the latter case ($(I)$ rule).
Therefore, a following proposing agent receives proposals that are not
restrictions of her initial one and she has only to test if a received
proposal is not too general.

Consider the scenario in Figure~\ref{mp}: Alice ($A$) makes the proposal
$\varphi$ and Bob ($B$) evaluates it, where $B$'s reasoning is based
upon two tests:
\begin{enumerate}
\item The relation between his CAF and $\varphi$. $B$'s CAF may be in
agreement ($\varphi \leftrightarrow \flex{k}_{B}$) or not with
$\varphi$, and $B$ recognizes it by testing the condition listed above.
\item His stubbornness condition, i.e., if his CAF is $\stub_{B}$
($\flex{k}_{B}\leftrightarrow\stub_{B}$) or not. Whenever $B$ is
stubborn, he performs the same counterproposal, otherwise he may relax
his CAF by the $(W)$ rule of Table \ref{proposal}
($\flex{k}_{B}\rightarrow\flex{k+1}_{B}$) or change his theory by the
$(C)$ rule of Table \ref{proposal} ($\neg( \stub_i \leftrightarrow
\flex{k}_i)$ and $ \neg( \flex{k}_i \rightarrow \flex{k+1}_i)$ and
$\neg( \flex{k+1}_i \rightarrow \flex{k}_i)$).
\end{enumerate}

At the end of his evaluation, $B$ replies to $A$ with a
counterproposal $\psi$. When $A$ evaluates $\psi$ she has to consider
the relation between her CAF and $\psi$, her stubbornness condition
($\stub_{A}\leftrightarrow\flex{k}_{A}$) and $B$'s evaluation.
The evaluation of the opponent agent helps agents in choosing the new
proposal.
The choice of the action, weakening or changing theory, and of the next proposal depends on the agent's attitude: a \emph{collaborative agent} chooses the proposal that improves the negotiation relation with the opponent, whereas a \emph{competitive agent} chooses the proposal that 
changes the least the relation with the opponent. For instance, if $B$ says to $A$ that when $A$ proposes $\varphi$ they are in essence disagreement, and $B$ makes the proposal $\psi$, $A$ 
will propose $\varphi_1$ or $\varphi_2$, both inferred from $\varphi$ by applying (W) or (C). When $A$ is collaborative, she will propose $\varphi_1$ because she knows that they will be in agreement. Conversely, $A$ 
will propose $\varphi_2$, if $A$ is competitive, because she knows that they will remain in essence disagreement.

Suppose $B$ says to $A$ that when $A$ proposes
$\varphi$ they are in relative disagreement ($\psi \rightarrow \varphi$)
and $B$ makes the proposal $\psi$, then $A$ knows that they are in
agreement when she proposes $\psi$.

\renewcommand\arraystretch{1.5}
\begin{table}[t]\centering
\begin{displaymath}
\renewcommand{\arraystretch}{3}
\begin{array}{c}
\infer[(D)]{\mathit{Disagreement}(i,j)}
{\ast(i,j) & i:\varphi & j:\mathbf{na}(i:\varphi) & j:\psi & \stub_i\leftrightarrow\varphi & \stub_j\leftrightarrow\psi}
\\
\infer[(A)]{\mathit{Agreement}(i,j)}{\ast(i,j) & i:\varphi & j:\mathbf{agree}(i:\varphi)}
\\
\infer[(N)]{\mathit{Negotiate}(i,j)}{\ast(i,j) & i:\varphi & j:\mathbf{na}(i:\varphi) & j:\psi}
\end{array}
\end{displaymath}
\caption{1-1 MN system transition rules.}
    \label{1-1sts}
\end{table}
\renewcommand\arraystretch{1}

To support the interaction sketched above, we define the system $\MND$
to consist of the standard introduction and elimination rules for the
connectives of $\L_{i}$ and $\L$, and of two sets of rules: one set for
the second proposing agent (Table~\ref{tab:net1}) and another set for
the following proposing agents (Table~\ref{tab:1-1net2}). For the sake
of space, we omit the assumption of non call-away conditions in
negotiation rules and explain only some of the rules by example.

Assume that $A$ begins a MN by proposing $\flex{0}_{A}$ to $B$. $B$
evaluates $\trad{B}{A}(\flex{0}_{A})$ with respect to his initial angle
$\flex{0}_{B}$ and suppose $B$ thinks that $\trad{B}{A}(\flex{0}_{A})$
is too strict, i.e., $\trad{B}{A}(\flex{0}_{A})
\rightarrow\flex{0}_{B}$. Thus, $B$ cannot accept
$\trad{B}{A}(\flex{0}_{A})$ and re-initiates the MN by the rule $(I)$
and proposes $\flex{0}_{B}$ by $B:\flex{0}_{B}$. Otherwise, suppose $B$
thinks that $\trad{B}{A}(\flex{0}_{A})$ is entailed by his initial angle
$\flex{0}_{B}$ and that $\trad{B}{A}(\flex{0}_{A})$ is not too general,
i.e., it is not entailed by $\stub_{B}$. In this case, $B$ knows that
$A$ cannot accept $\flex{0}_{B}$ because it is too strict with respect
to her viewpoint (explained in the beginning of
Section~\ref{sec:process}), thus if $B$ accepts
$\trad{B}{A}(\flex{0}_{A})$ by $(Ag)$ because it satisfies the
precondition $(\flex{0}_{B}\rightarrow \trad{B}{A}(\flex{0}_A))$, and
says $B:\mathbf{agree}(A:\flex{0}_{A})$. This is the reason why there is
no rule $(RD)$ in Table~\ref{tab:net1} for relative disagreement
relation. Consider the case in which $B$ thinks that the proposal of
$A$, $\flex{0}_{A}$, is consistent to his initial angle $\flex{0}_{B}$
by $(Co)$. $B$ says to $A$ that they are in the compatibility relation
by $B:\mathbf{comp}(A:\flex{0}_{A})$ and makes a new proposal
$B:\flex{1}_{B}$ such that $\flex{0}_{B}\rightarrow \flex{1}_{B}$ (rule
$(W)$). Now $A$ thinks that $\trad{A}{B}(\flex{1}_{B})$ is an acceptable
angle of her initial viewpoint, i.e.
$\flex{1}_{A}\leftrightarrow\trad{A}{B}(\flex{1}_{B})$. Thus $A$ agrees
with $B$ and says $A:\mathbf{agree}(B:\flex{1}_{B})$ by
$(Co\text{-}Ag)$. It may be the case that agents make proposals that
become inconsistent with the received one. This inconsistency is tested
by the opponent agent, not by the bidding one, because in $\MND$ agents
choose the new proposal only with respect to their angles and not with
respect to the opponent's one.

Consider now the scenario in Figure~\ref{mpViolation}. $B$ evaluates the
proposal of $A$, tests the compatibility relation, and makes the
counterproposal. $A$ evaluates it and finds they are inconsistent. In
situations like this, agents perform proposals that violate the MN
relation among agents; we call such a proposal a \emph{violation} and
the rule causing it a \emph{violation rule}. In Table~\ref{tab:1-1net2},
the violation rules are $(ED\text{-}AD)$ and $(ED\text{-}Co)$.

The MN develops by agents making proposals and asserting if they are in
agreement or not. The entire process is controlled by a supervisor, an
external viewpoint, which tests if the MN ends and if the outcome is
positive or negative. Table~\ref{1-1sts} shows the transition rules for the
system
. We write 
\begin{itemize}
\item $j:\mathbf{na}(i:\varphi)$ to say that agent $j$ thinks she is not
in agreement with $i:\varphi$, and

\item $\ast(i,j)$ to say \emph{whatever the system state is} different from
the final ones (\textit{Agreement} and \textit{Disagreement}), i.e.,
whether the system is in \textit{Init} or
\textit{Negotiate}. 
\end{itemize}

The MN begins when agents make proposals in turns
($i:\varphi$, $j:\psi$) and they are not in agreement
($j:\mathbf{na}(i:\varphi)$) by $(N)$. The MN ends with a positive
outcome ($\varphi$) when each agent agrees on a proposal
($j:\mathbf{agree}(i:\varphi)$), otherwise the MN ends with a negative
outcome if there are no more proposals to perform
($\stub_i\leftrightarrow \varphi$ and $\stub_j\leftrightarrow \psi$) and
agents do not agree on a common acceptable angle
($j:\mathbf{na}(i:\varphi)$).

\begin{example}\label{es:4}
Let Alice and Bob be two  agents negotiating the definition of the term ``vehicle'' as in Example~\ref{es:1}. Suppose that the initial viewpoint of Alice is
$$\flex{0}_A=\mathit{has2wheels} \wedge \mathit{hasSteeringWheel} \wedge (\mathit{hasMotor}\vee \mathit{has2bicyclePedals})$$
and her stubbornness knowledge is as in Example~\ref{es:3}. Suppose that Bob's initial viewpoint is
$$\flex{0}_B=\mathit{has2wheels} \wedge \mathit{hasHandlebar} \wedge \mathit{has2bicyclePedals}$$
and his stubbornness knowledge is
\begin{eqnarray*}
\stub_{B}&=&(\mathit{has2wheels} \vee \mathit{has3wheels} \vee \mathit{has4wheels}) \wedge \\
        &\ &(\mathit{hasHandlebar} \vee \mathit{hasSteeringWheel})\wedge\\
        &\ &(\mathit{hasMotor} \vee \mathit{has2bicyclePedals}\vee \mathit{has4bicyclePedals})
\end{eqnarray*}
Alice is the first bidding agent and she proposes $\flex{0}_A$ to Bob, who receives the proposal and evaluates it. Bob tests that they are in compatibility because $(\flex{0}_B\vee\trad{B}{A}(\flex{0}_A))\wedge\neg(\flex{0}_B\rightarrow \trad{B}{A}(\flex{0}_A))\wedge \neg( \trad{B}{A}(\flex{0}_A )\rightarrow\flex{0}_B)$. Bob chooses the new CAF by a weakening action ($W$) in
\begin{eqnarray*}
\flex{1}_B &=&(\mathit{has2wheels} \vee \mathit{has4wheels})\wedge \\
 & \ &(\mathit{hasHandlebar} \vee \mathit{hasSteeringWheel})\wedge \mathit{has2bicyclePedals}
\end{eqnarray*}
Bob uses the ($Co$) rule and sends his CAF to Alice:
$$
\scriptsize
\infer[(Co)]{B:\mathbf{comp}(A:\flex{0}_A)\cap B:\flex{1}_B}
{A:\flex{0}_A & (\flex{0}_B\vee \trad{B}{A}(\flex{0}_A))\wedge \neg(\flex{0}_B\rightarrow \trad{B}{A}(\flex{0}_A))\wedge \neg( \trad{B}{A}(\flex{0}_A)\rightarrow\flex{0}_B)}
$$
The system continues the MN by:
$$
\scriptsize
\infer[(N)]{\mathit{Negotiate}(A,B)}{\ast(A,B) & A:\flex{0}_A & B:\mathbf{comp}(A:\flex{0}_A) & B:\flex{1}_B}
$$
Alice receives $\flex{1}_B$ and she has to make a weakening or a changing theory action because Bob did not say they were in agreement nor in relative disagreement. Alice performs a changing theory action by the rule ($C$) and her CAF is
$$
\flex{1}_A=\mathit{has2wheels}\wedge (\mathit{hasHandlebar} \vee \mathit{hasSteeringWheel})\wedge \mathit{has2bicyclePedals}
$$
Alice thinks they are in relative disagreement since $(\flex{1}_A\rightarrow\trad{A}{B}(\flex{1}_B))\wedge \neg (\trad{A}{B}(\flex{1}_B)\rightarrow\flex{1}_A)$, and she uses the rule ($Co\text{-}RD$) to inform Bob that they are in relative disagreement:
$$
\scriptsize
\infer[(Co\text{-}RD)]{A:\mathbf{relDis}(B:\flex{1}_B)\cap A:\flex{1}_A}
{B:\mathbf{comp}(A:\flex{0}_A)\cap B:\flex{1}_B & (\flex{1}_A\rightarrow \trad{A}{B}(\flex{1}_B))\wedge \neg (\trad{A}{B}(\flex{1}_B)\rightarrow\flex{1}_A)}
$$
The system continues the MN by:
$$
\scriptsize
\infer[(N)]{\mathit{Negotiate}(B,A)}{\ast(B,A) & B:\flex{1}_B & A:\mathbf{relDis}(B:\flex{1}_B) & A:\flex{1}_A}
$$
Bob receives $\flex{1}_A$ and he accepts it because Alice said they are in relative disagreement.
$$
\scriptsize
\infer[(RD\text{-}Ag)]{B:\mathbf{agree}(A:\flex{1}_A)\cap B:\trad{B}{A}(\flex{1}_A)}
{A:\mathbf{relDis}(B:\flex{1}_B)\cap A:\flex{1}_A}
$$
The system closes the MN by:
$$
\scriptsize
\infer[(A)]{\mathit{Agreement}(A,B)}{\ast(A,B) & A:\flex{1}_A & B:\mathbf{agree}(A:\flex{1}_A)}
$$
with a positive outcome, $\flex{1}_A$.

Fig.~\ref{es:PP} shows the message flow between Alice and Bob
(Fig.~\ref{es:PP-mp}), and the changes of their EGG/YOLK configurations
(Fig.~\ref{es:PP-ey}).
\hfill$\Box$
\end{example}

\begin{figure}[t]
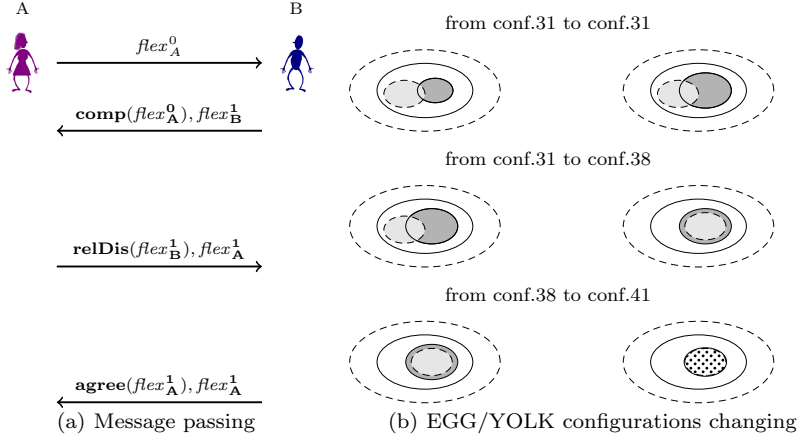
\centering
\subfigure[Message passing]{
\label{es:PP-mp}
\scalebox{0.9}{
\begin{tikzpicture}[level distance=4mm]\scriptsize
\draw (0,22.8) node (alice) {A};
\draw (0,22) node (a) {\includegraphics[width=0.5cm]{alice.jpg}};
\draw (4,22.8) node (bob) {B};
\draw (4,22) node (b) {\includegraphics[width=0.5cm]{bob.jpg}};
\draw[->,thick] (0.5,22) .. controls +(right:0.5cm) and +(left:0.5cm).. node[above,sloped] {$\flex{0}_A$} (3.5,22);
\draw[<-,thick] (0.5,21) .. controls +(right:0.5cm) and +(left:0.5cm).. node[above,sloped] {$\mathbf{comp(\flex{0}_A),\flex{1}_B}$} (3.5,21);
\draw[->,thick] (0.5,19) .. controls +(right:0.5cm) and +(left:0.5cm).. node[above,sloped] {$\mathbf{relDis(\flex{1}_B),\flex{1}_A}$} (3.5,19);
\draw[<-,thick] (0.5,17) .. controls +(right:0.5cm) and +(left:0.5cm).. node[above,sloped] {$\mathbf{agree(\flex{1}_A),\flex{1}_A}$} (3.5,17);
\end{tikzpicture}}
}
\subfigure[{EGG/YOLK configurations changing}]{
\label{es:PP-ey}
\scalebox{0.9}{
\begin{tikzpicture}[level distance=4mm]
\draw (2.8,21) node (31-31) {from conf.31 to conf.31};
\draw[densely dashed] (1,20) ellipse (11mm and 6mm);
\draw[fill = gray!60] (1.15,20) ellipse (2.6mm and 1.8mm);
\draw (1,20) ellipse (7mm and 4mm);
\draw[densely dashed, fill = gray!20] (.7,19.95) ellipse (3mm and 2mm);
\draw (1.15,20) ellipse (2.6mm and 1.8mm);
\draw[densely dashed] (5,20) ellipse (11mm and 6mm);
\draw (5,20) ellipse (7mm and 4mm);
\draw[fill = gray!60] (5.1,20) ellipse (3.8mm and 2.6mm);
\draw[densely dashed, fill = gray!20] (4.7,19.95) ellipse (3mm and 2mm);
\draw (5.1,20) ellipse (3.8mm and 2.6mm);
\draw (2.8,19) node (31-38) {from conf.31 to conf.38};
\draw[densely dashed] (1,18) ellipse (11mm and 6mm);
\draw (1,18) ellipse (7mm and 4mm);
\draw[fill = gray!60] (1.1,18) ellipse (3.8mm and 2.6mm);
\draw[densely dashed, fill = gray!20] (0.7,17.95) ellipse (3mm and 2mm);
\draw (1.1,18) ellipse (3.8mm and 2.6mm);
\draw[densely dashed] (5,18) ellipse (11mm and 6mm);
\draw (5,18) ellipse (7mm and 4mm);
\draw[fill = gray!60] (5.1,18) ellipse (3.8mm and 2.6mm);
\draw[densely dashed, fill = gray!20] (5.1,18) ellipse (3mm and 2mm);
\draw (2.8,17) node (38-41) {from conf.38 to conf.41};
\draw[densely dashed] (1,16) ellipse (11mm and 6mm);
\draw (1,16) ellipse (7mm and 4mm);
\draw[fill = gray!60] (1.1,16) ellipse (3.8mm and 2.6mm);
\draw[densely dashed, fill = gray!20] (1.1,16) ellipse (3mm and 2mm);
\draw[densely dashed] (5,16) ellipse (11mm and 6mm);
\draw (5,16) ellipse (7mm and 4mm);
\draw (5.1,16) ellipse (3.1mm and 2.1mm);
\draw[densely dashed, pattern = crosshatch dots] (5.1,16) ellipse (3mm and 2mm);
\end{tikzpicture}\hspace{1cm}}
}
\caption{The MN scenario of Example~\ref{es:4}: the message passing flow (a) and the changes of the EGG/YOLKs of the agents (b). Alice is identified by plain lines and Bob by dashed lines. White yolks represent the precedent proposal of the agent and the dotted gray yolk is the positive outcome of the scenario.\label{es-fig:PP}}
\end{figure}

The classification of the agreement conditions provided above is
complete, in the sense that there is no other possible configuration of
EGG/YOLKs, as shown by Lehmann and Cohn in (\citeyear{lehmann94}). Based
on the completeness of that analysis, we can show the following results.

\begin{theorem}
\label{consistent}
$\MND$ for bilateral MN is consistent.
\end{theorem}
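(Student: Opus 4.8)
The plan is to show that $\MND$ never derives a contradiction: it cannot produce two incompatible evaluations of the same received proposal, nor can it reach both $\mathit{Agreement}(i,j)$ and $\mathit{Disagreement}(i,j)$. The engine of the argument is that the side conditions of the negotiation rules are built directly on top of the EGG/YOLK classification of Lehmann and Cohn which, as recalled just before the statement, partitions the space of configurations into the pairwise disjoint and jointly exhaustive cases call-away, absolute disagreement, essence disagreement, relative disagreement, compatibility, and agreement. Consistency will follow once this partition is transferred to the propositional side conditions that actually occur in the rules.

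First I would verify, at the level of propositional logic, that the side conditions attached to the rules of Table~\ref{tab:net1} and Table~\ref{tab:1-1net2} are mutually exclusive. This is where the standing assumption of Section~\ref{sec:language} --- that for every CAF $\flex{k}_i$ there is a stubborn formula in $\LS{i}$ generalizing it, i.e.\ $\flex{k}_i\rightarrow\stub_i$ --- does the real work. For instance, the agreement precondition $\flex{k}_i\rightarrow\trad{i}{j}(\varphi)$ together with $\flex{k}_i\rightarrow\stub_i$ and the satisfiability of the angle $\flex{k}_i$ forces $\stub_i\wedge\trad{i}{j}(\varphi)$ to be satisfiable, which rules out the absolute-disagreement precondition $\neg(\stub_i\wedge\trad{i}{j}(\varphi))$; the explicit conjunct $\neg(\flex{k}_i\rightarrow\trad{i}{j}(\varphi))$ in the essence, compatibility and (negated half of) relative-disagreement conditions separates those from agreement, while the remaining clauses separate them from one another, and one checks the analogous exclusion for the $(I)$/call-away split that opens the negotiation. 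The upshot is that, for any received proposal and in any given $\sign$-structure, exactly one of $(AD)$, $(ED)$, $(I)$, $(Ag)$, $(Co)$ --- respectively exactly one of the continuation rules indexed by the previous relation --- has its premises satisfied.

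Next I would lift this to the system-level transition rules of Table~\ref{1-1sts}: the guard $\ast(i,j)$ confines $(D)$, $(A)$, $(N)$ to the non-final states, and their remaining premises ($j:\mathbf{agree}(i:\varphi)$ versus $j:\mathbf{na}(i:\varphi)$, and inside the latter the stubbornness conditions $\stub_i\leftrightarrow\varphi$, $\stub_j\leftrightarrow\psi$) are again mutually exclusive by the same case analysis together with rule $(S)$ of Table~\ref{proposal} (a stubborn agent only repeats her angle). Hence no derivation reaches both $\mathit{Agreement}(i,j)$ and $\mathit{Disagreement}(i,j)$, and, since the standard introduction/elimination rules are the only other way of combining negotiation formulas, the system never derives $\varphi$ and $\neg\varphi$ together: the non-logical ways of introducing a negotiation formula are deterministic in the sense just established.

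The main obstacle I anticipate is the treatment of the \emph{violation} rules $(ED\text{-}AD)$ and $(ED\text{-}Co)$, where an agent replies with a CAF later found inconsistent with the proposal it answered. The point to make is that in $\MND$ an agent computes her next angle only from her own hierarchy (via Table~\ref{proposal}), never from the opponent's formula, so a violation is a perfectly well-formed $\sign$-formula; its inconsistency is detected --- uniquely, by the case analysis above --- only when the \emph{opponent} evaluates it at the following step, and that evaluation again fires exactly one rule. Thus violations merely propagate the dialogue and never give rise to a state from which two contradictory conclusions are derivable, which is precisely the consistency claim.
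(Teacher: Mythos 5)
Your proof targets a different property from the one the paper actually establishes. In the paper, ``consistent'' is cashed out as: whenever a $\sign_i$-formula $\xi$ is deduced as a theorem of $\MND$, $\xi$ is a proposal acceptable to \emph{both} agents, i.e.\ it generalizes both $\LF{1}$ and $\LF{2}$ and is generalized by both $\LS{1}$ and $\LS{2}$. The paper's argument proceeds by checking this invariant for each group of rules (the proposal-making rules $(W)$, $(C)$, $(S)$ preserve the agent's own stubborn/flexible bounds; the rules for the second and following proposing agents record the correct mutual relation; the transition rules close the MN only on an acceptable common angle), and then a short contradiction argument: if an inferred $\xi$ were unacceptable to one agent she would be in call-away, which the construction of the rules excludes, while if $(D)$ applies no $\xi$ is inferred at all. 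Your proof instead reads consistency as syntactic non-contradiction (never both $\mathit{Agreement}(i,j)$ and $\mathit{Disagreement}(i,j)$, never two incompatible evaluations) and tries to derive it from mutual exclusivity of the rule preconditions. That is a legitimate-sounding reading of the bare statement, but it neither proves nor implies the acceptability invariant the paper means, so it does not recover the theorem as the paper uses it (e.g.\ the adequacy proof immediately relies on the paper's version).

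There is also an internal gap in your argument as written. The exclusivity claim ``exactly one of the continuation rules indexed by the previous relation has its premises satisfied'' is false for the agreement/relative-disagreement pair: the precondition of $(X\text{-}Ag)$ is just $\flex{k+1}_i \rightarrow \trad{i}{j}(\psi)$, while $(X\text{-}RD)$ requires $(\flex{k+1}_i \rightarrow \trad{i}{j}(\psi))\wedge\neg(\trad{i}{j}(\psi)\rightarrow\flex{k+1}_i)$, which strictly implies it; whenever the opponent's proposal is a proper generalization of the CAF both rules fire, and by Definition~\ref{agLExt} the formulas $\mathbf{agree}(j:\psi)$ and $\mathbf{relDis}(j:\psi)$ are jointly satisfiable, so the disjointness of the Lehmann--Cohn configurations does not transfer to these side conditions. (The paper resolves this overlap only informally, by the order in which the tests are described and by omitting an $(RD)$ rule for the second proposer, not by exclusive preconditions.) Your lift to the system level similarly leans on an exclusivity between $j:\mathbf{agree}(i:\varphi)$ and $j:\mathbf{na}(i:\varphi)$ that is never formally defined. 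So even on your own reading of ``consistent,'' the determinism step needs repair; and to match the paper you would in any case need the acceptability argument, namely that every inferred formula sits between each agent's flexible and stubborn knowledge.
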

\begin{proof}
Consider two agents represented in the $\MND$ system with sets $\LS{1}$ and
$\LS{2}$ of stubbornness formulas and sets $\LF{1}$ and $\LF{2}$ of flexible formulas.
To prove that $\MND$ is consistent, we show that if a  $\sign_i$ formula
$\xi$ is inferred using the $\MND$
rules, or, in other terms, is deduced as a theorem in the system, then $\xi$ represents a proposal that is acceptable by both
agents. In other words, we aim at proving that when the rules yield
$\xi$ then $\xi$ generalizes both $\LF{1}$ and $\LF{2}$ and is generalized by both $\LS{1}$ and $\LS{2}$.
To prove this, we need to show that:

\emph{(i)} The rules for making new proposals yield a relation that is
acceptable from the viewpoint of the agent who made the proposal before
and infer a new proposal again still acceptable. In other terms, if an
agent makes a proposal that is generalized by the set of stubbornness
formulas $\LS{i}$, and is a generalization of the set of flexible
formulas $\LF{i}$, for one agent, the rules infer a new proposal that is
in the same relationships with $\LS{i}$ and $\LF{i}$.

\emph{(ii)} The rules for the second proposing agent infer the relation
between the agents at that step of the negotiation.

\emph{(iii)} The rules for the following proposing agent do the same as
the rules for the second proposing agent, taking into account that this
step takes place after the step of the second proposing agent.

\emph{(iv)} The system transition rules close the negotiation only when
the proposal is acceptable by both agents, namely generalizes both
$\LF{i}$ and is generalized by both $\LS{i}$ sets.

Let us now consider a formula $\xi$ that is acceptable by the two agents, and let us consider the
rules that produce transitions in the system. In particular, if $\xi$ is
inferred by means of one of the rules $(AD)$, $(ED)$, $(I)$, $(Co)$,
$(Ag)$ for the second proposing agent, or by means of one of the rules
given in Fig.~\ref{tab:1-1net2} for the following proposing agent, then
the possible results of the step described above are given by the
application of the system transition rules. Evidently, if $\xi$ is
inferred, then the rule $(D)$ does not apply. If $(N)$ applies, and one
more inference is performed, then the rules $(W)$, $(C)$, $(S)$ allow us to infer a different formula.
Suppose now, by contradiction, that the new formula $\xi$ is
not acceptable by one of the agents (in the sense that either is not a generalization of her set of flexible formulas or it is not generalized by the set of stubbornness formulas. As a consequence, one
agent has called herself away, as we stated above. This, however, is
impossible, by construction of the rules for the second and following
proposals. Conversely, if the transition rule $(D)$ applies and,
therefore, the agents have incompatible viewpoints, then $\xi$ is not
inferred through the system, because it is not a generalization of both flexible sets of formulas and generalizes by both stubbornness sets of formulas. Clearly, by means of the full set of rules,
it is not possible to do so when the agents have consistent
viewpoints.
\hfill $\Box$
\end{proof}

We say that a deductive system is adequate to represent a MN between two agents when it infers an outcome iff an agreement is reachable between the agents, otherwise it does not produce any result.
\begin{definition}\label{def:adequate:1-1}
A deductive system $\mathcal{R}$ is \emph{adequate} to represent the MN process between two agents, $i$ and $j$, when
$$
\mathcal{R}\mbox{ infers }
\left\{
\begin{array}{lr}
\varphi & \mbox{ iff for all }x\in\agset\mbox{ there exists } k\in\mathds{N}\mbox{ s.t. }\agstruct{x}{k} \models(\flex{k}_x\rightarrow \varphi) \wedge (\varphi\rightarrow\stub_x)  \\
\perp & \mbox{ otherwise}
\end{array}
\right.
$$
where $\agset = \{i,j\}$ and $\varphi\in\bigcup_{x\in\agset}\L_x$.
\end{definition}

\begin{theorem}
$\MND$ is adequate to represent the MN of two agents.
\end{theorem}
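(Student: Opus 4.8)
The plan is to establish separately the two implications packed into Definition~\ref{def:adequate:1-1}. \emph{Soundness}: whenever $\MND$ produces a positive outcome $\varphi$ (via a derivation ending in $\mathit{Agreement}(i,j)$), $\varphi$ satisfies the acceptability condition for both agents, and in every other case $\MND$ produces $\perp$, i.e.\ a derivation ending in $\mathit{Disagreement}(i,j)$. \emph{Completeness}: whenever the acceptability condition holds for some $\varphi$, there is an $\MND$-derivation ending in $\mathit{Agreement}(i,j)$ whose outcome again meets that condition. Throughout I rely on the standing hypothesis of Section~\ref{sec:language} that each agent's set of angles is finite and has $\stub_x$ as its minimum in the weakening order; this, together with the completeness of the EGG/YOLK classification of Lehmann and Cohn \citep{lehmann94} on which the rules of Tables~\ref{tab:net1} and~\ref{tab:1-1net2} are built, is what makes the underlying search both terminating and never stuck.

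For \textbf{soundness}, suppose a derivation ends with rule $(A)$ of Table~\ref{1-1sts}, so it contains $j:\mathbf{agree}(i:\varphi)$ with $i$ asserting $\varphi$ as its current angle. The only rules able to produce $j:\mathbf{agree}(\cdot)$ --- $(Ag)$ of Table~\ref{tab:net1} and the rules $(\cdot\text{-}Ag)$ of Table~\ref{tab:1-1net2} --- carry the premise $\flex{k}_j\rightarrow\trad{j}{i}(\varphi)$, so (modulo the translation functions) $\varphi$ generalizes a current angle of $j$; and $\varphi$ is literally a current angle of $i$, hence $\flex{k'}_i\rightarrow\varphi$. By Theorem~\ref{consistent}, together with the invariant of Section~\ref{sec:language} that every CAF is generalized by the corresponding $\stub$, we also get $\varphi\rightarrow\stub_i$ and $\varphi\rightarrow\stub_j$. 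This is precisely the condition of Definition~\ref{def:adequate:1-1}. To see that $\MND$ yields $\perp$ in all remaining cases, note that finiteness of the two angle hierarchies bounds the number of applications of $(W)$ and $(C)$: after finitely many steps both CAFs coincide with $\stub_i$ and $\stub_j$, after which only the stubbornness rule $(S)$ of Table~\ref{proposal} applies at the agent level, and $(A)$ or $(D)$ at the system level, with $(D)$ firing exactly when $\stub_i\leftrightarrow\varphi$, $\stub_j\leftrightarrow\psi$ and $j:\mathbf{na}(i:\varphi)$. Hence every maximal derivation terminates in $(A)$ or $(D)$; when no positive outcome is reached, $\mathit{Disagreement}(i,j)$, i.e.\ $\perp$, is inferred.

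For \textbf{completeness}, assume $\varphi$ satisfies the condition, so there are angles $\flex{k_i}_i,\flex{k_j}_j$ with $\flex{k_i}_i\rightarrow\varphi\rightarrow\stub_i$ and $\flex{k_j}_j\rightarrow\varphi\rightarrow\stub_j$. I build a derivation in which each agent walks down its hierarchy by $(W)$-steps toward its witnessing angle. By completeness of the egg/yolk analysis, at every round the mutual position of the two current angles is captured by exactly one of call-away, absolute/essence/relative disagreement, compatibility or agreement, so exactly one evaluation rule of Table~\ref{tab:net1} or~\ref{tab:1-1net2} applies and the derivation never blocks. Along the descent no current angle can be a generalization of the opponent's stubborn formula --- otherwise $\varphi$, being weaker still, would violate $\varphi\rightarrow\stub$ for that opponent --- so no call-away occurs. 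Once $\flex{k_i}_i\rightarrow\trad{i}{j}(\flex{k_j}_j)$ holds, the premise of an $(\cdot\text{-}Ag)$ rule is met, $i$ asserts $j:\mathbf{agree}$, and $(A)$ closes the derivation with a positive outcome that generalizes a current angle of both agents and is generalized by $\stub_i$ and $\stub_j$ --- as required. Conversely, if the condition holds for no $\varphi$, then no pair of angles, in particular $(\stub_i,\stub_j)$, admits a common acceptable formula, so by the termination argument above the agents reach $\stub_i$ and $\stub_j$, $(D)$ fires, and $\MND$ infers $\perp$.

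The step I expect to be the main obstacle is completeness, for two reasons. First, one must guarantee that the descent toward the witnessing angles can be carried out without accidentally triggering a call-away or one of the violation rules $(ED\text{-}AD)$, $(ED\text{-}Co)$ in a way that aborts the derivation; this needs the monotonicity observation from Definition~\ref{struct} that $(W)$ only enlarges the interpretations $\istr{i}{k}(P)$, so that once a pair of angles lies in the agreement region it stays there, which makes the order of the moves immaterial and justifies always preferring $(W)$ to $(C)$ when a weakening toward $\varphi$ exists. Second, one has to argue that the nondeterminism of the system --- which rule fires, and whether an agent weakens or changes theory --- can always be resolved so as to realize an agreement when one exists; this is where finiteness of the hierarchies and completeness of the egg/yolk case analysis combine to give a terminating, never-stuck search that necessarily reaches the agreement region.
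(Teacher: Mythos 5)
Your overall two-direction plan (anything inferred is acceptable to both agents; an existing common acceptable angle is actually reached) matches the spirit of the paper's argument, but there is a genuine gap in how you discharge the negative and termination parts. You invoke a ``standing hypothesis of Section~\ref{sec:language} that each agent's set of angles is finite'' and from it conclude that ``after finitely many steps both CAFs coincide with $\stub_i$ and $\stub_j$'', so that every maximal derivation ends in $(A)$ or $(D)$ and $\perp$ is produced whenever no agreement exists. The paper makes no such finiteness assumption: the hierarchy of angles is only required to be partially ordered with $\stub_i$ as the maximal generalization of every CAF, and the paper's own decidability Corollary --- which needs both a \emph{finite signature} and \emph{competitive} agents --- shows that termination is precisely what is \emph{not} available in general. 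Hence your claim that $\MND$ yields $\perp$ ``in all remaining cases'', and likewise the ``terminating, never-stuck search'' underpinning your completeness half, rest on an assumption the paper does not grant. A second, smaller overclaim: you justify the stability of the agreement region by the monotonicity $\istr{i}{k}(P)\subseteq\istr{i}{k+1}(P)$ under $(W)$, but agreement for $i$ is $\flex{k}_i\rightarrow\trad{i}{j}(\varphi)$, and enlarging the models of $i$'s CAF can destroy exactly this entailment, so weakening does not keep a pair of angles in the agreement region unless the weakening is further constrained.

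For comparison, the paper's proof is much shorter and takes a different route: it does not construct derivations or argue termination at all. It splits on whether the two viewpoints are consistent, i.e.\ whether the stubbornness and flexible sets lie in some EGG/YOLK configuration other than number~1. In the consistent case it argues that any inferred $\sign_i$-formula $\xi$ is a common angle (leaning on Theorem~\ref{consistent} and the construction of the rules), and that if nothing is inferred the agents are in call-away; in configuration~1 the established relation is absolute disagreement and, by consistency, no formula is inferred. If you want to keep your more ambitious derivation-construction argument, you must either restrict to the hypotheses of the Corollary (finite signature, competitive agents) or replace the termination step by the paper's configuration-based case analysis, which is what actually carries the ``$\perp$ otherwise'' half of Definition~\ref{def:adequate:1-1}.
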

\begin{proof}
We consider two agents that have 
consistent viewpoints, namely such that there exists a possible common angle. Their
stubbornness sets and their flexible sets of formulas are in one of the
EGG/YOLK configurations 
except number 1. Suppose now that the $\MND$ system
infers a $\sign_i$ formula $\xi$. Then, $\xi$ is a common angle.
Conversely, suppose that $\MND$ does not infer any $\sign_i$ formula. Then, the agents are in call-away situation.
Suppose now that the two agents have  inconsistent viewpoints (configuration 1). The
relation established is 
absolute disagreement. The
result is that no formula can be inferred through the system, which is
consistent by Theorem~\ref{consistent}. Hence, overall, the system is
adequate.
\hfill $\Box$
\end{proof}


For MN processes that are built on finite signature theories, we obtain
the following decidability result:
\begin{corollary}
MN is decidable for theories with finite signature under the assumption
of competitive agents.
\end{corollary}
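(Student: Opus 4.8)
The plan is to reduce the question to reachability in a finite transition system and to invoke the adequacy of $\MND$. Since $\MND$ is adequate to represent bilateral MN (the theorem above) in the sense of Definition~\ref{def:adequate:1-1}, it infers a formula $\varphi$ exactly when the two agents have a common angle and infers $\perp$ otherwise; so it suffices to exhibit an algorithm deciding whether the rules of Tables~\ref{tab:net1}, \ref{tab:1-1net2} and~\ref{1-1sts} can drive the system into the state $\mathit{Agreement}(i,j)$. The $1$-$n$ case then follows because, as noted at the beginning of Section~\ref{sec:process}, it is handled as $n-1$ bilateral negotiations between the auctioneer and each other agent.

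First I would bound the search space. A signature $\sign_i=\langle\P_i,\alpha_i\rangle$ with $\P_i$ finite, together with the finite domain $\dstr{i}{}$ built into Definition~\ref{struct}, admits only finitely many $\sign_i$-structures: a predicate $P$ with $\alpha_i(P)=n$ has at most $2^{|\dstr{i}{}|^{n}}$ interpretations and $\P_i$ is finite. Hence there are only finitely many logically inequivalent $\sign_i$-formulas, so each agent has only finitely many distinct angles $\flex{k}_i$ and only finitely many of the extended formulas $\mathbf{absDis}(\cdot),\mathbf{essDis}(\cdot),\mathbf{relDis}(\cdot),\mathbf{comp}(\cdot),\mathbf{agree}(\cdot)$. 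A state of the bilateral MN is fixed by the two current angles plus the last asserted negotiation relation, so only finitely many states occur, and each transition rule has only finitely many instances, since the successor CAF $\flex{k+1}_i$ ranges over the finitely many structures related to $\flex{k}_i$ as prescribed by the $(W)$/$(C)$/$(S)$ clauses of Definition~\ref{struct}. Thus the negotiation unfolds as a finitely branching tree.

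Next I would use competitiveness to bound the depth of that tree. A competitive agent forms a new CAF only when the relation with the opponent's proposal requires it (absolute or essence disagreement, or compatibility), and then picks the change that perturbs that relation least. A weakening $(W)$ step strictly enlarges the model set of the CAF --- by $\istr{i}{k}(P)\subseteq\istr{i}{k+1}(P)$ together with the side condition $\neg(\stub_i\leftrightarrow\flex{k}_i)$ --- so no agent can weaken indefinitely without reaching $\stub_i$, after which the stubbornness rule $(S)$ is forced; a changing-theory $(C)$ step moves to an incomparable angle. Under a faithful reading of competitiveness --- an agent never drifts further from her initial viewpoint than she must, hence never re-adopts an angle she has already abandoned --- the sequence of angles each agent proposes is injective, so it is finite, bounded by the number of her non-equivalent angles. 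Once neither agent can move, both play $(S)$ and rule $(D)$ fires (unless $(A)$ fired earlier), so every branch terminates after finitely many steps.

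Consequently the negotiation tree is finite and effectively constructible, and a search of it decides whether $\mathit{Agreement}(i,j)$ is reachable; by adequacy this decides whether the MN has a positive outcome. The delicate point is the termination argument for $(C)$: excluding an infinite alternation of lateral theory changes forces one to pin down the competitive strategy precisely --- e.g.\ to exhibit a well-founded measure, such as the pair consisting of the number of not-yet-proposed angles and the distance in the weakening lattice to the opponent's current angle, that strictly decreases at every competitive $(C)$ step --- whereas Section~\ref{sec:formalization} describes competitiveness only informally. Everything else is routine bookkeeping over a finite space of structures.
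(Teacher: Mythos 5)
Your proposal is correct and follows essentially the same route as the paper's own proof: both arguments rest on the finiteness of pairwise inequivalent proposals over a finite signature together with the competitiveness assumption bounding the number of applications of each rule (no agent re-issues proposals indefinitely), so the negotiation terminates in finitely many steps and an exhaustive search decides the outcome. Your write-up merely makes explicit what the paper leaves terse --- counting formulas up to logical equivalence via the finitely many $\sign_i$-structures, and the fact that ruling out endless $(C)$-cycling depends on a precise reading of the informally described competitive strategy.
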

\begin{proof}
Consider an MN between competitive agents on a language with finite
signature. The number of possible proposals the agents can exchange
during a MN process is formed by the possible formulas that can
be built on the signature, which is finite. Since the rules of $\MND$
are finite and the new possible proposals are finite, and the number of applications of each rule is limited to the number of proposals the other negotiator can perform, the number of
steps that will be performed, in any algorithmic solution to the
problem, is finite as well.
\hfill $\Box$
\end{proof}

In the following section, we extend the MND for 1-n MN in which one
agent is the referee.

\subsubsection{MN Rules: 1-n MN}\label{subsubsec:1-nMNRules}

When the Meaning Negotiation involves more than two agents, it may be
viewed as an English Auction Game. An agent behaves differently if she
is the auctioneer or not. The referee is the agent who receives all the
proposals of the others and finds which one is shared by the agents. The
auctioneer is a player himself; he makes a proposal at each new bid. The
auctioneer replicates the same proposal to each of the negotiating
agents. As a 1-1 MN player, the auctioneer evaluates each received
proposal by testing the validity of the conditions listed above: he
checks the relation between each received proposal and his stubbornness
knowledge and his flexible one. An auctioneer differs from the other
negotiating agents in the number of the evaluations he has to do.
Moreover, when the auctioneer infers the next proposal to perform by the
weakening or the changing rules, the proposal may be related in more
ways than that of the proposal made by other agents. In fact, in 1-1 MN
it is not possible to reach the absolute disagreement by a relative
disagreement situation because, when an agent $i$ informs her opponent
$j$ that they are in relative disagreement, then $j$ knows that $i$
proposed one of $i$'s CAF that is a restriction of her CAF and $j$
accepts it. Instead, in 1-n MN, the previous situations may raise: the
auctioneer may not accept the proposal of one of the negotiating agents
who said that they are in relative disagreement, because the proposal is
not shared by the other agents. The set of deductive rules for the
auctioneer is an extension of those in Table~\ref{tab:1-1net2} with
those in Table~\ref{tab:1-nnet2}. In particular, all the added rules are
violations and they represent the changing of the negotiation situation
from relative disagreement to the relations of absolute disagreement,
essence disagreement and compatibility, and from agreement to all the
possible relations between agents: absolute disagreement, essence
disagreement, compatibility, relative disagreement and agreement. The
rule $(Ag\text{-}Ag)$ is not only for the auctioneer but also for
negotiating agents and it is used by them whenever the auctioneer
proposes an acceptable angle.

\begin{table}[t]
\begin{displaymath}	
	\renewcommand{\arraystretch}{3}
\begin{array}{c}
\infer[(Co\text{-}AD)]{i:\mathbf{absDis}(j:\psi)\cap i:\flex{k+1}_i}
{j:\mathbf{comp}(i:\flex{k}_i)\cap j:\psi & \neg(\stub_i \wedge \trad{i}{j}(\psi))}
\\
\infer[(RD\text{-}ED)]{i:\mathbf{essDis}(j:\psi)\cap i:\flex{k+1}_i}
{j:\mathbf{relDis}(i:\flex{k}_i)\cap j:\psi & (\stub_i \vee \trad{i}{j}(\varphi))\wedge\neg(\flex{k+1}_i \wedge \trad{i}{j}(\psi))}\\

\infer[(RD\text{-}Co)]{i:\mathbf{comp}(j:\psi)\cap i:\flex{k+1}_i}
{j:\mathbf{relDis}(i:\flex{k}_i)\cap j:\psi & (\flex{k+1}_i \vee \trad{i}{j}(\psi))\wedge\neg(\flex{k+1}_i\rightarrow\trad{i}{j}(\psi))\wedge\neg(\trad{i}{j}(\psi)\rightarrow\flex{k+1}_i)}
\\
\infer[(RD\text{-}RD)]{i:\mathbf{relDis}(j:\psi)\cap i:\flex{k+1}_i}
{j:\mathbf{relDis}(i:\flex{k}_i)\cap j:\psi & (\flex{k+1}_i \rightarrow \trad{i}{j}(\psi))\wedge\neg(\trad{i}{j}(\psi)\rightarrow\flex{k+1}_i)}\\

\infer[(Ag\text{-}AD)]{i:\mathbf{absDis}(j:\psi)\cap i:\flex{k+1}_i}
{j:\mathbf{agree}(i:\flex{k}_i)\cap j:\psi &  \neg(\stub_i \wedge \trad{i}{j}(\psi))}\\

\infer[(Ag\text{-}ED)]{i:\mathbf{essDis}(j:\psi)\cap i:\flex{k+1}_i}
{j:\mathbf{agree}(i:\flex{k}_i)\cap j:\psi & (\stub_i \vee \trad{i}{j}(\psi))\wedge\neg(\flex{k+1}_i \wedge \trad{i}{j}(\psi))}
\\
\infer[(Ag\text{-}Co)]{i:\mathbf{comp}(j:v)\cap i:\flex{k+1}_i}
{j:\mathbf{agree}(i:\flex{k}_i)\cap j:\varphi & (\flex{k+1}_i \vee \trad{i}{j}(\psi))\wedge\neg(\flex{k+1}_i\rightarrow\trad{i}{j}(\psi))\wedge\neg(\trad{i}{j}(\psi)\rightarrow\flex{k+1}_i)}\\

\infer[(Ag\text{-}RD)]{i:\mathbf{relDis}(j:\psi)\cap i:\flex{k+1}_i}
{j:\mathbf{agree}(i:\flex{k}_i)\cap j:\psi & (\flex{k+1}_i \rightarrow \trad{i}{j}(\psi))\wedge\neg(\trad{i}{j}(\psi)\rightarrow\flex{k+1}_i)}
\\
\infer[(Ag\text{-}Ag)]{i:\mathbf{agree}(j:\psi)\cap i:\trad{i}{j}(\psi)}
{j:\mathbf{agree}(i:\flex{k}_i)\cap j:\psi & (\flex{k+1}_i \rightarrow \trad{i}{j}(\psi))}
\end{array}
\end{displaymath}
\caption[Extension of the rules in Table~\ref{tab:1-1net2} for the auctioneer]{Extension of the rules in Table~\ref{tab:1-1net2} for the auctioneer. All these rules are violations and the rule $(Ag\text{-}Ag)$ may be used also by negotiating agents.}
\label{tab:1-nnet2}
\end{table}
\renewcommand\arraystretch{1}

\renewcommand\arraystretch{1.5}
\begin{table}[t]\centering
\begin{displaymath}
\renewcommand{\arraystretch}{3}
\begin{array}{c}
\infer[(DD)]{\mathit{Disagreement}(a,i_1,\dots,i_n)}
{\ast(a,i_1,\dots,i_n) & {a:\varphi} & \mbox{for all }{i \in\agset_1.i:\mathbf{agree}(a:\varphi)} &  \mbox{for all }j \in \agset_2.j:\mathbf{na}(a:\varphi) \\ & \mid\agset_1\mid\leq\alpha & \mbox{ for all }{ i \in\agset.i:\psi \mbox{ and }\stub_i\leftrightarrow\psi}}
\\
\infer[(AA)]{\mathit{Agreement}(a,i_1,\dots,i_n)}{\ast(a,i_1,\dots,i_n) & {a:\varphi} & \mbox{for all }{i \in\agset_1.i:\mathbf{agree}(a:\varphi)} & \\ \mbox{for all }{j \in \agset_2.j:\mathbf{na}(a:\varphi)} & \mid\agset_1\mid\geq\alpha}
\\
\infer[(NN)]{\mathit{Negotiate}(a,i_1,\dots,i_n)}{\ast(a,i_1,\dots,i_n) & {a:\varphi} & \mbox{for all }{i \in\agset_1.i:\mathbf{agree}(a:\varphi)}& \mbox{for all }{j \in \agset_2.j:\mathbf{na}(a:\varphi)} & \mid\agset_1\mid\leq\alpha}
\end{array}
\end{displaymath}
\caption{1-n MN system transition rules.}
    \label{1-nsts}
\end{table}
\renewcommand\arraystretch{1}

Moreover, the system transition rules are different from the 1-1 MN ones
(Table~\ref{1-1sts}) because the agreement and disagreement conditions
are different. In particular, the test of the agreement condition needs
to count the number of agreeing agents. The 1-n MN ends in:
\begin{itemize}
\item \emph{disagreement} when all the agents involved are in
stubbornness and no agreement is found yet;
\item \emph{agreement} when all the agents or an acceptable part of
them, i.e. $\alpha$ agents where $\alpha$ is the degree of sharing,
agree about a common angle.
\end{itemize}

In all the other cases, the MN continues. The system transition rules
for 1-n MN are in Table~\ref{1-nsts}.\fix{Luca}{regole da spiegare
meglio. Sono anche da riscrivere, magari in landscape, in modo che si
capiscano bene le premesse. Bisogna anche spiegare che non \`e un
sistema infinitario, visto che gli insiemi di agenti sono (dovrebbero
essere?) finiti. Matteo, per favore spiega il limite ad $Ag_2$. {\bf Matteo} Anche qui, faccio la prox settimana} 
The
following example shows a simple 1-n MN ending negatively or
positively depending on the sharing degree decided in front of the
beginning of the MN.

\begin{example}\label{es:5}
Let Alice, Bob and Charles be three agents negotiating the definition of the term ``vehicle''.
Suppose that the initial viewpoint of Alice is
$$\flex{0}_A=\mathit{has3wheels} \wedge \mathit{hasSteeringWheel} \wedge \mathit{hasMotor}$$
and her stubbornness knowledge is
\begin{eqnarray*}
\stub_A&=&(\mathit{has3wheels} \vee \mathit{has4wheels})\wedge \mathit{hasSteeringWheel} \\
&\ & \wedge (\mathit{hasMotor}\vee \mathit{has2bicyclePedals} \vee \mathit{has4bicyclePedals})
\end{eqnarray*}
Suppose that Bob's initial viewpoint is
$$\flex{0}_B=\mathit{has2wheels} \wedge \mathit{hasHandlebar}\wedge \mathit{has2bicyclePedals}$$
 and that his stubbornness knowledge is
\begin{eqnarray*}
\stub_B&=&(\mathit{has2wheels} \vee \mathit{has4wheels})\wedge (\mathit{hasSteeringWheel} \vee \mathit{hasHandlebar})\wedge\\
&\ & (\mathit{hasMotor}\vee \mathit{has2bicyclePedals} \vee \mathit{has4bicyclePedals})
\end{eqnarray*}
and that the initial viewpoint of Charles is
$$\flex{0}_C=\mathit{has4wheels} \wedge \mathit{hasHandlebar} \wedge \mathit{hasTowBar}$$
and his stubbornness knowledge is
\begin{eqnarray*}
\stub_{C}&=&\mathit{has4wheels} \wedge (\mathit{hasHandlebar} \vee \mathit{hasSteeringWheel})\wedge\\
        &\ &(\mathit{hasTowBar} \vee \mathit{hasMotor})
\end{eqnarray*}

Moreover, suppose that the MN is considered positive iff all the agents agree with a common angle, i.e there are $\alpha=3$ agreeing agents.
Alice is the first bidding agent, and thus she is the referee, and she proposes $\flex{0}_A$ to Bob and Charles.
Bob and Charles receive the proposal and evaluate it. Bob tests that they are in essence disagreement because $(\stub_B\vee \trad{B}{A}(\flex{0}_A))\wedge \neg(\flex{0}_B \wedge \trad{B}{A}(\flex{0}_A ))$. Bob chooses the new CAF by a changing theory action ($C$) in
\begin{eqnarray*}
\flex{1}_B &=&(\mathit{has2wheels} \vee \mathit{has4wheels})\wedge \\
 & \ &\mathit{hasSteeringWheel} \wedge (\mathit{has2bicyclePedals}\vee\mathit{hasMotor})
\end{eqnarray*}
Bob uses the ($ED$) rule and sends his CAF to Alice:
$$
\scriptsize
\infer[(ED)]{B:\mathbf{essDis}(A:\flex{0}_A)\cap B:\flex{1}_B}
{A:\flex{0}_A & (\stub_B \vee \trad{B}{A}(\flex{0}_A))\wedge \neg(\flex{0}_B\wedge \trad{B}{A}(\flex{0}_A))}
$$
Charles evaluates $\flex{0}_A$ and  tests that they are in essence disagreement because $(\stub_C\vee\trad{C}{A}(\flex{0}_A))\wedge\neg(\flex{0}_C \wedge \trad{C}{A}(\flex{0}_A))$. Charles chooses the new CAF by a changing theory action ($C$) in
$$
\flex{1}_C = \mathit{has4wheels} \wedge \mathit{hasSteeringWheel} \wedge \mathit{hasTowBar}
$$
Charles uses the ($ED$) rule and sends his CAF to Alice:
$$
\scriptsize
\infer[(ED)]{C:\mathbf{essDis}(A:\flex{0}_A)\cap C:\flex{1}_C}
{A:\flex{0}_A & (\stub_C\vee \trad{C}{A}(\flex{0}_A))\wedge \neg(\flex{0}_C\wedge \trad{C}{A}(\flex{0}_A))}
$$

The system continues the MN by:
$$
\scriptsize
\infer[(NN)]{\mathit{Negotiate}(A,B,C)}{\ast(A,B,C) & A:\flex{0}_A & B:\mathbf{essDis}(A:\flex{0}_A) &  C:\mathbf{essDis}(A:\flex{0}_A) & |\{A\}|\leq\alpha}
$$
Alice receives $\flex{1}_B$ and $\flex{1}_C$, and she has to make a weakening or a changing theory action because Bob and Charles did not say they were in agreement nor in relative disagreement. Alice performs a changing theory action by the rule ($C$) and her CAF is
$$
\flex{1}_A = \mathit{has4wheels} \wedge \mathit{hasSteeringWheel}\wedge (\mathit{hasMotor} \vee \mathit{has2bicyclePedals})
$$
Alice thinks she is in relative disagreement relation with Bob since $(\flex{1}_A\rightarrow\trad{A}{B}(\flex{1}_B))\wedge \neg (\trad{A}{B}(\flex{1}_B)\rightarrow\flex{1}_A)$.

Alice thinks she is in compatibility relation with Charles since $(\flex{1}_A\vee\trad{A}{C}(\flex{1}_C))\wedge(\flex{1}_A\rightarrow\trad{A}{C}(\flex{1}_C))\wedge \neg (\trad{A}{C}(\flex{1}_C)\rightarrow\flex{1}_A)$.

Alice uses the rule ($ED\text{-}RD$) to inform Bob they are in relative disagreement and ($ED\text{-}Co$) to inform Charles that they are in compatibility:
$$
\scriptsize
\infer[(ED\text{-}RD)]{A:\mathbf{relDis}(B:\flex{1}_B)\cap A:\flex{1}_A}
{\begin{array}{c}B:\mathbf{essDis}(A:\flex{0}_A)\cap B:\flex{1}_B\\(\flex{1}_A\rightarrow \trad{A}{B}(\flex{1}_B))\wedge \neg (\trad{A}{B}(\flex{1}_B)\rightarrow\flex{1}_A)\end{array}}
$$
and
$$
\scriptsize
\infer[(ED\text{-}Co)]{A:\mathbf{essDis}(C:\flex{1}_C)\cap A:\flex{1}_A}
{\begin{array}{c}C:\mathbf{comp}(A:\flex{0}_A)\cap C:\stub_C \\ (\flex{1}_A\vee \trad{A}{C}(\flex{1}_C))\wedge\neg(\flex{1}_A\rightarrow \trad{A}{C}(\flex{1}_C))\wedge \neg (\trad{A}{C}(\flex{1}_C)\rightarrow\flex{1}_A)\end{array}}
$$
Bob receives $\flex{1}_A$ and he is in agreement with Alice by ($RD\text{-}Ag$):
$$
\scriptsize
\infer[(RD\text{-}Ag)]{B:\mathbf{agree}(A:\flex{1}_A)\cap B:\trad{B}{A}(\flex{1}_A)}
{A:\mathbf{relDis}(B:\flex{1}_B)\cap A:\flex{1}_A }
$$
Charles receives the proposal and evaluates it. Charles chooses the new CAF by a weakening action ($C$ rule) in
$$
\flex{2}_C = \mathit{has4wheels} \wedge (\mathit{hasHandlebar}\vee\mathit{hasSteeringWheel})\wedge (\mathit{hasMotor} \vee \mathit{hasTowbar})
$$
and $\flex{2}_C = \stub_C$. Charles tests that he is in compatibility relation with Alice since $(\stub_C\vee \trad{C}{A}(\flex{1}_A))\wedge \neg(\stub_C\rightarrow \trad{C}{A}(\flex{1}_A))\wedge \neg( \trad{C}{A}(\flex{1}_A)\rightarrow\stub_C)$ and uses the ($Co\text{-}Co$) rule to send his CAF to Alice:
$$
\scriptsize
\infer[(Co\text{-}Co)]{C:\mathbf{comp}(A:\flex{1}_A)\cap C:\stub_C}
{\begin{array}{c}A:\mathbf{comp}(C:\stub_C)\cap A:\flex{1}_A \\ (\stub_C\vee \trad{C}{A}(\flex{1}_A))\wedge \neg(\stub_C\rightarrow \trad{C}{A}(\flex{1}_A))\wedge \neg( \trad{C}{A}(\flex{1}_A)\rightarrow\stub_C)
\end{array}}
$$
The system continues the MN by:
$$
\scriptsize
\infer[(NN)]{\mathit{Negotiate}(A,B,C)}{\ast(A,B,C) & A:\flex{0}_A & B:\mathbf{agree}(A:\flex{1}_A) & C:\mathbf{comp}(A:\flex{1}_A) & |\{A,B\}|\leq\alpha}
$$
Alice receives $\trad{B}{A}(\flex{1}_A)$ and $\stub_C$ from Bob and Charles respectively and she has to make a weakening or a changing theory action because Charles did not say he was in agreement nor in relative disagreement with her. Alice performs a changing theory action by the rule ($C$) and her CAF is
$$
\flex{2}_A = \mathit{has4wheels}\wedge \mathit{hasSteeringWheel}\wedge (\mathit{has4bicyclePedals}\vee\mathit{hasMotor})
$$
Alice thinks she is in compatibility relation with Bob and Charles since $(\flex{2}_A\vee\trad{A}{B}(\trad{B}{A}(\flex{1}_A)))\wedge \neg(\flex{2}_A\rightarrow \trad{A}{B}(\trad{B}{A}(\flex{1}_A))) \wedge \neg (\trad{A}{B}(\trad{B}{A}(\flex{1}_A)) \rightarrow \flex{2}_A)$ and
$(\flex{2}_A\vee \trad{A}{C}(\stub_C))\wedge\neg(\flex{2}_A\rightarrow \trad{A}{C}(\stub_C))\wedge \neg (\trad{A}{C}(\stub_C)\rightarrow \flex{2}_A)$.

Alice uses the rule ($Ag\text{-}Co$) to inform Bob she is in compatibility with him:
$$
\scriptsize
\infer[(Ag\text{-}Co)]{A:\mathbf{comp}(B:\trad{B}{A}(\flex{1}_A))\cap A:\trad{A}{B}(\trad{B}{A}(\flex{1}_A))}
{\begin{array}{c}B:\mathbf{agree}(A:\flex{1}_A)\cap B:\trad{B}{A}(\flex{1}_A) \\ (\flex{2}_A\vee\trad{A}{B}(\trad{B}{A}(\flex{1}_A)))\wedge \neg(\flex{2}_A\rightarrow \trad{A}{B}(\trad{B}{A}(\flex{1}_A))) \wedge \neg (\trad{A}{B}(\trad{B}{A}(\flex{1}_A)) \rightarrow \flex{2}_A)\end{array}}
$$
and the rule ($Co\text{-}Co$) to inform Charles she is in compatibility with him:
$$
\scriptsize
\infer[(Co\text{-}Co)]{A:\mathbf{comp}(C:\stub_C)\cap A:\flex{2}_A}
{\begin{array}{c}C:\mathbf{comp}(A:\flex{1}_A)\cap C:\stub_C  \\ (\flex{2}_A\vee \trad{A}{C}(\stub_C))\wedge\neg(\flex{2}_A\rightarrow \trad{A}{C}(\stub_C))\wedge \neg (\trad{A}{C}(\stub_C)\rightarrow \flex{2}_A)\end{array}}
$$
Bob receives $\flex{2}_A$ and he makes a weakening action by the rule $W$ and his CAF is:
\begin{eqnarray*}
\flex{3}_B&=&(\mathit{has2wheels} \vee \mathit{has4wheels})\wedge (\mathit{hasSteeringWheel} \vee \mathit{hasHandlebar})\wedge\\
&\ & (\mathit{hasMotor}\vee \mathit{has2bicyclePedals} \vee \mathit{has4bicyclePedals})
\end{eqnarray*}
and $\flex{3}_B=\stub_B$. Bob uses the rule ($Co\text{-}Co$) and sends $\stub_B$ to Alice.
$$
\scriptsize
\infer[(Co\text{-}Co)]{B:\mathbf{comp}(A:\flex{2}_A)\cap B:\stub_B}
{\begin{array}{c}A:\mathbf{comp}(B:\flex{2}_B)\cap A:\flex{2}_A  \\ (\stub_B\vee \trad{B}{A}(\flex{2}_A))\wedge\neg(\stub_B\rightarrow \trad{B}{A}(\flex{2}_A))\wedge \neg (\trad{B}{A}(\flex{2}_A)\rightarrow \stub_B)\end{array}}
$$

Charles receives the proposal and evaluates it. Charles is in stubbornness thus he applies the ($S$) rule and proposes $\stub_C$ to Alice.

Charles uses the ($Co\text{-}Co$) rule and sends his CAF to Alice:
$$
\scriptsize
\infer[(Co\text{-}Co)]{C:\mathbf{comp}(A:\flex{2}_A)\cap C:\stub_C}
{\begin{array}{c}A:\mathbf{comp}(C:\stub_C)\cap A:\flex{2}_A \\ (\stub_C\vee \trad{C}{A}(\flex{2}_A))\wedge \neg(\stub_C\rightarrow \trad{C}{A}(\flex{2}_A))\wedge \neg( \trad{C}{A}(\flex{2}_A)\rightarrow\stub_C)
\end{array}}
$$
The system continues the MN by:
$$
\scriptsize
\infer[(NN)]{\mathit{Negotiate}(A,B,C)}{\ast(A,B,C) & A:\flex{2}_A & B:\mathbf{comp}(A:\flex{2}_A) & \\ C:\mathbf{comp}(A:\flex{2}_A) & |\{A\}|\leq\alpha}
$$
Alice receives $\stub_B$ and $\stub_C$ and she has to make a weakening or a changing theory action because Charles did not say he was in agreement nor in relative disagreement with her. Alice performs a weakening action by the rule ($W$) and her CAF is
\begin{eqnarray*}
\flex{3}_A&=&(\mathit{has3wheels} \vee \mathit{has4wheels})\wedge \mathit{hasSteeringWheel} \\
&\ & \wedge (\mathit{hasMotor}\vee \mathit{has2bicyclePedals} \vee \mathit{has4bicyclePedals})
\end{eqnarray*}
and $\flex{3}_A=\stub_A$.

Alice thinks she is in compatibility relation with Bob since $(\stub_A\vee\trad{A}{B}(\stub_B))\wedge(\stub_A\rightarrow\trad{A}{B}(\stub_B))\wedge \neg (\trad{A}{B}(\stub_B)\rightarrow\stub_A)$.

Moreover, Alice thinks she is in compatibility relation with Charles
since
$(\stub_A\vee\trad{A}{C}(\stub_C))\wedge(\stub_A\rightarrow\trad{A}{C}(\stub_C))\wedge
\neg (\trad{A}{C}(\stub_C)\rightarrow\stub_A)$.

Alice uses the ($Co\text{-}Co$) rule to inform Bob and Charles that they are in compatibility relation:
$$
\scriptsize
\infer[(Co\text{-}Co)]{A:\mathbf{comp}(B:\stub_B)\cap A:\stub_A}
{\begin{array}{c}B:\mathbf{comp}(A:\flex{2}_A)\cap B:\stub_B \\ (\stub_A\vee\trad{A}{B}(\stub_B))\wedge \neg(\stub_A\rightarrow \trad{A}{B}(\stub_B)) \wedge \neg (\trad{A}{B}(\stub_B) \rightarrow \stub_A)\end{array}}
$$
and
$$
\scriptsize
\infer[(Co\text{-}Co)]{A:\mathbf{comp}(C:\stub_C)\cap A:\stub_A}
{\begin{array}{c}C:\mathbf{comp}(A:\flex{2}_A)\cap C:\stub_C \\ (\stub_A\vee\trad{A}{C}(\stub_C))\wedge \neg(\stub_A\rightarrow \trad{A}{C}(\stub_C)) \wedge \neg (\trad{A}{C}(\stub_C) \rightarrow \stub_A)\end{array}}
$$
Bob and Charles receive the proposal and evaluate it. They are in stubbornness thus they apply the ($S$) rule and they propose $\stub_B$ and $\stub_C$ to Alice respectively.

They use the ($Co\text{-}Co$) rule and send their CAFs to Alice:
$$
\scriptsize
\infer[(Co\text{-}Co)]{B:\mathbf{comp}(A:\stub_A)\cap B:\stub_B}
{\begin{array}{c}A:\mathbf{comp}(B:\stub_B)\cap A:\stub_A \\ (\stub_B\vee \trad{B}{A}(\stub_A))\wedge \neg(\stub_B\rightarrow \trad{B}{A}(\stub_A))\wedge \neg( \trad{B}{A}(\stub_A)\rightarrow\stub_B)
\end{array}}
$$
and
$$
\scriptsize
\infer[(Co\text{-}Co)]{C:\mathbf{comp}(A:\stub_A)\cap C:\stub_C}
{\begin{array}{c}A:\mathbf{comp}(C:\stub_C)\cap A:\stub_A \\ (\stub_C\vee \trad{C}{A}(\stub_A))\wedge \neg(\stub_C\rightarrow \trad{C}{A}(\stub_A))\wedge \neg( \trad{C}{A}(\stub_A)\rightarrow\stub_C)
\end{array}}
$$
The system closes the MN by:
$$
\scriptsize
\infer[(DD)]{\mathit{Disagreement}(A,B,C)}{\begin{array}{cccc} \ast(A,B,C) & \ A:\stub_A & B:\mathbf{comp}(A:stub_A) & C:\mathbf{agree}(A:\stub_A) \\ \ & |\{A\}|\leq\alpha & \mbox{ for all }{i \in\agset.i:\psi \mbox{ and }\stub_i\leftrightarrow\psi} & \ \end{array}}
$$
with a negative outcome. The negotiation would be positively ending by rule $(AA)$ whether the sharing degree were $\alpha=2$ when Alice proposes $A:\flex{1}_A$ to Bob and Charles.
\hfill$\Box$
\end{example}

As for the rules for bilateral MN, we can show the consistency and adequateness of the rules for $1\text{-}n$ MN.\fix{Luca}{Uniformare: o $1\text{-}n$ MN o multiparty MN. {\bf Matteo} Fatto, preferisco 1-n}
\begin{theorem}
\label{1-n:consistent}
$\MND$ for 1-n MN is consistent.
\end{theorem}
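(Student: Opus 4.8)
The plan is to adapt the argument used for Theorem~\ref{consistent}, exploiting the remark of Section~\ref{sec:process} that a $1\text{-}n$ MN unfolds as a family of bilateral MNs, one between the auctioneer $a$ and each of the other negotiating agents, all sharing the single proposal the auctioneer broadcasts at each beat. As in the bilateral case, I would prove that whenever the system derives an outcome formula $\xi$ (necessarily through rule $(AA)$ of Table~\ref{1-nsts}), then $\xi$ is a \emph{common angle} for a set of agents of size at least the degree of sharing $\alpha$, i.e.\ for each such agent $x$ there is a stage $k$ with $\flex{k}_x \rightarrow \trad{x}{a}(\xi)$ and $\trad{x}{a}(\xi) \rightarrow \stub_x$; and, dually, that in every reachable state the announced mutual relations are the ones the agents actually stand in. Consistency is then the statement that the referee never certifies an agreement that does not hold.

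The first ingredient is an invariant on the agents' current angle formulas: every CAF $\flex{k}_i$ ever asserted satisfies $\flex{k}_i \rightarrow \stub_i$. This holds at the start by the standing assumption of Section~\ref{sec:language} that each $\flex{0}_i$ has a generalization in $\LS{i}$, and it is preserved by the proposal rules $(W)$, $(C)$, $(S)$ of Table~\ref{proposal}, since $(W)$ only weakens below $\stub_i$, $(C)$ moves to an incomparable angle still bounded by $\stub_i$, and $(S)$ fixes the angle at $\stub_i$. The second ingredient is soundness of the announced relations: I would check, rule by rule, that the side premises of each rule in Tables~\ref{tab:net1}, \ref{tab:1-1net2} and of the new auctioneer rules of Table~\ref{tab:1-nnet2} coincide with the semantic definitions of $\mathbf{absDis}$, $\mathbf{essDis}$, $\mathbf{relDis}$, $\mathbf{comp}$, $\mathbf{agree}$ given in Definition~\ref{agLExt}. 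The delicate point is the new violation rules $(RD\text{-}AD)$, $(RD\text{-}ED)$, $(RD\text{-}Co)$, $(RD\text{-}RD)$ and $(Ag\text{-}AD)$, \dots, $(Ag\text{-}Ag)$: these enlarge the relation-transitions available to the referee (from relative disagreement and from agreement one may now move to any relation), but each of them still attaches a fresh CAF $\flex{k+1}_i$ produced by $(W)/(C)/(S)$, so the invariant above is untouched; and in particular $(Ag\text{-}Ag)$, the only one of them that emits an $\mathbf{agree}$, carries the precondition $\flex{k+1}_i \rightarrow \trad{i}{j}(\psi)$, which is exactly the agreement condition, so no spurious agreement can be announced. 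The same inspection shows that the only rules that can ever derive $i:\mathbf{agree}(a:\varphi)$, namely $(Ag)$, $(AD\text{-}Ag)$, $(ED\text{-}Ag)$, $(Co\text{-}Ag)$, $(RD\text{-}Ag)$ and $(Ag\text{-}Ag)$, all have the agreement inequality among their premises (for $(RD\text{-}Ag)$ it follows from the relative-disagreement premise that the received formula is a restriction of the receiver's CAF), hence every $\mathbf{agree}$ label is sound.

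It then remains to examine the system transition rules of Table~\ref{1-nsts}. Rule $(NN)$ produces only the internal state $\mathit{Negotiate}$ and no outcome; rule $(DD)$ produces only $\mathit{Disagreement}$, and fires only when every participating agent is in stubbornness ($\stub_i \leftrightarrow \psi$) and strictly fewer than $\alpha$ agents agree, so no common angle is certified; the call-away case is analogous, it simply halts in $\mathit{Disagreement}$ without emitting a formula. Rule $(AA)$ is the only one that outputs a formula $\varphi$, and it requires the set $\agset_1$ of agents having asserted $\mathbf{agree}(a:\varphi)$ to satisfy $|\agset_1| \ge \alpha$; by the soundness of the $\mathbf{agree}$ label established above, each $x \in \agset_1$ genuinely satisfies $\flex{k}_x \rightarrow \trad{x}{a}(\varphi)$, and by the invariant together with the impossibility of $\stub_x \rightarrow \trad{x}{a}(\varphi)$ (that would be call-away, which would already have terminated the MN) we also obtain $\trad{x}{a}(\varphi) \rightarrow \stub_x$. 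Hence $\varphi$ is a common angle for $\agset_1$ with $|\agset_1| \ge \alpha$, which is exactly what consistency of $1\text{-}n$ MN asserts.

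I expect the main obstacle to be the multiparty bookkeeping rather than any single rule: one must argue that the $\mathbf{agree}$ announcements pooled by $(AA)$ are \emph{simultaneously} sound for one and the same broadcast proposal $\varphi$, which forces making precise (i) that an agent's $\mathbf{agree}(a:\varphi)$ assertion refers to the CAF that agent holds at the beat at which $(AA)$ is applied, and (ii) that the auctioneer's enlarged rule set of Table~\ref{tab:1-nnet2} cannot, by chaining violations, reach a state in which $(AA)$ fires on a $\varphi$ that some counted agent does not in fact accept. Both points reduce, as sketched, to the per-agent invariant plus the rule-by-rule soundness check, but the quantified form of the transition rules means the argument has to be phrased uniformly over $\agset_1$ and over all beats, rather than as the finite case split that sufficed in Theorem~\ref{consistent}.
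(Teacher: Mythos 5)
Your proposal is correct and follows essentially the same route as the paper's own argument: show that any formula certified as an outcome must be acceptable — a generalization of the flexible knowledge and a restriction of the stubborn knowledge — of at least $\alpha$ agents, by appealing to the construction of the proposal rules $(W)$, $(C)$, $(S)$, the bidding rules including the auctioneer's extended set of Table~\ref{tab:1-nnet2}, and the system transition rules $(DD)$, $(NN)$, $(AA)$. The paper's proof is in fact sketchier (it argues by contradiction that an agent finding the formula unacceptable would have called away, ``impossible by construction of the rules''), so your explicit invariant $\flex{k}_i \rightarrow \stub_i$ and the rule-by-rule soundness check of the $\mathbf{agree}$ assertions are a faithful filling-in of that sketch rather than a different method.
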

\begin{proof}
Consider $n$ agents represented in the $\MND$ system with sets $\LS{1},\dots,\LS{n}$ of stubbornness formulas and sets $\LF{1},\dots,\LF{n}$ of flexible formulas.
The proof of the consistency of $\MND$ for 1-n MN is similar to the proof of Theorem~\ref{consistent}. We show that if a  $\sign_i$ formula
$\xi$ is inferred using the $\MND$
$1\text{-}n$ rules, or, in other terms, is deduced as a theorem in the system, then $\xi$ represents a proposal that is acceptable by all or at least $\alpha$ agents. In other words, we aim at proving that when the rules yield
$\xi$ then $\xi$ generalizes at least $\alpha$ languages among $\LF{1},\dots,\LF{n}$ and is generalized by at least $\alpha$ languages among  $\LS{1},\dots,\LS{n}$.
Let us now consider a formula $\xi$ that is acceptable by at least $\alpha$ agents, and let us consider the
rules that produce transitions in the system. In particular, if $\xi$ is
inferred by means of one of the rules $(AD)$, $(ED)$, $(I)$, $(Co)$,
$(Ag)$ for the second proposing agent, or by means of one of the rules
given in Fig.~\ref{tab:1-1net2} or in Fig.~\ref{tab:1-nnet2} for the following proposing agent and the auctioneer, then
the possible results of the step described above are given by the
application of the system transition rules. Evidently, if $\xi$ is
inferred, then the rule $(DD)$ does not apply. If $(NN)$ applies, and one
more inference is performed, then the rules $(W)$, $(C)$, $(S)$ allow us to infer a different formula.
Suppose now, by contradiction, that the new formula $\xi$ is
not acceptable by more than $n-\alpha$ agents (in the sense that either it is not a generalization of their set of flexible formulas or it is not generalized by their sets of stubbornness formulas). As a consequence, these
agents have called themselves away, as we stated above. This, however, is
impossible, by construction of the rules for the second and following
proposals, and for the auctioneer. Conversely, if the transition rule $(DD)$ applies and,
therefore, more than $n-\alpha$ agents have incompatible viewpoints, then $\xi$ is not
inferred through the system, because it is not a generalization of the flexible sets of formulas and generalizes by the stubbornness sets of formulas of at least $\alpha$ agents. Clearly, by means of the full set of rules,
it is not possible to do so when the agents have consistent
viewpoints.
\hfill $\Box$
\end{proof}

We say that a deductive system is adequate to represent a MN between $n$ agents when it infers an outcome iff an agreement is reachable among the agents otherwise it does not produce any result.
\begin{definition}\label{def:adequate:1-n}
A deductive system $\mathcal{R}$ is \emph{adequate} to represent the MN process among $n$ agents when
$$
\mathcal{R}\mbox{ infers }
\left\{
\begin{array}{lr}
\varphi & \mbox{ iff there exists }\agset'\subseteq\agset\mbox{ s.t. }\mid\agset'\mid = m\geq\alpha \mbox{ and for all }i\in\agset'\\
\ & \mbox{there exists } k\in\mathds{N}\mbox{ s.t. }\agstruct{i}{k} \models(\flex{k}_i\rightarrow \varphi) \wedge (\varphi\rightarrow\stub_i)  \\
\perp & \mbox{ otherwise}
\end{array}
\right.
$$
where $\agset$ is the set of the agents with $\mid\agset\mid=n$, $\varphi\in\bigcup_{i\in\agset}\L_i$ and $\alpha$ is the minimum number of agreeing agents to consider positive the outcome of the MN.
\end{definition}

\begin{theorem}\label{1-n:adequateness}
$\MND$ is adequate to represent the MN of $n$ agents.
\end{theorem}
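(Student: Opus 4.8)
The plan is to mirror the proof of the adequateness of $\MND$ for bilateral MN, lifting it to $n$ agents by using the observation already made at the beginning of Section~\ref{subsubsec:1-nMNRules}: a $1$-$n$ MN develops as $n-1$ coupled bilateral MN threads, one between the auctioneer $a$ and each of the remaining agents, synchronised by the system transition rules $(DD)$, $(AA)$, $(NN)$ of Table~\ref{1-nsts}. I would first fix $\agset=\{a,i_1,\dots,i_n\}$ with $a$ the auctioneer and let $\alpha$ be the degree of sharing. For each agent $x$, the pair $(\stub_x,\flex{}_x)$ together with $(\stub_a,\flex{}_a)$, read through the translation $\trad{a}{x}$, realises one of the EGG/YOLK configurations of \citep{lehmann94}; by the completeness of that classification (invoked just before Theorem~\ref{consistent}) there is no further case to consider.

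For the ``if'' direction, assume there is $\agset'\subseteq\agset$ with $m=\mid\agset'\mid\geq\alpha$ and a formula $\varphi$ such that for every $i\in\agset'$ there is a $k$ with $\agstruct{i}{k}\models(\flex{k}_i\rightarrow\varphi)\wedge(\varphi\rightarrow\stub_i)$. Then $\varphi$ is a common angle for every agent in $\agset'$, so on each of those $m$ threads the pair $(i,a)$ is not in call-away and not in absolute disagreement (configuration~1 is excluded for them). Exactly as in the bilateral adequateness argument, along each such thread the auctioneer relaxes its CAF $\flex{0}_a,\flex{1}_a,\dots$ by the rules $(W)$, $(C)$, $(S)$ of Table~\ref{proposal}, the relation between $i$ and $a$ moves through the disagreement/compatibility relations by the rules of Tables~\ref{tab:net1}, \ref{tab:1-1net2} and \ref{tab:1-nnet2}, and eventually $i$ derives $\mathbf{agree}(a:\cdot)$ because a common angle exists on that thread. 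Since this happens for all $m\geq\alpha$ agents, at the first stage at which the agreeing set $\agset_1$ satisfies $\mid\agset_1\mid\geq\alpha$ the transition rule $(AA)$ fires and $\MND$ infers an outcome; this outcome is precisely a $\varphi$ of the form required by Definition~\ref{def:adequate:1-n}.

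For the ``only if'' direction I argue by contraposition. Suppose no such $\agset'$ exists: for every $\agset'$ of size at least $\alpha$ some member admits no common angle with the others, i.e.\ strictly more than $n-\alpha$ agents are, in their thread with $a$, in configuration~1 (absolute disagreement) or otherwise never produce $\mathbf{agree}$. On those threads only the rules $(AD)$, $(AD\text{-}AD)$, $(ED\text{-}AD)$, $\dots$ apply and no $\mathbf{agree}$ is ever derived; since each agent's CAF can be weakened only finitely often before it equals $\stub_i$, after which $(S)$ freezes it, the process terminates with every agent stubborn and $\mid\agset_1\mid<\alpha$, so $(DD)$ fires and $\MND$ infers nothing. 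By the consistency of $\MND$ for $1$-$n$ MN (Theorem~\ref{1-n:consistent}) no formula $\xi$ can be derived in this situation either. Combining the two directions yields that $\MND$ infers $\varphi$ exactly in the case described in Definition~\ref{def:adequate:1-n} and $\perp$ otherwise, i.e.\ $\MND$ is adequate.

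The main obstacle is the coordination across threads. Unlike bilateral MN, the auctioneer issues a single proposal to everybody, and the violation rules of Table~\ref{tab:1-nnet2} permit a thread to jump from relative disagreement, or even from agreement, back into absolute disagreement. One therefore has to show that such back-jumps cannot obstruct convergence when a common angle exists for $\alpha$ agents: that the auctioneer can always pick its next CAF, by $(W)$ or $(C)$, so as to make progress on each ``good'' thread without being forced into a violation, and that the finiteness of each weakening hierarchy (whose top element is the stubborn formula) guarantees termination with the threshold $\alpha$ reached. Making this termination-with-threshold claim precise --- rather than the EGG/YOLK bookkeeping, which is routine given the completeness result --- is where the real work of the proof lies.
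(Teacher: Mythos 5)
Your proposal follows essentially the same route as the paper's own proof: both directions are handled by appealing to the completeness of the EGG/YOLK classification, the negative direction is discharged via Theorem~\ref{1-n:consistent}, and the $1$-$n$ process is read as $n-1$ bilateral threads coordinated by the rules of Table~\ref{1-nsts}, exactly as the paper itself suggests when it reduces $1$-$n$ MN to instances of bilateral MN. Where you differ is in level of detail: the paper's proof is a few sentences that simply assert ``if $\MND$ infers $\xi$ then $\xi$ is a common angle for at least $\alpha$ agents'' and ``if more than $n-\alpha$ agents are pairwise in configuration~1 then nothing is inferred,'' without ever arguing that the system actually \emph{does} reach an $(AA)$ transition whenever a common angle exists for $\alpha$ agents. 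The obstacle you flag at the end --- that the violation rules of Table~\ref{tab:1-nnet2} allow threads to fall back from relative disagreement or agreement into worse relations, so one must show the auctioneer can still drive at least $\alpha$ threads to $\mathbf{agree}$ before everyone becomes stubborn --- is a genuine issue, but it is one the paper's proof silently passes over rather than resolves; your termination sketch via the finiteness of each weakening hierarchy is already more explicit than anything in the published argument. So your attempt is faithful to the paper's approach and, on the convergence point, somewhat more honest about what remains to be made precise.
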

\begin{proof}
We consider $n$ agents of which $m$ agents have 
consistent viewpoints where $n\geq m \geq \alpha$, namely such that there exists a possible common angle. Their
stubbornness sets and their flexible sets of formulas are pairwise in one of the
EGG/YOLK configurations except number 1. Suppose now that the $\MND$ system
infers a $\sign_i$ formula $\xi$. Then, $\xi$ is a common angle for at least $\alpha$ agents.
Conversely, suppose that $\MND$ does not infer any $\sign_i$ formula. Then, some of the $m$ agents are in call-away situation with the received proposal.
Suppose now that there are more than $n\text{-}\alpha$ agents having pairwise inconsistent viewpoints (configuration 1). The relation established is absolute disagreement. The
result is that no formula can be inferred through the system, which is
consistent by Theorem~\ref{1-n:consistent}. Hence, overall, the system is
adequate.
\hfill $\Box$
\end{proof}

\subsection{MN Process Development}\label{subsec:MNdevelop}

In this section, we show how the MN process develops in terms of the
changing of the relations between the EGG/YOLKs of the agents. We model
the multiparty MN among $n$ agents as an English Auction Game in which
the auctioneer negotiates simultaneously with $n-1$ agents so that it
can be considered as $n-1$ bilateral MNs. For this reason we show here
how the relation of the EGG/YOLKs changes during a MN process only for
bilateral MN. As said above, the stubborn knowledge of the agents never
changes during the negotiation but only the flexible part may differ
from one step to the next one of a MN. The evolution of the relations
between the EGG/YOLKs is different when the stubborn knowledge of the
agents are inconsistent, or in a generalization relation, or just
consistent or equivalent. We show below the MN development in all the
cases.

Agents in MN make offers $\flex{}$\fix{Luca}{``make bid $\flex{}$'' suona un po' strano} such that for each agent $i$:
$$
\flex{}_i = \stub_i \wedge \varphi
$$
where $\stub_i$ is the stubbornness knowledge formula and $\varphi$ is the flexible part of $\flex{}_i$. Whenever an agent receive the opponent proposal, she does not know which is its flexible part and which the stubbornness one. She only knows which is the relation of the received proposal with respect to her own stubbornness and flexible knowledge.

Only the supervisor system knows the stubbornness knowledge of all the
agents. As said above, the stubbornness knowledge is unquestionable and
it never changes during the negotiation. Therefore, the MN process can
be represented as a path of a graph in which nodes are the EGG/YOLK
configurations and edges are the result of the usage of a bidding rule
(Table~\ref{tab:net1} and Table~\ref{tab:1-1net2}).

Suppose that the agents have inconsistent stubbornness knowledge;
whatever the deductive rules they use, they remain related as in
configuration number 1 and the knowledge\fix{Luca}{Inglese. A cosa si riferisce
``it''?} is described by a logical formula involving the stubbornness
formulas of the agents (see Table~\ref{tab:confDR}).

\begin{table}[t]\centering
\begin{tabular}{l l}
\hline
\textbf{Configuration} & \textbf{Formula} \\
\hline
\begin{minipage}{3cm}
\ \\
\scalebox{0.8}{
\normalsize{1} \\
\begin{tikzpicture}[level distance=4mm]
\draw (0,0) ellipse (7mm and 4mm);
\draw[fill = gray!60] (0,0) ellipse (3mm and 2mm);
\draw[densely dashed] (1.5,0) ellipse (7mm and 4mm);
\draw[densely dashed, fill = gray!20] (1.5,0) ellipse (3mm and 2mm);
\end{tikzpicture}
}
\\
\end{minipage} &
\begin{minipage}{8cm}
$\neg(\stub_i \wedge \stub_j)$
\end{minipage}\\
\end{tabular}
\caption{Configurations for inconsistent stubbornness knowledge. Agent $i$ is identified by plain lines and agent $j$ by dashed lines.}\label{tab:confDR}
\end{table}

In the following subsections, we describe all the MN situations with
respect to the relations of the stubbornness knowledge of the agents.

\subsubsection{Equivalent Stubbornness Knowledge}
Suppose that the agents have equivalent stubbornness sets. Then
$$
\stub_i \leftrightarrow \stub_j
$$

The relations between two sets are defined by means of the intersection of their interiors and their exteriors. Given two sets, there are, in theory, two relations between their interiors (either the set intersection is empty or not), two between their exteriors and two relations between the interior of one of them and the exterior of the other one. The possible configurations are then eight, but some of them are absurd (for instance when the intersection of the interiors in empty, the intersection of the exteriors cannot be empty, and vice versa. This presentation issues have been studied deeply in the past and summarised in the spatial reasoning framework known as the Region Connection Calculus (RCC-5). This calculus provides five relations for the cases in which the sets coincide (EQ), two order relations of proper part (PP and PP$^{-1}$), the relation of proper overlapping (PO) and the relation of dijointness (DR).
The equivalence relation between stubbornness theories  relations (RCC5) by
EQ.\fix{Luca}{Improvvisamente, si parla di RCC senza alcuna
introduzione o spiegazione. Possiamo togliere RCC ovunque? {\bf Matteo} Ho provato, ma non si capivano le figure. Ho preferito questo piccolo lavoretto di inserzione.}

In Table~\ref{confEQ}, we show the possible yolk configurations and we
give a statement representing the configuration, i.e. the negotiation
state.
\begin{center}

\begin{longtable}{ll}
\hline \textbf{Configuration} & \textbf{Formula} \\ \hline
\endfirsthead

\multicolumn{2}{c}%
{{\bfseries \tablename\ \thetable{} -- continued from previous page}} \\
\hline \textbf{Configuration} & \textbf{Formula} \\ \hline
\endhead

\hline \multicolumn{2}{r}{{Continued on next page}} \\ \hline
\endfoot

\endlastfoot

\hline
\begin{minipage}{3cm}
\ \\
\normalsize{42a} \\
\scalebox{0.6}{
\begin{tikzpicture}[level distance=4mm]
\draw[densely dashed] (0.5,0) ellipse (12mm and 7mm);
\draw (0.5,0) ellipse (11.8mm and 6.8mm);
\draw[densely dashed, fill = gray!20] (0.1,0) ellipse (3mm and 2mm);
\draw[fill = gray!60] (0.9,0) ellipse (3mm and 2mm);
\end{tikzpicture}
}
\\
\end{minipage} &
\begin{minipage}{8cm}
$(\stub_i \leftrightarrow \stub_j) \wedge \neg(\flex{}_i \wedge \flex{}_j)\wedge(\flex{}_i \rightarrow \stub_j)\wedge (\flex{}_j \rightarrow \stub_i)$
\end{minipage}\\
\begin{minipage}{3cm}
\ \\
\normalsize{42b}\\
\scalebox{0.6}{
\begin{tikzpicture}[level distance=4mm]
\draw[densely dashed] (0.5,0) ellipse (12mm and 7mm);
\draw (0.5,0) ellipse (11.8mm and 6.8mm);
\draw[densely dashed, fill = gray!20] (0.3,0) ellipse (3mm and 2mm);
\draw[fill = gray!60] (0.7,0) ellipse (3mm and 2mm);
\draw[densely dashed] (0.3,0) ellipse (3mm and 2mm);
\end{tikzpicture}
}
\\
\end{minipage} &
\begin{minipage}{8cm}
$(\stub_i \leftrightarrow \stub_j) \wedge (\flex{}_i \vee \flex{}_j)\wedge\neg(\flex{}_i \rightarrow \flex{}_j)\wedge \neg(\flex{}_j \rightarrow \flex{}_i)\wedge(\flex{}_i \rightarrow \stub_j)\wedge (\flex{}_j \rightarrow \stub_i)$
\end{minipage}\\
\begin{minipage}{3cm}
\ \\
\normalsize{42c} \\
\scalebox{0.6}{
\begin{tikzpicture}[level distance=4mm]
\draw[densely dashed] (0.5,0) ellipse (12mm and 7mm);
\draw (0.5,0) ellipse (11.8mm and 6.8mm);
\draw[densely dashed, fill = gray!20] (0.5,0) ellipse (7mm and 4mm);
\draw[fill = gray!60] (0.5,0) ellipse (3mm and 2mm);
\end{tikzpicture}
}
\\
\end{minipage} &
\begin{minipage}{8cm}
$(\stub_i \leftrightarrow \stub_j) \wedge (\flex{}_i \rightarrow \flex{}_j)\wedge(\flex{}_i \rightarrow \stub_j)\wedge (\flex{}_j \rightarrow \stub_i)$
\end{minipage}\\
\begin{minipage}{3cm}
\ \\
\normalsize{42d}\\
\scalebox{0.6}{
\begin{tikzpicture}[level distance=4mm]
\draw[densely dashed] (0.5,0) ellipse (12mm and 7mm);
\draw (0.5,0) ellipse (11.8mm and 6.8mm);
\draw[fill = gray!60] (0.5,0) ellipse (7mm and 4mm);
\draw[densely dashed, fill = gray!20]  (0.5,0) ellipse (3mm and 2mm);
\end{tikzpicture}
}
\\
\end{minipage} &
\begin{minipage}{8cm}
$(\stub_i \leftrightarrow \stub_j) \wedge (\flex{}_j \rightarrow \flex{}_i)\wedge(\flex{}_i \rightarrow \stub_j)\wedge (\flex{}_j \rightarrow \stub_i)$
\end{minipage}\\
\begin{minipage}{3cm}
\ \\
\normalsize{42e} \\
\scalebox{0.6}{
\begin{tikzpicture}[level distance=4mm]
\draw[densely dashed] (0.5,0) ellipse (12mm and 7mm);
\draw (0.5,0) ellipse (11.8mm and 6.8mm);
\draw[densely dashed, fill = gray!20] (0.5,0) ellipse (7mm and 4mm);
\draw[fill = gray!60] (0.5,0) ellipse (6.8mm and 3.8mm);
\end{tikzpicture}
}
\end{minipage} &
\begin{minipage}{8cm}
$(\stub_i \leftrightarrow \stub_j) \wedge (\flex{}_i \leftrightarrow \flex{}_j)\wedge(\flex{}_i \rightarrow \stub_j)\wedge (\flex{}_j \rightarrow \stub_i)$
\end{minipage}\\
\caption{Configurations for equivalent stubbornness sets. Agent $i$ is identified by plain lines and agent $j$ by dashed lines.}\label{confEQ}
\end{longtable}
\end{center}

Figure~\ref{grafoEQ} depicts the graph of the possible negotiation
relations of the agents during the negotiation. The nodes are the
EGG/YOLK configurations and the edges are coloured by the agent who
makes the next bid. The gray node identifies the positive outcome of the
negotiation.

\begin{figure}[t]\centering
\scalebox{0.8}{
\begin{tikzpicture}[level distance=4mm]\centering
\draw (2,0) node[draw, shape = circle, fill = black!10] (e) {42e};
\draw (4,2) node[draw, shape = circle] (d) {42d};
\draw (0,2) node[draw, shape = circle] (c) {42c};
\draw (2,3.5) node[draw, shape = circle] (b) {42b};
\draw (2,6) node[draw, shape = circle] (a) {42a};
\path[->,thick] (a) edge [loop above] node {} (a);
\path[->,thick,densely dashed] (a) edge [loop left] node {} (a);
\path[->,thick] (b) edge [loop left] node {} (b);
\path[->,thick,densely dashed] (b) edge [loop right] node {} (b);
\path[->,thick] (a) edge [bend right = 7] node {} (b);
\path[->,thick,densely dashed] (a) edge [bend left = 7] node {} (b);
\path[->,thick] (a) edge [bend right = 5] node {} (c);
\path[->,thick,densely dashed] (a) edge [bend left = 5] node {} (c);
\path[->,thick] (a) edge [bend right = 5] node {} (d);
\path[->,thick,densely dashed] (a) edge [bend left = 5] node {} (d);
\path[->,thick] (a) edge [bend right = 15] node {} (e);
\path[->,thick,densely dashed](a) edge [bend left = 15] node {} (e);
\path[->,thick] (b) edge [bend right = 5] node {} (c);
\path[->,thick,densely dashed] (b) edge [bend left = 5] node {} (c);
\path[->,thick] (b) edge [bend right = 5] node {} (d);
\path[->,thick,densely dashed] (b) edge [bend left = 5] node {} (d);
\path[->,thick] (d) edge (e);
\path[->,thick,densely dashed] (c) edge (e);
\path[->,thick] (b) edge [bend right = 5] node {} (e);
\path[->,thick,densely dashed] (b) edge [bend left = 5] node {} (e);
\path[->,very thick]  (b) edge (a);
\path[->, thick,densely dashed] (b) edge (a);
\draw[snake=coil,segment aspect=0, black!60] (b) -- (a);
\end{tikzpicture}
}
\caption[Transition graph for equivalent stubbornness knowledge]{Transition graph for equivalent stubbornness knowledge. Nodes are coloured: the gray node is the configuration of the positive outcome of the negotiation process.}\label{grafoEQ}
\end{figure}
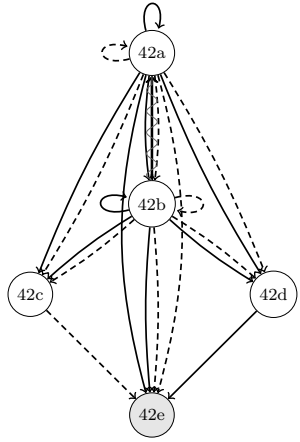

All the rules the agents use when their stubbornness knowledge are
equivalent are \emph{legitimate}.\fix{Luca}{La nozione ``legitimate'' non \`e
definita. Si pu\`o togliere o rimpiazzare con qualcosa d'altro? Oppure
definire la nozione? Lo stesso per ``violation'' dopo. Da quello che
capisco, la terminologia \`e diversa (ad esempio ``call-away'')} A rule is legitimate when it can be used in a specific configuration. In the
following example, we show how deductive rules of MND are used and their
effects in the EGG/YOLK configurations when the stubbornness knowledge
of agents are equivalent.

\begin{example}\label{es:EQS1S2}
Suppose Alice and Bob are related as in configuration 42a. Alice, $A$, is the first bidding agent and she proposes $\flex{0}_{A}$ to Bob, $B$. Bob receives the proposal and evaluates it.
Bob tests that they are in essence disagreement and generalizes his initial viewpoint $\flex{0}_B$ by:
$$
\infer[(W)]{\flex{1}_B}
{
\flex{0}_B\rightarrow\flex{1}_B &
\neg( \stub_B \leftrightarrow \flex{0}_B)
}
$$
and he checks the provisional negotiation situation by:
$$
\infer[(ED)]{B:\mathbf{essDis}(A:\flex{0}_A)\cap B:\flex{1}_B}
{A:\flex{0}_A & \neg(\flex{0}_B\wedge \trad{B}{A}(\flex{0}_A))\wedge (\stub_B\vee \trad{B}{A}(\flex{0}_A)}
$$
Bob says to Alice that they are in essence disagreement and makes a proposal $\flex{1}_B$.

The system continues the MN by:
$$
\infer[(N)]{\mathit{Negotiate}(A,B)}{\ast(A,B) & A:\flex{0}_A & B:\mathbf{essDis}(A:\flex{0}_A) & B:\flex{1}_B}
$$

Alice receives $\flex{1}_B$ and she makes a weakening or a changing action because Bob said they are not in agreement nor in relative disagreement.
Suppose Alice generalizes her CAF by:
$$
\infer[(W)]{\flex{1}_A}
{
\flex{0}_A\rightarrow\flex{1}_A &
\neg( \stub_A \leftrightarrow \flex{0}_A)
}
$$
Alice tests the negotiation relation by:
$$
\infer[(ED\text{-}Co)]{A:\mathbf{comp}(B:\flex{1}_B)\cap A:\flex{1}_A}
{\begin{array}{c}B:\mathbf{essDis}(A:\flex{0}_A)\cap B:\flex{1}_B \\ (\flex{1}_A\vee \trad{A}{B}(\flex{1}_B))\wedge\neg(\flex{1}_A\rightarrow\trad{A}{B}(\flex{1}_B))\wedge\neg(\trad{A}{B}(\flex{1}_B)\rightarrow\flex{1}_A)\end{array}}
$$
Alice says to Bob that they are in compatibility and makes a proposal $\flex{1}_A$.

The system continues the MN by:
$$
\infer[(N)]{\mathit{Negotiate}(B,A)}{\ast(B,A) & B:\flex{1}_B & A:\mathbf{comp}(B:\flex{1}_B) & A:\flex{1}_A}
$$

Bob receives $\flex{1}_A$ and he makes a weakening or a changing action because Alice said they are not in agreement nor in relative disagreement.
Suppose Bob changes his CAF by:
$$
\infer[(C)]{\flex{2}_B}{\flex{1}_B & \neg( \stub_B \leftrightarrow \flex{1}_B) & \neg(\flex{1}_B\rightarrow\flex{2}_B) & \neg(\flex{2}_B \rightarrow \flex{1}_B)}
$$
Bob tests the negotiation relation by:
$$
\infer[(Co\text{-}Co)]{B:\mathbf{comp}(A:\flex{1}_A)\cap B:\flex{2}_B}
{\begin{array}{c}A:\mathbf{comp}(B:\flex{1}_B)\cap A:\flex{1}_A \\ (\flex{2}_A\vee \trad{B}{A}(\flex{1}_A))\wedge\neg(\flex{2}_A\rightarrow \trad{B}{A}(\flex{1}_A))\wedge\neg(\flex{2}_A\leftarrow \trad{B}{A}(\flex{1}_A))\end{array}}
$$
Bob says to Alice that they are in compatibility and makes a proposal $\flex{2}_B$.

The system continues the MN by:
$$
\infer[(N)]{\mathit{Negotiate}(A,B)}{\ast(A,B) & A:\flex{1}_A & B:\mathbf{comp}(A:\flex{1}_A) & B:\flex{2}_B}
$$

Alice receives $\flex{2}_B$ and she makes a weakening or a changing action because Bob said they are not in agreement nor in relative disagreement.
Suppose Alice changes her CAF by:
$$
\infer[(C)]{\flex{3}_A}{\flex{2}_A & \neg( \stub_A \leftrightarrow \flex{2}_A) & \neg(\flex{2}_A\rightarrow\flex{3}_A) & \neg(\flex{3}_A\rightarrow\flex{2}_A)}
$$
Alice tests the negotiation relation by:
$$
\infer[(Co\text{-}Ag)]{A:\mathbf{agree}(B:\flex{2}_A)\cap A:\trad{A}{B}(\flex{2}_B)}
{B:\mathbf{comp}(A:\flex{2}_A)\cap B:\flex{2}_B & (\flex{3}_A \rightarrow \trad{A}{B}(\flex{3}_B))}
$$
Alice says to Bob that they are in agreement and that they have a common angle that is $\flex{2}_B$.

The system closes the MN by:
$$
\infer[(A)]{\mathit{Agreement}(B,A)}{\ast(B,A) & B:\flex{2}_B & A:\mathbf{agree}(B:\flex{2}_B)}
$$
with a positive outcome, $\flex{2}_B$.

\begin{figure}
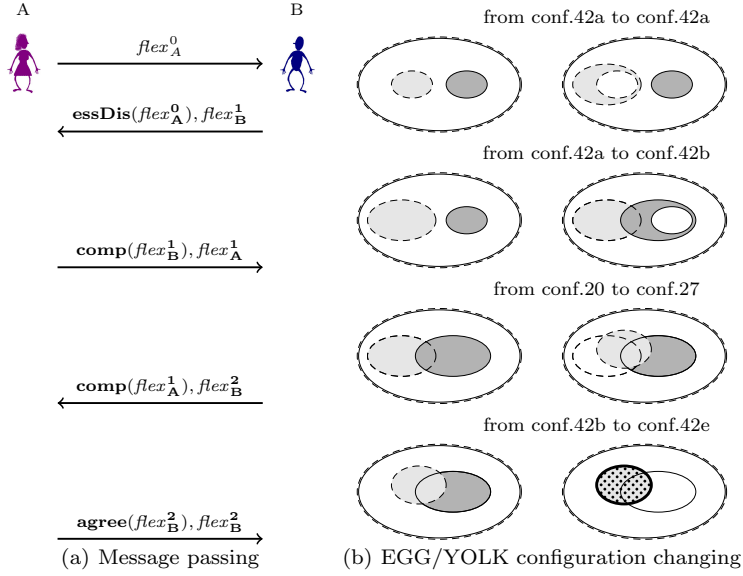
\centering
\subfigure[Message passing]{
\label{es:EQ-mp}
\scalebox{0.9}{
\begin{tikzpicture}[level distance=4mm]\scriptsize
\draw (0,22.8) node (alice) {A};
\draw (0,22) node (a) {\includegraphics[width=0.5cm]{alice.jpg}};
\draw (4,22.8) node (bob) {B};
\draw (4,22) node (b) {\includegraphics[width=0.5cm]{bob.jpg}};
\draw[->,thick] (0.5,22) .. controls +(right:0.5cm) and +(left:0.5cm).. node[above,sloped] {$\flex{0}_A$} (3.5,22);
\draw[<-,thick] (0.5,21) .. controls +(right:0.5cm) and +(left:0.5cm).. node[above,sloped] {$\mathbf{essDis(\flex{0}_A),\flex{1}_B}$} (3.5,21);
\draw[->,thick] (0.5,19) .. controls +(right:0.5cm) and +(left:0.5cm).. node[above,sloped] {$\mathbf{comp(\flex{1}_B),\flex{1}_A}$} (3.5,19);
\draw[<-,thick] (0.5,17) .. controls +(right:0.5cm) and +(left:0.5cm).. node[above,sloped] {$\mathbf{comp(\flex{1}_A),\flex{2}_B}$} (3.5,17);
\draw[->,thick] (0.5,15) .. controls +(right:0.5cm) and +(left:0.5cm).. node[above,sloped] {$\mathbf{agree(\flex{2}_B),\flex{2}_B}$} (3.5,15);
\end{tikzpicture}
}
}
\subfigure[EGG/YOLK configuration changing]{
\label{es:EQ-ey}
\scalebox{0.9}{
\begin{tikzpicture}[level distance=4mm]
\draw (2.8,21) node (8-12) {from conf.42a to conf.42a};
\draw[densely dashed] (0.5,20) ellipse (12mm and 7mm);
\draw (0.5,20) ellipse (11.8mm and 6.8mm);
\draw[densely dashed, fill = gray!20] (0.1,20) ellipse (3mm and 2mm);
\draw[fill = gray!60] (0.9,20) ellipse (3mm and 2mm);
\draw[densely dashed] (3.5,20) ellipse (12mm and 7mm);
\draw (3.5,20) ellipse (11.8mm and 6.8mm);
\draw[densely dashed, fill = gray!20] (2.95,20) ellipse (5mm and 3mm);
\draw[densely dashed, fill = white] (3.1,20) ellipse (3mm and 2mm);
\draw[fill = gray!60] (3.9,20) ellipse (3mm and 2mm);
\draw (2.8,19) node (13-20) {from conf.42a to conf.42b};
\draw[densely dashed] (0.5,18) ellipse (12mm and 7mm);
\draw (0.5,18) ellipse (11.8mm and 6.8mm);
\draw[densely dashed, fill = gray!20] (-0.05,18) ellipse (5mm and 3mm);
\draw[fill = gray!60] (0.9,18) ellipse (3mm and 2mm);
\draw[densely dashed] (3.5,18) ellipse (12mm and 7mm);
\draw (3.5,18) ellipse (11.8mm and 6.8mm);
\draw[densely dashed, fill = gray!20] (2.95,18) ellipse (5mm and 3mm);
\draw[fill = gray!60] (3.7,18) ellipse (5.5mm and 3mm);
\draw[fill = white] (3.9,18) ellipse (3mm and 2mm);
\draw[densely dashed] (2.95,18) ellipse (5mm and 3mm);
\draw (2.8,17) node (20-27) {from conf.20 to conf.27};
\draw[densely dashed] (0.5,16) ellipse (12mm and 7mm);
\draw (0.5,16) ellipse (11.8mm and 6.8mm);
\draw[densely dashed, fill = gray!20] (-0.05,16) ellipse (5mm and 3mm);
\draw[fill = gray!60] (0.7,16) ellipse (5.5mm and 3mm);
\draw[densely dashed] (-0.05,16) ellipse (5mm and 3mm);
\draw[densely dashed] (3.5,16) ellipse (12mm and 7mm);
\draw (3.5,16) ellipse (11.8mm and 6.8mm);
\draw[fill = gray!60] (3.7,16) ellipse (5.5mm and 3mm);
\draw[densely dashed] (2.95,16) ellipse (5mm and 3mm);
\draw[densely dashed, fill = gray!20] (3.2,16.1) ellipse (4mm and 2.8mm);
\draw[densely dashed] (2.95,16) ellipse (5mm and 3mm);
\draw (3.7,16) ellipse (5.5mm and 3mm);
\draw (2.8,15) node (27-32) {from conf.42b to conf.42e};
\draw[densely dashed] (0.5,14) ellipse (12mm and 7mm);
\draw (0.5,14) ellipse (11.8mm and 6.8mm);
\draw[fill = gray!60] (0.7,14) ellipse (5.5mm and 3mm);
\draw[densely dashed, fill = gray!20] (0.2,14.1) ellipse (4mm and 2.8mm);
\draw (0.7,14) ellipse (5.5mm and 3mm);
\draw[densely dashed] (3.5,14) ellipse (12mm and 7mm);
\draw (3.5,14) ellipse (11.8mm and 6.8mm);
\draw[very thick, fill = gray!20] (3.2,14.1) ellipse (4mm and 2.8mm);
\draw[densely dashed, pattern= crosshatch dots] (3.2,14.1) ellipse (4mm and 2.8mm);
\draw (3.7,14) ellipse (5.5mm and 3mm);
\end{tikzpicture}
}
}
\caption[A MN scenario between agents with equivalent stubbornness knowledge]{A MN scenario between Alice and Bob with equivalent stubbornness knowledge: the message passing flow (a) and the changes of their CAFs (b). Alice is identified by plain lines and Bob by dashed lines. White yolks represent the precedent proposal of the agent and the dotted yolk is the positive outcome of the scenario.\label{es:EQ}}
\end{figure}

In Figure~\ref{es:EQ}, we show the message passing flow between Alice and Bob 
and the changes of the EGG/YOLK configurations.
The MN results in a path, shown in Figure~\ref{fig:grafoEQes}, from
node 8 to node 41 of the graph in Figure~\ref{grafoEQ}.

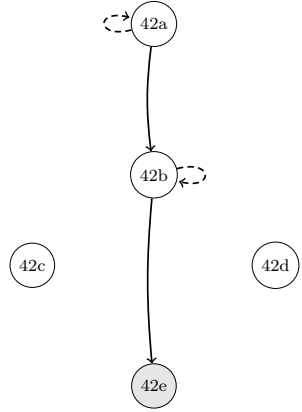
\begin{figure}\centering
\scalebox{0.8}{
\begin{tikzpicture}[level distance=4mm]\centering
\draw (2,0) node[draw, shape = circle, fill = black!10] (e) {42e};
\draw (4,2) node[draw, shape = circle] (d) {42d};
\draw (0,2) node[draw, shape = circle] (c) {42c};
\draw (2,3.5) node[draw, shape = circle] (b) {42b};
\draw (2,6) node[draw, shape = circle] (a) {42a};
\path[->,thick,densely dashed] (a) edge [loop left] node {} (a);
\path[->,thick,densely dashed] (b) edge [loop right] node {} (b);
\path[->,thick] (a) edge [bend right = 7] node {} (b);
\path[->,thick] (b) edge [bend right = 5] node {} (e);
\end{tikzpicture}
}
\caption[The MN path of the message passing in Figure \ref{es:EQ}]{The MN path of the Alice and Bob message passing in Figure \ref{es:EQ}.}\label{fig:grafoEQes}
\end{figure}
\hfill$\Box$
\end{example}

\subsubsection{Generalized Stubbornness Knowledge}

Suppose that one agent's stubbornness set is a \emph{generalization} of the theory of the opponent, i.e. they are consistent and one is a restriction of the other. Then
$$
\stub_i \rightarrow \stub_j
$$

The\fix{Luca}{This whole description should be either anticipated to
before or deleted} generalization (weakening, relaxing, etc.) relation
between stubbornness theories is represented in RCC5 as the partial
proper part relation between eggs. We assumed that the stubbornness part
of the agent theory never changes, then the models satisfying it are
fixed at the beginning of the negotiation process.

On the other hand, the flexible sets are relaxed or changed during the
negotiation process so that the models satisfying them change during the
negotiation. The \emph{flexible models} are the yolks of the RCC theory.

In Table~\ref{confPP}, we show the possible yolk configurations and we
give a statement representing the configuration, i.e. the negotiation
state.

\begin{center}
\begin{longtable}{ll}
\hline \textbf{Configuration} & \textbf{Formula} \\ \hline
\endfirsthead

\multicolumn{2}{c}%
{{\bfseries \tablename\ \thetable{} -- continued from previous page}} \\
\hline \textbf{Configuration} & \textbf{Formula} \\ \hline
\endhead

\hline \multicolumn{2}{r}{{Continued on next page}} \\ \hline
\endfoot

\endlastfoot

\hline
\begin{minipage}{3cm}
\ \\
\normalsize{8} \\
\scalebox{0.6}{
\begin{tikzpicture}[level distance=4mm]
\draw[densely dashed] (0.5,0) ellipse (13mm and 8mm);
\draw[densely dashed, fill = gray!20] (-0.2,0) ellipse (3mm and 2mm);
\draw (1,0) ellipse (7mm and 4mm);
\draw[fill = gray!60] (1.3,0) ellipse (3mm and 2mm);
\end{tikzpicture}}
\\
\end{minipage} &
\begin{minipage}{8cm}
$(\stub_i \rightarrow \stub_j) \wedge \neg(\flex{}_i \wedge \flex{}_j)\wedge(\flex{}_i \rightarrow \stub_j)\wedge \neg(\flex{}_j \wedge \stub_i)$
\end{minipage}\\
\begin{minipage}{3cm}
\ \\
\normalsize{13}\\
\scalebox{0.6}{
\begin{tikzpicture}[level distance=4mm]
\draw[densely dashed] (0.5,0) ellipse (13mm and 8mm);
\draw[densely dashed, fill = gray!20] (0.2,0) ellipse (3mm and 2mm);
\draw (1,0) ellipse (7mm and 4mm);
\draw[fill = gray!60] (1.3,0) ellipse (3mm and 2mm);
\end{tikzpicture}}
\\
\end{minipage} &
\begin{minipage}{8cm}
$(\stub_i \rightarrow \stub_j) \wedge \neg(\flex{}_i \wedge \flex{}_j)\wedge(\flex{}_i \rightarrow \stub_j)\wedge (\flex{}_j \vee \stub_i) \wedge \neg (\flex{}_j\rightarrow\stub_i)\wedge \neg(\stub_i\rightarrow\flex{}_j)$
\end{minipage}\\
\begin{minipage}{3cm}
\ \\
\normalsize{20} \\
\scalebox{0.6}{
\begin{tikzpicture}[level distance=4mm]
\draw[densely dashed] (0.5,0) ellipse (13mm and 8mm);
\draw[densely dashed, fill = gray!20] (0.4,0) ellipse (3.5mm and 2.5mm);
\draw (1,0) ellipse (7mm and 4mm);
\draw[fill = gray!60] (0.9,0) ellipse (3mm and 2mm);
\draw[densely dashed] (0.4,0) ellipse (3.5mm and 2.5mm);
\end{tikzpicture}}
\\
\end{minipage} &
\begin{minipage}{8cm}
$(\stub_i \rightarrow \stub_j) \wedge (\flex{}_i \vee \flex{}_j)\wedge\neg(\flex{}_i \rightarrow \flex{}_j)\wedge\neg(\flex{}_i \rightarrow \flex{}_j)\wedge(\flex{}_i \rightarrow \stub_j)\wedge (\flex{}_j \vee \stub_i)\wedge\neg(\flex{}_j \rightarrow \stub_i)\wedge\neg(\stub_i\rightarrow\flex{}_j)$
\end{minipage}\\
\begin{minipage}{3cm}
\ \\
\normalsize{22}\\
\scalebox{0.6}{
\begin{tikzpicture}[level distance=4mm]
\draw[densely dashed] (0.5,0) ellipse (13mm and 8mm);
\draw[densely dashed, fill = gray!20] (0.5,0) ellipse (10mm and 5mm);
\draw (0.5,0) ellipse (7mm and 4mm);
\draw[fill = gray!60] (0.5,0) ellipse (3mm and 2mm);
\end{tikzpicture}}
\\
\end{minipage} &
\begin{minipage}{8cm}
$(\stub_i \rightarrow \stub_j) \wedge (\flex{}_i \rightarrow \flex{}_j)\wedge(\flex{}_i \rightarrow \stub_j)\wedge (\stub_i \rightarrow \flex{}_j)$
\end{minipage}\\
\begin{minipage}{3cm}
\ \\
\normalsize{24} \\
\scalebox{0.6}{
\begin{tikzpicture}[level distance=4mm]
\draw[densely dashed] (0.5,0) ellipse (13mm and 8mm);
\draw[densely dashed, fill = gray!20] (0.4,0) ellipse (7.3mm and 3mm);
\draw (1,0) ellipse (7mm and 4mm);
\draw[fill = gray!60] (0.7,0) ellipse (3mm and 2mm);
\end{tikzpicture}}
\\
\end{minipage} &
\begin{minipage}{8cm}
$(\stub_i \rightarrow \stub_j) \wedge (\flex{}_i \rightarrow \flex{}_j)\wedge(\flex{}_i \rightarrow \stub_j)\wedge (\flex{}_j \vee \stub_i)\wedge\neg(\flex{}_j\rightarrow\stub_i)\wedge\neg(\stub_i\rightarrow\flex{}_j)$
\end{minipage}\\
\begin{minipage}{3cm}
\ \\
\normalsize{27}\\
\scalebox{0.6}{
\begin{tikzpicture}[level distance=4mm]
\draw[densely dashed] (0.5,0) ellipse (13mm and 8mm);
\draw[densely dashed, fill = gray!20] (0.1,0.1) ellipse (3mm and 2mm);
\draw (0.5,0) ellipse (10mm and 5mm);
\draw[fill = gray!60] (0.9,-0.1) ellipse (3mm and 2mm);
\end{tikzpicture}}
\\
\end{minipage} &
\begin{minipage}{8cm}
$(\stub_i \rightarrow \stub_j) \wedge \neg(\flex{}_i \wedge \flex{}_j)\wedge(\flex{}_i \rightarrow \stub_j)\wedge (\flex{}_j \rightarrow \stub_i)$
\end{minipage}\\
\begin{minipage}{3cm}
\ \\
\normalsize{32} \\
\scalebox{0.6}{
\begin{tikzpicture}[level distance=4mm]
\draw[densely dashed] (0.5,0) ellipse (13mm and 8mm);
\draw[densely dashed, fill = gray!20] (0.3,0.05) ellipse (3.5mm and 2.5mm);
\draw (0.5,0) ellipse (10mm and 5mm);
\draw[fill = gray!60] (0.7,-0.05) ellipse (3.5mm and 2.5mm);
\draw[densely dashed] (0.3,0.05) ellipse (3.5mm and 2.5mm);
\end{tikzpicture}}
\\
\end{minipage} &
\begin{minipage}{8cm}
$(\stub_i \rightarrow \stub_j) \wedge (\flex{}_i \vee \flex{}_j)\wedge\neg(\flex{}_i\rightarrow\flex{}_j)\wedge\neg(\flex{}_j\rightarrow\flex{}_i)\wedge(\flex{}_i \rightarrow \stub_j)\wedge (\flex{}_j \rightarrow \stub_i)$
\end{minipage}\\
\begin{minipage}{3cm}
\ \\
\normalsize{34}\\
\scalebox{0.6}{
\begin{tikzpicture}[level distance=4mm]
\draw[densely dashed] (0.5,0) ellipse (13mm and 8mm);
\draw[densely dashed, fill = gray!20] (0.5,0) ellipse (8mm and 4mm);
\draw (0.5,0) ellipse (8.20mm and 4.20mm);
\draw[fill = gray!60] (0.5,0) ellipse (3mm and 2mm);
\end{tikzpicture}}
\\
\end{minipage} &
\begin{minipage}{8cm}
$(\stub_i \rightarrow \stub_j) \wedge (\flex{}_i \rightarrow \flex{}_j)\wedge(\flex{}_j \leftrightarrow \stub_i)\wedge (\flex{}_i \rightarrow \stub_j)$
\end{minipage}\\
\begin{minipage}{3cm}
\ \\
\normalsize{37} \\
\scalebox{0.6}{
\begin{tikzpicture}[level distance=4mm]
\draw[densely dashed] (0.5,0) ellipse (13mm and 8mm);
\draw[densely dashed, fill = gray!20] (0.5,0) ellipse (6mm and 3mm);
\draw (0.5,0) ellipse (10mm and 5mm);
\draw[fill = gray!60] (0.5,0) ellipse (3mm and 2mm);
\end{tikzpicture}}
\\
\end{minipage} &
\begin{minipage}{8cm}
$(\stub_i \rightarrow \stub_j) \wedge (\flex{}_i \rightarrow \flex{}_j)\wedge(\flex{}_i \rightarrow \stub_j)\wedge (\flex{}_j \rightarrow \stub_i)$
\end{minipage}\\
\begin{minipage}{3cm}
\ \\
\normalsize{38}\\
\scalebox{0.6}{
\begin{tikzpicture}[level distance=4mm]
\draw[densely dashed] (0.5,0) ellipse (13mm and 8mm);
\draw[fill = gray!60] (0.5,0) ellipse (6mm and 3mm);
\draw (0.5,0) ellipse (10mm and 5mm);
\draw[densely dashed, fill = gray!20] (0.5,0) ellipse (3mm and 2mm);
\end{tikzpicture}}
\\
\end{minipage} &
\begin{minipage}{8cm}
$(\stub_i \rightarrow \stub_j) \wedge (\flex{}_j \rightarrow \flex{}_i)\wedge(\flex{}_i \rightarrow \stub_j)\wedge (\flex{}_j \rightarrow \stub_i)$
\end{minipage}\\
\begin{minipage}{3cm}
\ \\
\normalsize{41} \\
\scalebox{0.6}{
\begin{tikzpicture}[level distance=4mm]
\draw[densely dashed] (0.5,0) ellipse (13mm and 8mm);
\draw[fill = gray!60] (0.5,0) ellipse (4mm and 2mm);
\draw[densely dashed, fill = gray!20] (0.5,0) ellipse (3.8mm and 1.8mm);
\draw (0.5,0) ellipse (10mm and 5mm);
\end{tikzpicture}}
\\
\end{minipage} &
\begin{minipage}{8cm}
$(\stub_i \rightarrow \stub_j) \wedge (\flex{}_i \leftrightarrow \flex{}_j)\wedge(\flex{}_i \rightarrow \stub_j)\wedge (\flex{}_j \rightarrow \stub_i)$
\end{minipage}\\
\caption[Egg/yolk configurations for generalized/restricted stubbornness sets]{Configurations for generalized/restricted stubbornness sets. The stubbornness knowledge of agent $i$, identified by plain lines, is generalized by the stubbornness knowledge of the agent $j$, identified by dashed lines.}\label{confPP}
\end{longtable}
\end{center}

Figure \ref{grafoPP} depicts the graph of the possible negotiation
relations of the agents during the negotiation. The nodes are the
EGG/YOLK configurations and the edges are colored by the agent who makes
the next bid. The gray node identifies the positive outcome of the
negotiation.

All the rules of agent $i$, identified by plain lines, are legitimate.
The violation rules are used only by the agent $j$, identified by dashed
lines.

\begin{figure}\centering
\scalebox{0.8}{
\begin{tikzpicture}[level distance=4mm]\centering
\draw (5,0) node[draw, shape = circle] (34) {34};
\draw (5,1.5) node[draw, shape = circle,fill = black!10] (41) {41};
\draw (5,4) node[draw, shape = circle] (32) {32};
\draw (3,3) node[draw, shape = circle] (37) {37};
\draw (7,3) node[draw, shape = circle] (38) {38};
\draw (5,6) node[draw, shape = circle] (27) {27};
\draw (8,7) node[draw, shape = circle] (22) {22};
\draw (6,9) node[draw, shape = circle] (24) {24};
\draw (10,10.5) node[draw, shape = circle] (20) {20};
\draw (8,12) node[draw, shape = circle] (13) {13};
\draw (3,9) node[draw, shape = circle] (8) {8};
\path[->,thick, densely dashed] (20) edge  (8);
\draw[snake=coil,segment aspect=0, black!60] (8) -- (20);
\path[->,thick, densely dashed] (8) edge [bend left = 10] (20);
\draw[->,thick, densely dashed] (8) -- (24);
\draw[->,thick, densely dashed] (8) -- (22);
\draw[->,thick, densely dashed] (8) -- (27);
\draw[->,thick, densely dashed] (8) -- (32);
\draw[->,thick, densely dashed] (8) -- (37);
\path[->,thick, densely dashed] (8) edge [bend left = 20] (38);
\path[->,thick, densely dashed] (8) edge [bend right = 10] (34);
\path[->,thick, densely dashed] (8) edge [bend right = 5] (41);
\path[->,thick, densely dashed] (8) edge [bend left = 10] (13);
\path[->,thick, densely dashed] (13) edge  (8);
\draw[snake=coil,segment aspect=0, black!60] (8) -- (13);
\path[->,thick, densely dashed] (8) edge [loop above] node {} (8);
\path[->,thick] (8) edge [loop left] node {} (8);
\path[->,thick, densely dashed] (13) edge [loop above] node {} (13);
\path[->,thick] (13) edge [loop left] node {} (13);
\path[->,thick, densely dashed] (13) edge [bend left = 5] (24);
\path[->,thick] (13) edge [bend right = 5] (24);
\path[->,thick, densely dashed] (13) edge [bend left = 10] (20);
\draw[snake=coil,segment aspect=0, black!60] (20) -- (13);
\path[->,very thick] (20) edge (13);
\path[->,thick, densely dashed] (20) edge (13);
\path[->,thick] (13) edge [bend right = 10] (20);
\path[->,thick, densely dashed] (13) edge [bend left = 20] (38);
\path[->,thick, densely dashed] (13) edge [bend left = 40] (34);
\path[->,thick, densely dashed] (13) edge (41);
\path[->,thick, densely dashed] (13) edge [bend right = 15] (37);
\path[->,thick, densely dashed] (13) edge (27);
\path[->,thick, densely dashed] (13) edge (32);
\path[->,thick, densely dashed] (20) edge [bend left = 5] (24);
\path[->,thick] (20) edge [bend right = 5] (24);
\path[->,thick, densely dashed] (20) edge [loop above] node {} (20);
\path[->,thick] (20) edge [loop right] node {} (20);
\path[->,thick, densely dashed] (20) edge (22);
\path[->,thick, densely dashed] (20) edge (27);
\draw[snake=coil,segment aspect=0, black!60] (20) -- (27);
\path[->,thick, densely dashed] (20) edge (32);
\path[->,thick, densely dashed] (20) edge (37);
\path[->,thick, densely dashed] (20) edge [bend left = 5] (38);
\path[->,thick, densely dashed] (20) edge [bend left = 10] (41);
\path[->,thick, densely dashed] (20) edge [bend left = 30] (34);
\path[->,thick, densely dashed] (22) edge (41);
\path[->,thick, densely dashed] (24) edge [bend right = 30] (41);
\path[->,thick, densely dashed] (27) edge [bend right = 20] (41);
\path[->,thick] (27) edge [bend left = 20] (41);
\path[->,thick, densely dashed] (27) edge [bend right = 5] (37);
\path[->,thick] (27) edge [bend left = 5] (37);
\path[->,thick, densely dashed] (27) edge [bend right = 5] (38);
\path[->,thick] (27) edge [bend left = 5] (38);
\path[->,thick, densely dashed] (27) edge [bend right = 5] (32);
\path[->,thick] (27) edge [bend left = 5] (32);
\path[->,thick, densely dashed] (32) edge [bend right = 5] (37);
\path[->,thick] (32) edge [bend left = 5] (37);
\path[->,thick, densely dashed] (32) edge [bend right = 5] (38);
\path[->,thick] (32) edge [bend left = 5] (38);
\path[->,thick] (32) edge [bend right = 5] (41);
\path[->,thick, densely dashed] (32) edge [bend left = 5] (41);
\path[->,thick, densely dashed] (37) edge (41);
\path[->,thick] (38) edge (41);
\path[->,thick, densely dashed] (34) edge (41);
\path[->,thick] (34) edge [loop below] node {} (34);
\path[->,thick] (38) edge [loop below] node {} (38);
\path[->,thick] (37) edge [loop below] node {} (37);
\path[->,thick] (27) edge [loop above] node {} (27);
\end{tikzpicture}
}
\caption[Transition graph for generalized (or restricted) stubbornness knowledge]{Transition graph for generalized (or restricted) stubbornness knowledge: the stubbornness knowledge of agent $i$, identified by plain lines, is a restriction of the stubbornness knowledge of agent $j$, identified by dashed lines. The nodes are colored: the gray node is the configuration of the positive outcome of the negotiation process.}\label{grafoPP}
\end{figure}
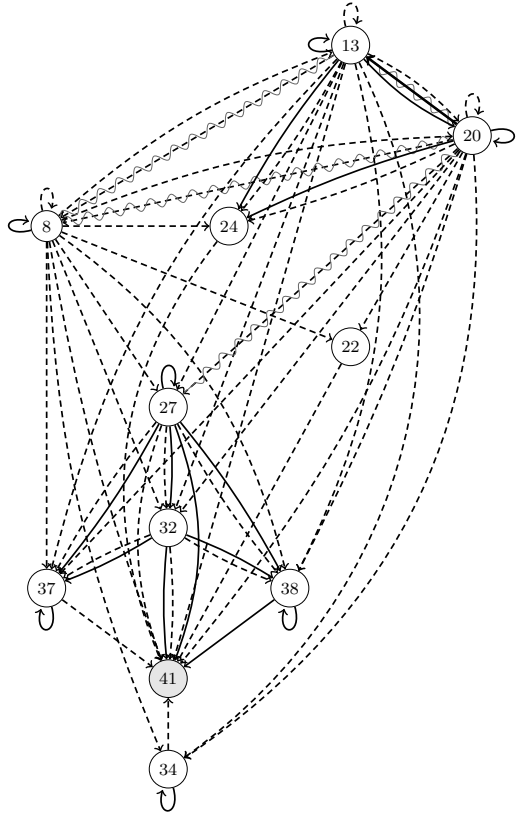

In the following example, we show how deductive rules of MND are used
and their effects in the EGG/YOLK configurations when the stubbornness
knowledge of agents are in a generalization/restriction relation.

\begin{example}\label{es:PPS1S2}
Suppose Alice and Bob are related as in configuration 8. Alice, $A$, is the first bidding agent and she proposes $\flex{0}_{A}$ to Bob, $B$. Bob receives the proposal and evaluates it.
Bob tests that they are in essence disagreement. Bob generalizes his initial viewpoint $\flex{0}_B$ by:
$$
\infer[(W)]{\flex{1}_B}
{
\flex{0}_B\rightarrow\flex{1}_B &
\neg( \stub_B \leftrightarrow \flex{0}_B)
}
$$
and he checks the provisional negotiation situation by:
$$
\infer[(ED)]{B:\mathbf{essDis}(A:\flex{0}_A)\cap B:\flex{1}_B}
{A:\flex{0}_A & \neg(\flex{0}_B\wedge \trad{B}{A}(\flex{0}_A))\wedge (\stub_B\vee \trad{B}{A}(\flex{0}_A)}
$$
Bob says to Alice that they are in essence disagreement and makes a proposal $\flex{1}_B$.

The system continues the MN by:
$$
\infer[(N)]{\mathit{Negotiate}(A,B)}{\ast(A,B) & A:\flex{0}_A & B:\mathbf{essDis}(A:\flex{0}_A) & B:\flex{1}_B}
$$

Alice receives $\flex{1}_B$ and she makes a weakening or a changing action because Bob said they are not in agreement nor in relative disagreement.
Suppose Alice changes her CAF by:
$$
\infer[(C)]{\flex{1}_A}{\flex{0}_A & \neg( \stub_A \leftrightarrow \flex{0}_A) & \neg(\flex{0}_A \rightarrow \flex{1}_A) & \neg(\flex{1}_A \rightarrow \flex{0}_A)}
$$
Alice tests the negotiation relation by:
$$
\infer[(ED\text{-}Co)]{A:\mathbf{comp}(B:\flex{1}_B)\cap A:\flex{1}_A}
{\begin{array}{c}B:\mathbf{essDis}(A:\flex{0}_A)\cap B:\flex{1}_B \\ (\flex{1}_A\vee \trad{A}{B}(\flex{1}_B))\wedge\neg(\flex{1}_A\rightarrow \trad{A}{B}(\flex{1}_B))\wedge\neg(\flex{1}_A\leftarrow \trad{A}{B}(\flex{1}_B))\end{array}}
$$
Alice says to Bob that they are in compatibility and makes a proposal $\flex{1}_A$.

The system continues the MN by:
$$
\infer[(N)]{\mathit{Negotiate}(B,A)}{\ast(B,A) & B:\flex{1}_B & A:\mathbf{comp}(B:\flex{1}_B) & A:\flex{1}_A}
$$

Bob receives $\flex{1}_A$ and he makes a weakening or a changing action because Alice said they are not in agreement nor in relative disagreement.
Suppose Bob changes his CAF by:
$$
\infer[(C)]{\flex{2}_B}{\flex{1}_B & \neg( \stub_B \leftrightarrow \flex{1}_B) & \neg(\flex{1}_B \rightarrow\flex{2}_B) & \neg(\flex{2}_B\rightarrow\flex{1}_B)}
$$
Bob tests the negotiation relation and makes a violation by:
$$
\infer[(Co\text{-}ED)]{B:\mathbf{essDis}(A:\flex{1}_A)\cap B:\flex{2}_B}
{\begin{array}{c}A:\mathbf{comp}(B:\flex{1}_B)\cap A:\flex{1}_A \\ (\stub_B \vee \trad{B}{A}(\flex{1}_A))\wedge\neg(\flex{2}_B \wedge \trad{B}{A}(\flex{1}_A))\end{array}}
$$
Bob says to Alice that they are in essence disagreement and makes a proposal $\flex{2}_B$.

The system continues the MN by:
$$
\infer[(N)]{\mathit{Negotiate}(A,B)}{\ast(A,B) & A:\flex{1}_A & B:\mathbf{essDis}(A:\flex{1}_A) & B:\flex{2}_B}
$$

Alice receives $\flex{2}_B$ and she makes a weakening or a changing action because Bob said they are not in agreement nor in relative disagreement.
Suppose Alice changes her CAF by:
$$
\infer[(C)]{\flex{2}_A}{\flex{1}_A & \neg( \stub_A \leftrightarrow \flex{1}_A) & \neg(\flex{1}_A\rightarrow\flex{2}_A) & \neg(\flex{2}_A\rightarrow\flex{1}_A)}
$$
Alice tests the negotiation relation by:
$$
\infer[(ED\text{-}Co)]{A:\mathbf{comp}(B:\flex{2}_B)\cap A:\flex{2}_A}
{\begin{array}{c}B:\mathbf{essDis}(A:\flex{1}_A)\cap B:\flex{2}_B \\ (\flex{2}_A\vee \trad{A}{B}(\flex{2}_B))\wedge \neg(\flex{2}_A\rightarrow \trad{A}{B}(\flex{2}_B))\wedge\neg(\flex{2}_A\leftarrow \trad{A}{B}(\flex{2}_B))\end{array}}
$$
Alice says to Bob that they are in compatibility and makes a proposal $\flex{2}_A$.

$$
\infer[(N)]{\mathit{Negotiate}(B,A)}{\ast(B,A) & B:\flex{2}_B & A:\mathbf{comp}(B:\flex{2}_B) & A:\flex{2}_A}
$$

Bob receives $\flex{2}_A$ and he makes a weakening or a changing action because Alice said they are not in agreement nor in relative disagreement.
Suppose Bob changes his CAF by:
$$
\infer[(C)]{\flex{3}_B}{\flex{2}_B & \neg( \stub_B \leftrightarrow \flex{2}_B) & \neg(\flex{2}_B\rightarrow\flex{3}_B) &\neg(\flex{3}_B\rightarrow\flex{2}_B)}
$$
Bob tests the negotiation relation by:
$$
\infer[(Co\text{-}Ag)]{B:\mathbf{agree}(A:\flex{2}_A)\cap B:\trad{B}{A}(\flex{2}_A)}
{A:\mathbf{comp}(B:\flex{2}_B)\cap A:\flex{2}_A & (\flex{3}_B \rightarrow \trad{B}{A}(\flex{2}_A))}
$$

Bob says to Alice that they are in agreement and that they have a common angle that is $\flex{2}_A$.

The system closes the MN by:
$$
\infer[(A)]{\mathit{Agreement}(A,B)}{\ast(A,B) & A:\flex{2}_A & B:\mathbf{agree}(A:\flex{2}_A)}
$$
with a positive outcome, $\flex{2}_A$.

In Figure \ref{es:PP} we show the message passing flow between Alice and Bob 
and the changes of the EGG/YOLK configurations.

\begin{figure}
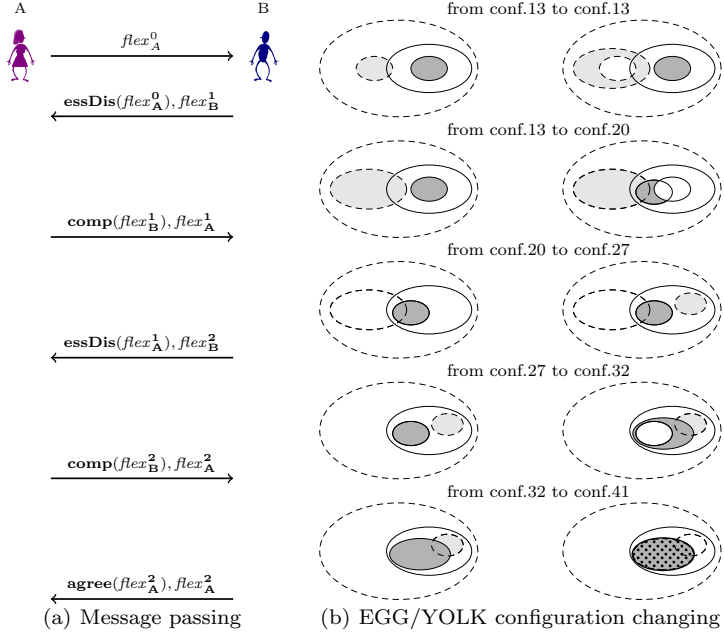
\centering
\subfigure[Message passing]{
\label{es-fig:PP-mp}
\scalebox{0.8}{
\begin{tikzpicture}[level distance=4mm]\scriptsize
\draw (0,22.8) node (alice) {A};
\draw (0,22) node (a) {\includegraphics[width=0.5cm]{alice.jpg}};
\draw (4,22.8) node (bob) {B};
\draw (4,22) node (b) {\includegraphics[width=0.5cm]{bob.jpg}};
\draw[->,thick] (0.5,22) .. controls +(right:0.5cm) and +(left:0.5cm).. node[above,sloped] {$\flex{0}_A$} (3.5,22);
\draw[<-,thick] (0.5,21) .. controls +(right:0.5cm) and +(left:0.5cm).. node[above,sloped] {$\mathbf{essDis(\flex{0}_A),\flex{1}_B}$} (3.5,21);
\draw[->,thick] (0.5,19) .. controls +(right:0.5cm) and +(left:0.5cm).. node[above,sloped] {$\mathbf{comp(\flex{1}_B),\flex{1}_A}$} (3.5,19);
\draw[<-,thick] (0.5,17) .. controls +(right:0.5cm) and +(left:0.5cm).. node[above,sloped] {$\mathbf{essDis(\flex{1}_A),\flex{2}_B}$} (3.5,17);
\draw[->,thick] (0.5,15) .. controls +(right:0.5cm) and +(left:0.5cm).. node[above,sloped] {$\mathbf{comp(\flex{2}_B),\flex{2}_A}$} (3.5,15);
\draw[<-,thick] (0.5,13) .. controls +(right:0.5cm) and +(left:0.5cm).. node[above,sloped] {$\mathbf{agree(\flex{2}_A),\flex{2}_A}$} (3.5,13);
\end{tikzpicture}
}
}
\subfigure[EGG/YOLK configuration changing]{
\label{es-fig:PP-ey}
\scalebox{0.8}{
\begin{tikzpicture}[level distance=4mm]
\draw (2.8,21) node (13-13) {from conf.13 to conf.13};
\draw[densely dashed] (0.5,20) ellipse (13mm and 8mm);
\draw[densely dashed, fill = gray!20] (0.1,20) ellipse (3mm and 2mm);
\draw (1,20) ellipse (7mm and 4mm);
\draw[fill = gray!60] (1,20) ellipse (3mm and 2mm);
\draw[densely dashed] (4.5,20) ellipse (13mm and 8mm);
\draw[densely dashed, fill = gray!20] (4,20) ellipse (6.3mm and 3.3mm);
\draw[densely dashed, fill = white] (4.1,20) ellipse (3mm and 2mm);
\draw (5,20) ellipse (7mm and 4mm);
\draw[fill = gray!60] (5,20) ellipse (3mm and 2mm);
\draw (2.8,19) node (13-20) {from conf.13 to conf.20};
\draw[densely dashed] (0.5,18) ellipse (13mm and 8mm);
\draw[densely dashed, fill = gray!20] (0,18) ellipse (6.3mm and 3.3mm);
\draw (1,18) ellipse (7mm and 4mm);
\draw[fill = gray!60] (1,18) ellipse (3mm and 2mm);
\draw[densely dashed] (4.5,18) ellipse (13mm and 8mm);
\draw[densely dashed, fill = gray!20] (4,18) ellipse (6.3mm and 3.3mm);
\draw (5,18) ellipse (7mm and 4mm);
\draw[fill = gray!60] (4.7,17.95) ellipse (3mm and 2mm);
\draw[fill = white] (5,18) ellipse (3mm and 2mm);
\draw (4.7,17.95) ellipse (3mm and 2mm);
\draw[densely dashed] (4,18) ellipse (6.3mm and 3.3mm);
\draw (2.8,17) node (20-27) {from conf.20 to conf.27};
\draw[densely dashed] (0.5,16) ellipse (13mm and 8mm);
\draw[densely dashed, fill = white] (0,16) ellipse (6.3mm and 3.3mm);
\draw (1,16) ellipse (7mm and 4mm);
\draw[fill = gray!60] (0.7,15.95) ellipse (3mm and 2mm);
\draw (0.7,15.95) ellipse (3mm and 2mm);
\draw[densely dashed] (0,16) ellipse (6.3mm and 3.3mm);
\draw[densely dashed] (4.5,16) ellipse (13mm and 8mm);
\draw[densely dashed, fill = white] (4,16) ellipse (6.3mm and 3.3mm);
\draw[densely dashed, fill = gray!20] (5.3,16.1) ellipse (2.6mm and 1.8mm);
\draw (5,16) ellipse (7mm and 4mm);
\draw[fill = gray!60] (4.7,15.95) ellipse (3mm and 2mm);
\draw (4.7,15.95) ellipse (3mm and 2mm);
\draw[densely dashed] (4,16) ellipse (6.3mm and 3.3mm);
\draw (2.8,15) node (27-32) {from conf.27 to conf.32};
\draw[densely dashed] (0.5,14) ellipse (13mm and 8mm);
\draw[densely dashed, fill = gray!20] (1.3,14.1) ellipse (2.6mm and 1.8mm);
\draw (1,14) ellipse (7mm and 4mm);
\draw[fill = gray!60] (.7,13.95) ellipse (3mm and 2mm);
\draw (.7,13.95) ellipse (3mm and 2mm);
\draw[densely dashed] (4.5,14) ellipse (13mm and 8mm);
\draw[densely dashed, fill = gray!20] (5.3,14.1) ellipse (2.6mm and 1.8mm);
\draw (5,14) ellipse (7mm and 4mm);
\draw[fill = gray!60] (4.85,13.95) ellipse (5mm and 2.6mm);
\draw[fill = white] (4.7,13.95) ellipse (3mm and 2mm);
\draw (4.7,13.95) ellipse (3mm and 2mm);
\draw[densely dashed] (5.3,14.1) ellipse (2.6mm and 1.8mm);
\draw (2.8,13) node (32-41) {from conf.32 to conf.41};
\draw[densely dashed] (0.5,12) ellipse (13mm and 8mm);
\draw[densely dashed, fill = gray!20] (1.3,12.1) ellipse (2.6mm and 1.8mm);
\draw (1,12) ellipse (7mm and 4mm);
\draw[fill = gray!60] (0.85,11.95) ellipse (5mm and 2.6mm);
\draw[densely dashed] (1.3,12.1) ellipse (2.6mm and 1.8mm);
\draw[densely dashed] (4.5,12) ellipse (13mm and 8mm);
\draw[densely dashed, fill = white] (5.3,12.1) ellipse (2.6mm and 1.8mm);
\draw (5,12) ellipse (7mm and 4mm);
\draw[very thick] (4.85,11.95) ellipse (5mm and 2.6mm);
\draw[densely dashed, fill = gray!60] (4.85,11.95) ellipse (5mm and 2.6mm);
\draw[densely dashed, pattern = crosshatch dots] (4.85,11.95) ellipse (5mm and 2.6mm);
\draw[densely dashed] (5.3,12.1) ellipse (2.6mm and 1.8mm);
\end{tikzpicture}
}
}
\caption[A MN scenario between agent with stubbornness knowledge]{A MN scenario between Alice and Bob with stubbornness knowledge of Alice is a restriction of the stubbornness knowledge of Bob: the message passing flow (a) and the changes of their CAFs (b). White yolks represent the precedent proposal of the agent and the dotted gray yolk is the positive outcome of the scenario. \label{es:PP}}
\end{figure}
The MN results in a path, showed in Figure \ref{fig:grafoPPes}, from node 8 to node 41 of the graph in Figure \ref{grafoPP}.

\begin{figure}\centering
\scalebox{0.8}{
\begin{tikzpicture}[level distance=4mm]\centering
\draw (5,0) node[draw, shape = circle] (34) {34};
\draw (5,1.5) node[draw, shape = circle,fill = black!10] (41) {41};
\draw (5,4) node[draw, shape = circle] (32) {32};
\draw (3,3) node[draw, shape = circle] (37) {37};
\draw (7,3) node[draw, shape = circle] (38) {38};
\draw (5,6) node[draw, shape = circle] (27) {27};
\draw (8,7) node[draw, shape = circle] (22) {22};
\draw (6,9) node[draw, shape = circle] (24) {24};
\draw (10,10.5) node[draw, shape = circle] (20) {20};
\draw (8,12) node[draw, shape = circle] (13) {13};
\draw (3,9) node[draw, shape = circle] (8) {8};
\path[->,thick, densely dashed] (8) edge [bend left = 10] (13);
\path[->,thick] (13) edge [bend right = 10] (20);
\path[->,thick, densely dashed] (20) edge (27);
\draw[snake=coil,segment aspect=0, black!60] (20) -- (27);
\path[->,thick] (27) edge [bend left = 5] (32);
\path[->,thick, densely dashed] (32) edge [bend left = 5] (41);
\end{tikzpicture}
}
\caption[The MN path of the message passing in Figure \ref{es:PP}]{The MN path of the Alice and Bob message passing in Figure \ref{es:PP}.}\label{fig:grafoPPes}
\end{figure}
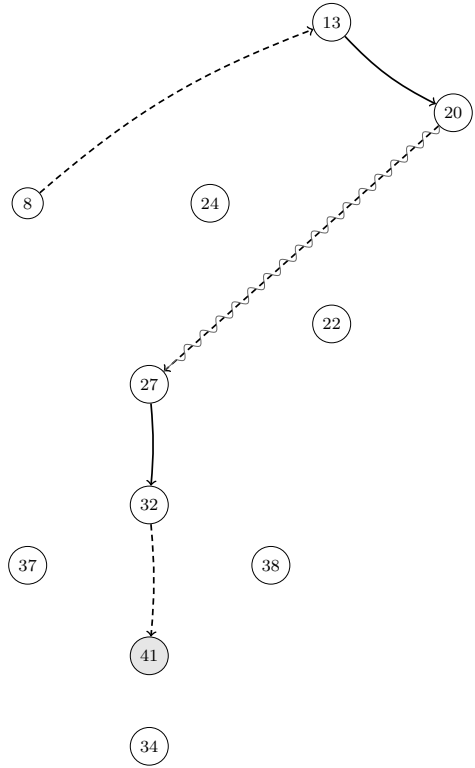
\hfill$\Box$
\end{example}

\subsubsection{Consistent Stubbornness Knowledge}

Suppose that the agents' stubbornness knowledge are \emph{compatible}, i.e. they are consistent and no one is a restriction or a generalization of the other. Then
$$
(\stub_i \vee \stub_j)\wedge\neg(\stub_i \rightarrow \stub_j)\wedge\neg(\stub_j \rightarrow \stub_i)
$$
The compatibility relation between stubbornness theories is represented in RCC5 as the partial overlapping relation between eggs. We assumed that if the stubbornness part of the agent theory never changes, then the models satisfying it are fixed at the beginning of the negotiation process.

On the other hand, the flexible sets are relaxed or changed during the
negotiation process so the models satisfying them change during the
negotiation. The \emph{flexible models} are the yolks of the RCC theory.

In Table \ref{confPO}, we show the possible yolk configurations and we give a statement representing the configuration, i.e. the negotiation state.
\begin{center}
\begin{longtable}{ll}
\hline \textbf{Configuration} & \textbf{Formula} \\ \hline
\endfirsthead

\multicolumn{2}{c}%
{{\bfseries \tablename\ \thetable{} -- continued from previous page}} \\
\hline \textbf{Configuration} & \textbf{Formula} \\ \hline
\endhead

\hline \multicolumn{2}{r}{{Continued on next page}} \\ \hline
\endfoot

\endlastfoot

\hline
\begin{minipage}{3cm}
\ \\
\normalsize{2} \\
\scalebox{0.6}{
\begin{tikzpicture}[level distance=4mm]
\draw (1,0) ellipse (10mm and 5mm);
\draw[fill = gray!60] (0.5,0) ellipse (3mm and 2mm);
\draw[densely dashed] (2,0) ellipse (10mm and 5mm);
\draw[densely dashed, fill = gray!20] (2.5,0) ellipse (3mm and 2mm);
\end{tikzpicture}
}
\\
\end{minipage} &
\begin{minipage}{8cm}
$(\stub_i \vee \stub_j)\wedge\neg(\stub_i \rightarrow \stub_j)\wedge\neg(\stub_j \rightarrow \stub_i) \wedge \neg(\flex{}_i \wedge \flex{}_j)\wedge\neg(\flex{}_i \wedge \stub_j)\wedge \neg(\flex{}_j \wedge \stub_i)$
\end{minipage}\\
\begin{minipage}{3cm}
\ \\
\normalsize{3}\\
\scalebox{0.6}{
\begin{tikzpicture}[level distance=4mm]
\draw (1,0) ellipse (10mm and 5mm);
\draw[fill = gray!60] (1,0) ellipse (3mm and 2mm);
\draw[densely dashed] (2,0) ellipse (10mm and 5mm);
\draw[densely dashed, fill = gray!20] (2.5,0) ellipse (3mm and 2mm);
\end{tikzpicture}
}
\\
\end{minipage}&
\begin{minipage}{8cm}
$(\stub_i \vee \stub_j)\wedge\neg(\stub_i \rightarrow \stub_j)\wedge\neg(\stub_j \rightarrow \stub_i) \wedge \neg(\flex{}_i \wedge \flex{}_j)\wedge(\flex{}_i \vee \stub_j)\wedge\neg(\flex{}_i \rightarrow \stub_j)\wedge\neg(\stub_j \rightarrow \flex{}_i)\wedge \neg(\flex{}_j \wedge \stub_i)$
\end{minipage}\\
\begin{minipage}{3cm}
\ \\
\normalsize{4} \\
\scalebox{0.6}{
\begin{tikzpicture}[level distance=4mm]
\draw[densely dashed, fill = gray!20] (2,0) ellipse (3mm and 2mm);
\draw (1,0) ellipse (10mm and 5mm);
\draw[fill = gray!60] (0.5,0) ellipse (3mm and 2mm);
\draw[densely dashed] (2,0) ellipse (10mm and 5mm);
\end{tikzpicture}
}
\\
\end{minipage} &
\begin{minipage}{8cm}
$(\stub_i \vee \stub_j)\wedge\neg(\stub_i \rightarrow \stub_j)\wedge\neg(\stub_j \rightarrow \stub_i) \wedge \neg(\flex{}_i \wedge \flex{}_j)\wedge\neg(\flex{}_i \wedge \stub_j)\wedge (\flex{}_j \vee \stub_i)\wedge\neg(\flex{}_j \rightarrow \stub_i)\wedge\neg(\stub_i \vee \flex{}_j)$
\end{minipage}\\
\begin{minipage}{3cm}
\ \\
\normalsize{5}\\
\scalebox{0.6}{
\begin{tikzpicture}[level distance=4mm]
\draw (1,0) ellipse (10mm and 5mm);
\draw[fill = gray!60] (0.5,0) ellipse (3mm and 2mm);
\draw[densely dashed] (2,0) ellipse (10mm and 5mm);
\draw[densely dashed, fill = gray!20] (1.5,0) ellipse (3mm and 2mm);
\end{tikzpicture}
}
\\
\end{minipage}&
\begin{minipage}{8cm}
$(\stub_i \vee \stub_j)\wedge\neg(\stub_i \rightarrow \stub_j)\wedge\neg(\stub_j \rightarrow \stub_i) \wedge \neg(\flex{}_i \wedge \flex{}_j)\wedge\neg(\flex{}_i \wedge \stub_j)\wedge (\flex{}_j \rightarrow \stub_i)$
\end{minipage}\\
\begin{minipage}{3cm}
\ \\
\normalsize{6}\\
\scalebox{0.6}{
\begin{tikzpicture}[level distance=4mm]
\draw (1,0) ellipse (10mm and 5mm);
\draw[fill = gray!60] (1.5,0) ellipse (3mm and 2mm);
\draw[densely dashed] (2,0) ellipse (10mm and 5mm);
\draw[densely dashed, fill = gray!20] (2.5,0) ellipse (3mm and 2mm);
\end{tikzpicture}
}
\\
\end{minipage}&
\begin{minipage}{8cm}
$(\stub_i \vee \stub_j)\wedge\neg(\stub_i \rightarrow \stub_j)\wedge\neg(\stub_j \rightarrow \stub_i) \wedge \neg(\flex{}_i \wedge \flex{}_j)\wedge(\flex{}_i \rightarrow \stub_j)\wedge \neg(\flex{}_j \wedge \stub_i)$
\end{minipage}\\
\begin{minipage}{3cm}
\ \\
\normalsize{9}\\
\scalebox{0.6}{
\begin{tikzpicture}[level distance=4mm]
\draw[densely dashed, fill = gray!20] (2,0) ellipse (3mm and 2mm);
\draw (1,0) ellipse (10mm and 5mm);
\draw[fill = gray!60] (1,0) ellipse (3mm and 2mm);
\draw[densely dashed] (2,0) ellipse (10mm and 5mm);
\end{tikzpicture}
}
\\
\end{minipage} &
\begin{minipage}{8cm}
$(\stub_i \vee \stub_j)\wedge\neg(\stub_i \rightarrow \stub_j)\wedge\neg(\stub_j \rightarrow \stub_i) \wedge \neg(\flex{}_i \wedge \flex{}_j)\wedge(\flex{}_i \vee \stub_j)\wedge \neg(\flex{}_i \rightarrow \stub_j)\wedge\neg(\stub_j \rightarrow \flex{}_i)\wedge(\flex{}_j \vee \stub_i)\wedge\neg(\flex{}_j \rightarrow \stub_i)\wedge\neg(\stub_i \rightarrow \flex{}_j)$\\
\end{minipage}\\
\begin{minipage}{3cm}
\ \\
\normalsize{10}\\
\scalebox{0.6}{
\begin{tikzpicture}[level distance=4mm]
\draw (1,0) ellipse (10mm and 5mm);
\draw[fill = gray!60] (0.9,0) ellipse (3mm and 2mm);
\draw[densely dashed] (2,0) ellipse (10mm and 5mm);
\draw[densely dashed, fill = gray!20] (1.6,0) ellipse (3mm and 2mm);
\end{tikzpicture}
}
\\
\end{minipage}&
\begin{minipage}{8cm}
$(\stub_i \vee \stub_j)\wedge\neg(\stub_i \rightarrow \stub_j)\wedge\neg(\stub_j \rightarrow \stub_i) \wedge \neg(\flex{}_i \wedge \flex{}_j)\wedge(\flex{}_i \rightarrow \stub_j)\wedge (\flex{}_j \vee \stub_i)\wedge\neg(\flex{}_j \rightarrow \stub_i)\wedge\neg(\stub_i \rightarrow \flex{}_j)$\\
\end{minipage}\\
\begin{minipage}{3cm}
\ \\
\normalsize{11}\\
\scalebox{0.6}{
\begin{tikzpicture}[level distance=4mm]
\draw[densely dashed, fill = gray!20] (2.1,0) ellipse (3mm and 2mm);
\draw (1,0) ellipse (10mm and 5mm);
\draw[fill = gray!60] (1.4,0) ellipse (3mm and 2mm);
\draw[densely dashed] (2,0) ellipse (10mm and 5mm);
\end{tikzpicture}
}
\\
\end{minipage}&
\begin{minipage}{8cm}
$(\stub_i \vee \stub_j)\wedge\neg(\stub_i \rightarrow \stub_j)\wedge\neg(\stub_j \rightarrow \stub_i)\wedge \neg(\flex{}_i \wedge \flex{}_j)\wedge(\flex{}_i \vee \stub_j)\wedge \neg(\flex{}_i \rightarrow \stub_j)\wedge\neg(\stub_j \rightarrow \flex{}_i)\wedge (\flex{}_j \rightarrow \stub_i)$\\
\end{minipage}\\
\begin{minipage}{3cm}
\ \\
\normalsize{14}\\
\scalebox{0.6}{
\begin{tikzpicture}[level distance=4mm]
\draw[densely dashed, fill = gray!20] (1.8,0) ellipse (3.5mm and 2mm);
\draw (1,0) ellipse (10mm and 5mm);
\draw[fill = gray!60] (1.2,0) ellipse (3.5mm and 2mm);
\draw[densely dashed] (2,0) ellipse (10mm and 5mm);
\draw[densely dashed] (1.8,0) ellipse (3.5mm and 2mm);
\end{tikzpicture}
}
\\
\end{minipage}&
\begin{minipage}{8cm}
$(\stub_i \vee \stub_j)\wedge\neg(\stub_i \rightarrow \stub_j)\wedge\neg(\stub_j \rightarrow \stub_i) \wedge (\flex{}_i \vee \flex{}_j)\wedge\neg(\flex{}_i \rightarrow \flex{}_j)\wedge\neg(\flex{}_j \rightarrow \flex{}_i)\wedge(\flex{}_i \vee \stub_j)\wedge \neg(\flex{}_i \rightarrow \stub_j)\wedge\neg(\stub_j \rightarrow \flex{}_i)\wedge (\flex{}_j \vee \stub_i)\wedge \neg(\flex{}_j \rightarrow \stub_i)\wedge\neg(\stub_i \rightarrow \flex{}_j)$\\
\end{minipage}\\
\begin{minipage}{3cm}
\ \\
\normalsize{15}\\
\scalebox{0.6}{
\begin{tikzpicture}[level distance=4mm]
\draw[densely dashed, fill = gray!20] (2,0) ellipse (3mm and 2mm);
\draw (1,0) ellipse (10mm and 5mm);
\draw[fill = gray!60] (1.5,0) ellipse (3mm and 2mm);
\draw[densely dashed] (2,0) ellipse (10mm and 5mm);
\draw[densely dashed] (2,0) ellipse (3mm and 2mm);
\end{tikzpicture}
}
\\
\end{minipage} &
\begin{minipage}{8cm}
$(\stub_i \vee \stub_j)\wedge\neg(\stub_i \rightarrow \stub_j)\wedge\neg(\stub_j \rightarrow \stub_i) \wedge (\flex{}_i \vee \flex{}_j)\wedge\neg(\flex{}_i \rightarrow \flex{}_j)\wedge\neg(\flex{}_j \rightarrow \flex{}_i)\wedge(\flex{}_i \rightarrow \stub_j)\wedge (\flex{}_j \vee \stub_i)\wedge \neg(\flex{}_j \rightarrow \stub_i)\wedge\neg(\stub_i \rightarrow \flex{}_j)$\\
\end{minipage}\\
\begin{minipage}{3cm}
\ \\
\normalsize{16}\\
\scalebox{0.6}{
\begin{tikzpicture}[level distance=4mm]
\draw[densely dashed, fill = gray!20] (1.5,0) ellipse (3mm and 2mm);
\draw (1,0) ellipse (10mm and 5mm);
\draw[fill = gray!60] (1,0) ellipse (3mm and 2mm);
\draw[densely dashed] (2,0) ellipse (10mm and 5mm);
\draw[densely dashed] (1.5,0) ellipse (3mm and 2mm);
\end{tikzpicture}
}
\\
\end{minipage}&
\begin{minipage}{8cm}
$(\stub_i \vee \stub_j)\wedge\neg(\stub_i \rightarrow \stub_j)\wedge\neg(\stub_j \rightarrow \stub_i) \wedge (\flex{}_i \vee \flex{}_j)\wedge\neg(\flex{}_i \rightarrow \flex{}_j)\wedge\neg(\flex{}_j \rightarrow \flex{}_i)\wedge(\flex{}_i \vee \stub_j)\wedge \neg(\flex{}_i \rightarrow \stub_j)\wedge\neg(\stub_j \rightarrow \flex{}_i)\wedge(\flex{}_j \rightarrow \stub_i)$\\
\end{minipage}\\
\begin{minipage}{3cm}
\ \\
\normalsize{17}\\
\scalebox{0.6}{
\begin{tikzpicture}[level distance=4mm]
\draw (1,0) ellipse (10mm and 5mm);
\draw[fill = gray!60] (1.3,0) ellipse (4mm and 2mm);
\draw[densely dashed] (2,0) ellipse (10mm and 5mm);
\draw[densely dashed, fill = gray!20] (1.4,0) ellipse (2.3mm and 1.4mm);
\end{tikzpicture}
}
\\
\end{minipage}&
\begin{minipage}{8cm}
$(\stub_i \vee \stub_j)\wedge\neg(\stub_i \rightarrow \stub_j)\wedge\neg(\stub_j \rightarrow \stub_i) \wedge (\flex{}_j \rightarrow \flex{}_i)\wedge(\flex{}_i \vee \stub_j)\wedge \neg(\flex{}_i \rightarrow \stub_j)\wedge\neg(\stub_j \rightarrow \flex{}_i)\wedge (\flex{}_j \rightarrow \stub_i)$\\
\end{minipage}\\
\begin{minipage}{3cm}
\ \\
\normalsize{18}\\
\scalebox{0.6}{
\begin{tikzpicture}[level distance=4mm]
\draw[densely dashed, fill = gray!20] (1.7,0) ellipse (4mm and 2mm);
\draw (1,0) ellipse (10mm and 5mm);
\draw[densely dashed] (2,0) ellipse (10mm and 5mm);
\draw[densely dashed] (1.7,0) ellipse (4mm and 2mm);
\draw[fill = gray!60] (1.6,0) ellipse (2.3mm and 1.4mm);
\end{tikzpicture}
}
\\
\end{minipage}&
\begin{minipage}{8cm}
$(\stub_i \vee \stub_j)\wedge\neg(\stub_i \rightarrow \stub_j)\wedge\neg(\stub_j \rightarrow \stub_i) \wedge (\flex{}_i \rightarrow \flex{}_j)\wedge(\flex{}_i \rightarrow \stub_j)\wedge (\flex{}_j \vee \stub_i)\wedge \neg(\flex{}_j \rightarrow \stub_i)\wedge\neg(\stub_i \rightarrow \flex{}_j)$
\end{minipage}\\
\begin{minipage}{3cm}
\ \\
\normalsize{25}\\
\scalebox{0.6}{
\begin{tikzpicture}[level distance=4mm]
\draw (1,0) ellipse (10mm and 5mm);
\draw[fill = gray!60] (1.3,0.1) ellipse (2.2mm and 1.2mm);
\draw[densely dashed] (2,0) ellipse (10mm and 5mm);
\draw[densely dashed, fill = gray!20] (1.7,-0.1) ellipse (2.2mm and 1.2mm);
\end{tikzpicture}
}
\\
\end{minipage} &
\begin{minipage}{8cm}
$(\stub_i \vee \stub_j)\wedge\neg(\stub_i \rightarrow \stub_j)\wedge\neg(\stub_j \rightarrow \stub_i) \wedge \neg(\flex{}_i \wedge \flex{}_j)\wedge(\flex{}_i \rightarrow \stub_j)\wedge(\flex{}_j \rightarrow \stub_i)$
\end{minipage}\\
\begin{minipage}{3cm}
\ \\
\normalsize{28}\\
\scalebox{0.6}{
\begin{tikzpicture}[level distance=4mm]
\draw[densely dashed, fill = gray!20] (1.6,0) ellipse (2.5mm and 1.5mm);
\draw (1,0) ellipse (10mm and 5mm);
\draw[fill = gray!60] (1.4,0) ellipse (2.5mm and 1.5mm);
\draw[densely dashed] (2,0) ellipse (10mm and 5mm);
\draw[densely dashed] (1.6,0) ellipse (2.5mm and 1.5mm);
\end{tikzpicture}
}
\\
\end{minipage}&
\begin{minipage}{8cm}
$(\stub_i \vee \stub_j)\wedge\neg(\stub_i \rightarrow \stub_j)\wedge\neg(\stub_j \rightarrow \stub_i) \wedge (\flex{}_i \vee \flex{}_j)\wedge\neg(\flex{}_i \rightarrow \flex{}_j)\wedge\neg(\flex{}_j \rightarrow \flex{}_i)\wedge(\flex{}_i \rightarrow \stub_j)\wedge(\flex{}_j \rightarrow \stub_i)$
\end{minipage}\\
\begin{minipage}{3cm}
\ \\
\normalsize{29}\\
\scalebox{0.6}{
\begin{tikzpicture}[level distance=4mm]
\draw (1,0) ellipse (10mm and 5mm);
\draw[fill = gray!60] (1.5,0) ellipse (3mm and 2mm);
\draw[densely dashed] (2,0) ellipse (10mm and 5mm);
\draw[densely dashed, fill = gray!20] (1.5,0) ellipse (2.2mm and 1.2mm);
\end{tikzpicture}
}
\\
\end{minipage} &
\begin{minipage}{8cm}
$(\stub_i \vee \stub_j)\wedge\neg(\stub_i \rightarrow \stub_j)\wedge\neg(\stub_j \rightarrow \stub_i) \wedge (\flex{}_j \rightarrow \flex{}_i)\wedge(\flex{}_i \rightarrow \stub_j)\wedge(\flex{}_j \rightarrow \stub_i)$
\end{minipage}\\
\begin{minipage}{3cm}
\ \\
\normalsize{30}\\
\scalebox{0.6}{
\begin{tikzpicture}[level distance=4mm]
\draw (1,0) ellipse (10mm and 5mm);
\draw[densely dashed, fill = gray!20] (1.5,0) ellipse (3mm and 2mm);
\draw[densely dashed] (2,0) ellipse (10mm and 5mm);
\draw[fill = gray!60] (1.5,0) ellipse (2.2mm and 1.2mm);
\end{tikzpicture}
}
\\
\end{minipage} &
\begin{minipage}{8cm}
$(\stub_i \vee \stub_j)\wedge\neg(\stub_i \rightarrow \stub_j)\wedge\neg(\stub_j \rightarrow \stub_i) \wedge (\flex{}_i \rightarrow \flex{}_j)\wedge(\flex{}_i \rightarrow \stub_j)\wedge(\flex{}_j \rightarrow \stub_i)$
\end{minipage}\\
\begin{minipage}{3cm}
\ \\
\normalsize{39}\\
\scalebox{0.6}{
\begin{tikzpicture}[level distance=4mm]
\draw (1,0) ellipse (10mm and 5mm);
\draw (1.5,0) ellipse (3mm and 2mm);
\draw[densely dashed] (2,0) ellipse (10mm and 5mm);
\draw[densely dashed, fill = gray!20] (1.5,0) ellipse (2.9mm and 1.9mm);
\end{tikzpicture}
}
\\
\end{minipage} &
\begin{minipage}{8cm}
$(\stub_i \vee \stub_j)\wedge\neg(\stub_i \rightarrow \stub_j)\wedge\neg(\stub_j \rightarrow \stub_i) \wedge (\flex{}_i \leftrightarrow \flex{}_j)\wedge(\flex{}_i \rightarrow \stub_j)\wedge (\flex{}_j \rightarrow \stub_i)$
\end{minipage}\\
\caption[Egg/yolk configurations for consistent stubbornness sets]{Configurations for consistent stubbornness sets. The stubbornness knowledge of the agent $i$, identified by plain lines, is only consistent by the stubbornness knowledge of the agent $j$, identified by dashed lines. }\label{confPO}
\end{longtable}
\end{center}

Figure \ref{grafoPO} depicts the graph of the possible negotiation
relations of the agents during the negotiation. The nodes are the
EGG/YOLK configurations and the edges are colored by the agent who makes
the next bid. The gray node identifies the positive outcome of the
negotiation.

Both agents may make legitimate or violation actions, thus they may use or not the violation rules in Table \ref{tab:1-1net2}.

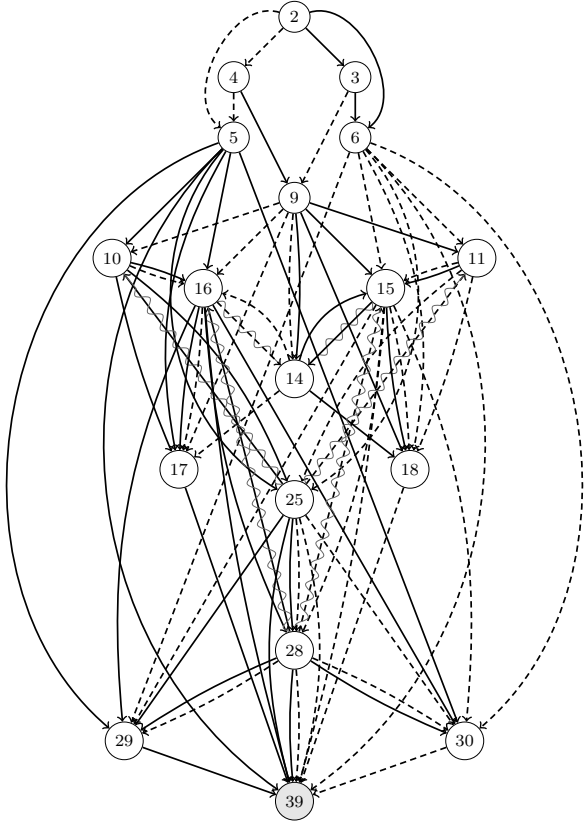
\begin{figure}\centering
\scalebox{0.8}{
\begin{tikzpicture}[level distance=4mm]
\draw (7,15) node[draw, shape = circle] (2) {2};
\draw (8,14) node[draw, shape = circle] (3) {3};
\draw (6,14) node[draw, shape = circle] (4) {4};
\draw (6,13) node[draw, shape = circle] (5) {5};
\draw (8,13) node[draw, shape = circle] (6) {6};
\draw (7,12) node[draw, shape = circle] (9) {9};
\draw (4,11) node[draw, shape = circle] (10) {10};
\draw (10,11) node[draw, shape = circle] (11) {11};
\draw (8.5,10.5) node[draw, shape = circle] (15) {15};
\draw (5.5,10.5) node[draw, shape = circle] (16) {16};
\draw (7,9) node[draw, shape = circle] (14) {14};
\draw (5.1,7.5) node[draw, shape = circle] (17) {17};
\draw (8.9,7.5) node[draw, shape = circle] (18) {18};
\draw (7,7) node[draw, shape = circle] (25) {25};
\draw (7,4.5) node[draw, shape = circle] (28) {28};
\draw (4.2,3) node[draw, shape = circle] (29) {29};
\draw (9.8,3) node[draw, shape = circle] (30) {30};
\draw (7,2) node[draw, shape = circle,fill = black!10] (39) {39};
\draw[->,thick] (2) -- (3);
\draw[->,thick,densely dashed] (2) -- (4);
\path[->,thick, densely dashed] (2) edge [bend right=80] node {} (5);
\path[->,thick] (2) edge [bend left=80] node {} (6);
\draw[->,thick,densely dashed] (4) -- (5);
\draw[->,thick] (3) -- (6);
\draw[->,thick] (4) -- (9);
\draw[->,thick,densely dashed] (3) -- (9);
\draw[->,thick] (5) -- (10);
\draw[->,thick,densely dashed] (9) -- (10);
\draw[->,thick] (9) -- (11);
\draw[->,thick,densely dashed] (6) -- (11);
\path[->,thick, densely dashed] (10) edge [bend right =5] node {} (16);
\path[->,thick] (10) edge [bend left =5] node {} (16);
\path[->,thick, densely dashed] (11) edge [bend right =5] node {} (15);
\path[->,thick] (11) edge [bend left =5] node {} (15);
\draw[snake=coil,segment aspect=0, black!60] (15) -- (14);
\draw[snake=coil,segment aspect=0, black!60] (16) -- (14);
\draw[->,thick] (15) -- (14);
\draw[->,thick,densely dashed] (16) -- (14);
\path[->,thick, densely dashed] (14) edge [bend right=30] node {} (16);
\path[->,thick] (14) edge [bend left=30] node {} (15);
\path[->,thick] (5) edge [bend right=50] node {} (25);
\path[->,thick] (5) edge [bend right=60] node {} (29);
\path[->,thick] (5) edge node {} (30);
\path[->,thick] (5) edge node {} (16);
\path[->,thick] (5) edge [bend right = 20] node {} (17);
\path[->,thick] (5) edge [bend right=50] node {} (39);
\path[->,thick, densely dashed] (6) edge [bend left=50] node {} (25);
\path[->,thick, densely dashed] (6) edge [bend left=60] node {} (30);
\path[->,thick, densely dashed] (6) edge node {} (29);
\path[->,thick, densely dashed] (6) edge node {} (15);
\path[->,thick, densely dashed] (6) edge [bend left = 20] node {} (18);
\path[->,thick, densely dashed] (6) edge [bend left=50] node {} (39);
\path[->,thick, densely dashed] (9) edge node {} (16);
\path[->,thick] (9) edge node {} (15);
\path[->,thick, densely dashed] (9) edge node {} (17);
\path[->,thick] (9) edge node {} (18);
\path[->,thick, densely dashed] (9) edge [bend right =5] node {} (14);
\path[->,thick] (9) edge [bend left =5] node {} (14);
\path[->,thick] (10) edge [bend right =5] node {} (17);
\path[->,thick] (10) edge [bend left = 15] node {} (25);
\path[->,thick] (25) edge  node {} (10);
\draw[snake=coil,segment aspect=0, black!60] (25) -- (10);
\path[->,thick, densely dashed] (11) edge [bend left =5] node {} (18);
\path[->,thick, densely dashed] (11) edge [bend right = 15] node {} (25);
\path[->,thick, densely dashed] (25) edge  node {} (11);
\draw[snake=coil,segment aspect=0, black!60] (25) -- (11);
\path[->,thick] (17) edge node {} (39);
\path[->,thick, densely dashed] (18) edge node {} (39);
\path[->,thick] (14) edge node {} (18);
\path[->,thick, densely dashed] (14) edge node {} (17);
\path[->,thick] (16) edge node {} (30);
\path[->,thick, densely dashed] (15) edge node {} (29);
\path[->,thick] (16) edge [bend right = 10] node {} (28);
\path[->,thick] (28) edge node {} (16);
\draw[snake=coil,segment aspect=0, black!60] (28) -- (16);
\path[->,thick] (16) edge [bend right = 15] node {} (29);
\path[->,thick] (16) edge [bend right = 5] node {} (39);
\path[->,thick, densely dashed] (16) edge [bend left = 5] node {} (17);
\path[->,thick] (16) edge [bend right = 5] node {} (17);
\path[->,thick, densely dashed] (15) edge [bend left = 10] node {} (28);
\path[->,thick, densely dashed] (28) edge node {} (15);
\draw[snake=coil,segment aspect=0, black!60] (28) -- (15);
\path[->,thick, densely dashed] (15) edge [bend left = 15] node {} (30);
\path[->,thick, densely dashed] (15) edge [bend left = 5] node {} (39);
\path[->,thick, densely dashed] (15) edge [bend left = 5] node {} (18);
\path[->,thick] (15) edge [bend right = 5] node {} (18);
\path[->,thick, densely dashed] (25) edge [bend left = 5] node {} (28);
\path[->,thick] (25) edge [bend right = 5] node {} (28);
\path[->,thick, densely dashed] (28) edge [bend left = 5] node {} (39);
\path[->,thick] (28) edge [bend right = 5] node {} (39);
\path[->,thick, densely dashed] (28) edge [bend left = 5] node {} (29);
\path[->,thick] (28) edge [bend right = 5] node {} (29);
\path[->,thick, densely dashed] (28) edge [bend left = 5] node {} (30);
\path[->,thick] (28) edge [bend right = 5] node {} (30);
\path[->,thick, densely dashed] (30) edge  node {} (39);
\path[->,thick] (29) edge  node {} (39);
\path[->,thick, densely dashed] (25) edge  node {} (30);
\path[->,thick] (25) edge  node {} (29);
\path[->,thick, densely dashed] (25) edge [bend left = 15] node {} (39);
\path[->,thick] (25) edge [bend right = 15] node {} (39);
\end{tikzpicture}
}
\caption[Transition graph for consistent and not generalized/restricted stubbornness knowledge]{Transition graph for consistent and not generalized/restricted stubbornness knowledge: the stubbornness knowledge of agent $i$, identified by plain lines, is not a restriction of the stubbornness knowledge of agent $j$, identified by dashed lines, and vice versa but they have shared semantical structures. The nodes are coloured: the gray node is the configuration of the positive outcome of the negotiation process.}\label{grafoPO}
\end{figure}

In the following example, we show how deductive rules of MND are used and their effects in the EGG/YOLK configurations when the stubbornness knowledge of agents are consistent and no generalization or restriction relation exist between them.

\begin{example}\label{es:POS1S2}
Suppose Alice and Bob are related as in configuration 2. Alice, $A$, is the first bidding agent and she proposes $\flex{0}_{A}$ to Bob, $B$. Bob receives the proposal and evaluates it.
Bob tests that they are in absolute disagreement. Bob generalizes his initial viewpoint $\flex{0}_B$ by:
$$
\infer[(W)]{\flex{1}_B}
{
\flex{0}_B\rightarrow\flex{1}_B &
\neg( \stub_B \leftrightarrow \flex{0}_B)
}
$$
and he checks the provisional negotiation situation by:
$$
\infer[(AD)]{B:\mathbf{absDis}(A:\flex{0}_A)\cap B:\flex{1}_B}
{A:\flex{0}_A & \neg(\stub_B \wedge \trad{B}{A}(\flex{0}_A))}
$$
Bob says to Alice that they are in absolute disagreement and makes a proposal $\flex{1}_B$.

The system continues the MN by:
$$
\infer[(N)]{\mathit{Negotiate}(A,B)}{\ast(A,B) & A:\flex{0}_A & B:\mathbf{absDis}(A:\flex{0}_A) & B:\flex{1}_B}
$$

Alice receives $\flex{1}_B$ and she makes a weakening or a changing action because Bob said they are not in agreement nor in relative disagreement.
Suppose Alice changes her CAF by:
$$
\infer[(C)]{\flex{1}_A}{\flex{0}_A & \neg( \stub_A \leftrightarrow \flex{0}_A) & \neg(\flex{0}_A\rightarrow\flex{1}_A) & \neg(\flex{1}_A\rightarrow\flex{0}_A)}
$$
Alice tests the negotiation relation by:
$$
\infer[(AD\text{-}ED)]{A:\mathbf{essDis}(B:\flex{1}_B)\cap A:\flex{1}_A}
{\begin{array}{c}B:\mathbf{absDis}(A:\flex{0}_A)\cap B:\flex{1}_B \\ (\stub_A \vee \trad{A}{B}(\flex{1}_B))\wedge\neg(\flex{1}_A \wedge \trad{A}{B}(\flex{1}_B))\end{array}}
$$
Alice says to Bob that they are in essence disagreement and makes a proposal $\flex{1}_A$.

The system continues the MN by:
$$
\infer[(N)]{\mathit{Negotiate}(B,A)}{\ast(B,A) & B:\flex{1}_B & A:\mathbf{essDis}(B:\flex{1}_B) & A:\flex{1}_A}
$$

Bob receives $\flex{1}_A$ and he makes a weakening or a changing action because Alice said they are not in agreement nor in relative disagreement.
Suppose Bob changes her CAF by:
$$
\infer[(C)]{\flex{2}_B}{\flex{1}_B & \neg( \stub_B \leftrightarrow \flex{1}_B) & \neg(\flex{1}_B\rightarrow\flex{2}_B) & \neg(\flex{2}_B\rightarrow\flex{1}_B)}
$$
Bob tests the negotiation relation by:
$$
\infer[(ED\text{-}ED)]{B:\mathbf{essDis}(A:\flex{1}_A)\cap B:\flex{2}_A}
{\begin{array}{c}A:\mathbf{essDis}(B:\flex{1}_B)\cap A:\flex{1}_A \\ (\stub_B \vee \trad{B}{A}(\flex{1}_A))\wedge\neg(\flex{2}_B \wedge \trad{B}{A}(\flex{1}_A))\end{array}}
$$
Bob says to Alice that they are in essence disagreement and makes a proposal $\flex{2}_B$.

The system continues the MN by:
$$
\infer[(N)]{\mathit{Negotiate}(A,B)}{\ast(A,B) & A:\flex{1}_A & B:\mathbf{essDis}(A:\flex{1}_A) & B:\flex{2}_B}
$$

Alice receives $\flex{2}_B$ and she makes a weakening or a changing action because Bob said they are not in agreement nor in relative disagreement.
Suppose Alice generalizes her CAF by:
$$
\infer[(W)]{\flex{2}_A}
{
\flex{1}_A\rightarrow\flex{2}_A &
\neg( \stub_A \leftrightarrow \flex{1}_A)
}
$$
Alice tests the negotiation relation by:
$$
\infer[(ED\text{-}Co)]{A:\mathbf{comp}(B:\flex{2}_B)\cap A:\flex{2}_A}
{\begin{array}{c}B:\mathbf{essDis}(A:\flex{1}_A)\cap B:\flex{2}_B \\ (\flex{2}_A\vee \trad{A}{B}(\flex{2}_B))\wedge\neg(\flex{2}_A\rightarrow \trad{A}{B}(\flex{2}_B))\wedge\neg(\flex{2}_A\leftarrow \trad{A}{B}(\flex{2}_B))\end{array}}
$$
Alice says to Bob that they are in compatibility and makes a proposal $\flex{2}_A$.

$$
\infer[(N)]{\mathit{Negotiate}(B,A)}{\ast(B,A) & B:\flex{2}_B & A:\mathbf{comp}(B:\flex{2}_B) & A:\flex{2}_A}
$$

Bob receives $\flex{2}_A$ and he makes a weakening or a changing action because Alice said they are not in agreement nor in relative disagreement.
Suppose Bob changes her CAF by:
$$
\infer[(C)]{\flex{3}_B}{\flex{2}_B & \neg( \stub_B \leftrightarrow \flex{2}_B) & \neg(\flex{2}_B\rightarrow\flex{3}_B) & \neg(\flex{3}_B\rightarrow\flex{2}_B)}
$$
Bob tests the negotiation relation by:
$$
\infer[(Co\text{-}RD)]{B:\mathbf{relDis}(A:\flex{2}_A)\cap B:\flex{3}_B}
{\begin{array}{c}A:\mathbf{comp}(B:\flex{2}_B)\cap A:\flex{2}_A \\  (\flex{3}_B \rightarrow \trad{B}{A}(\flex{2}_A))\wedge\neg(\flex{3}_B \leftarrow \trad{B}{A}(\flex{2}_A))\end{array}}
$$
Bob says to Alice that they are in relative disagreement and makes a proposal $\flex{3}_B$.

The system continues the MN by:
$$
\infer[(N)]{\mathit{Negotiate}(A,B)}{\ast(A,B) & A:\flex{2}_A & B:\mathbf{relDis}(A:\flex{2}_A) & B:\flex{3}_B}
$$

Alice receives $\flex{2}_B$ and she cannot\fix{Luca}{``has not'' non \`e chiaro. Meglio ``cannot''?} to make a weakening or a changing action because Bob said they are in relative disagreement.
Alice accepts the proposal of Bob by:
$$
\infer[(RD\text{-}Ag)]{A:\mathbf{agree}(B:\flex{3}_B) \cap A:\trad{A}{B}(\flex{2}_A)}
{B:\mathbf{relDis}(A:\flex{2}_A)\cap B:\flex{3}_B}
$$
Alice says to Bob that they are in agreement and that they have a common angle that is $\flex{3}_B$.

The system closes the MN by:
$$
\infer[(A)]{\mathit{Agreement}(B,A)}{\ast(B,A) & B:\flex{3}_B & A:\mathbf{agree}(B:\flex{3}_B)}
$$
with a positive outcome, $\flex{3}_B$.

In Figure~\ref{es:PO} we show the message passing flow between Alice and Bob 
and the changes of the EGG/YOLK configurations. 

\begin{figure}
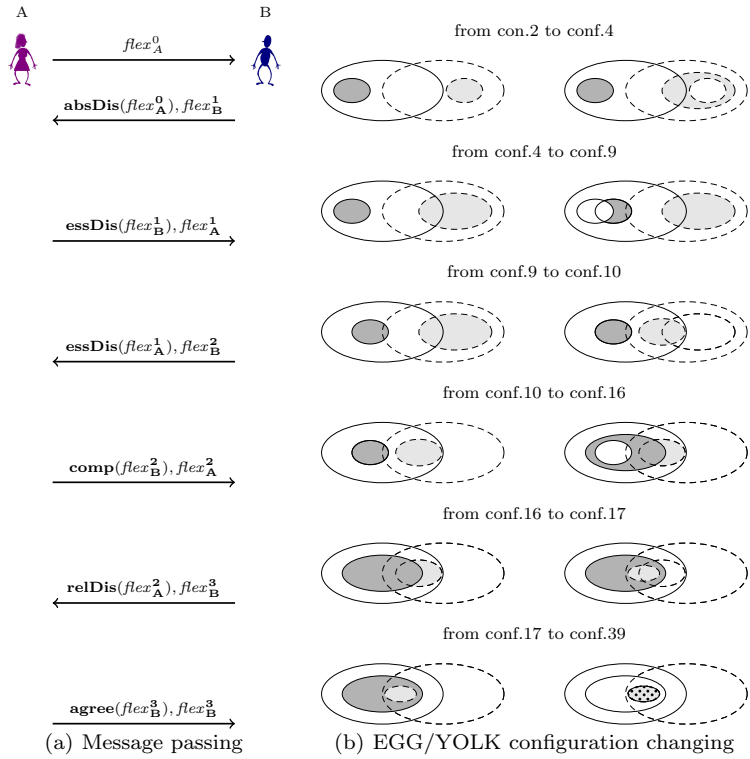
\centering
\subfigure[Message passing]{
\label{es:PO-mp}
\scalebox{0.8}{
\begin{tikzpicture}[level distance=4mm]\scriptsize
\draw (0,22.8) node (alice) {A};
\draw (0,22) node (a) {\includegraphics[width=0.5cm]{alice.jpg}};
\draw (4,22.8) node (bob) {B};
\draw (4,22) node (b) {\includegraphics[width=0.5cm]{bob.jpg}};
\draw[->,thick] (0.5,22) .. controls +(right:0.5cm) and +(left:0.5cm).. node[above,sloped] {$\flex{0}_A$} (3.5,22);
\draw[<-,thick] (0.5,21) .. controls +(right:0.5cm) and +(left:0.5cm).. node[above,sloped] {$\mathbf{absDis(\flex{0}_A),\flex{1}_B}$} (3.5,21);
\draw[->,thick] (0.5,19) .. controls +(right:0.5cm) and +(left:0.5cm).. node[above,sloped] {$\mathbf{essDis(\flex{1}_B),\flex{1}_A}$} (3.5,19);
\draw[<-,thick] (0.5,17) .. controls +(right:0.5cm) and +(left:0.5cm).. node[above,sloped] {$\mathbf{essDis(\flex{1}_A),\flex{2}_B}$} (3.5,17);
\draw[->,thick] (0.5,15) .. controls +(right:0.5cm) and +(left:0.5cm).. node[above,sloped] {$\mathbf{comp(\flex{2}_B),\flex{2}_A}$} (3.5,15);
\draw[<-,thick] (0.5,13) .. controls +(right:0.5cm) and +(left:0.5cm).. node[above,sloped] {$\mathbf{relDis(\flex{2}_A),\flex{3}_B}$} (3.5,13);
\draw[->,thick] (0.5,11) .. controls +(right:0.5cm) and +(left:0.5cm).. node[above,sloped] {$\mathbf{agree(\flex{3}_B),\flex{3}_B}$} (3.5,11);
\end{tikzpicture}
}
}
\subfigure[EGG/YOLK configuration changing]{
\label{es:PO-ey}
\scalebox{0.8}{
\begin{tikzpicture}[level distance=4mm]
\draw (3.5,21) node (2-4) {from con.2 to conf.4};
\draw (1,20) ellipse (10mm and 5mm);
\draw[fill = gray!60] (0.5,20) ellipse (3mm and 2mm);
\draw[densely dashed] (2,20) ellipse (10mm and 5mm);
\draw[densely dashed, fill = gray!20] (2.35,20) ellipse (3mm and 2mm);
\draw[densely dashed, fill = gray!20] (6.2,20) ellipse (6mm and 3mm);
\draw (5,20) ellipse (10mm and 5mm);
\draw[fill = gray!60] (4.5,20) ellipse (3mm and 2mm);
\draw[densely dashed] (6,20) ellipse (10mm and 5mm);
\draw[densely dashed, fill = white] (6.35,20) ellipse (3mm and 2mm);
\draw (3.5,19) node (4-9) {from conf.4 to conf.9};
\draw[densely dashed, fill = gray!20] (2.2,18) ellipse (6mm and 3mm);
\draw (1,18) ellipse (10mm and 5mm);
\draw[fill = gray!60] (0.5,18) ellipse (3mm and 2mm);
\draw[densely dashed] (2,18) ellipse (10mm and 5mm);
\draw[densely dashed, fill = gray!20] (6.2,18) ellipse (6mm and 3mm);
\draw (5,18) ellipse (10mm and 5mm);
\draw[fill = gray!60] (4.8,18) ellipse (3mm and 2mm);
\draw[densely dashed] (6,18) ellipse (10mm and 5mm);
\draw[fill = white] (4.5,18) ellipse (3mm and 2mm);
\draw (4.8,18) ellipse (3mm and 2mm);
\draw (3.5,17) node (9-10) {from conf.9 to conf.10};
\draw[densely dashed, fill = gray!20] (2.2,16) ellipse (6mm and 3mm);
\draw (1,16) ellipse (10mm and 5mm);
\draw[fill = gray!60] (0.8,16) ellipse (3mm and 2mm);
\draw[densely dashed] (2,16) ellipse (10mm and 5mm);
\draw[densely dashed, fill = white] (6.2,16) ellipse (6mm and 3mm);
\draw (5,16) ellipse (10mm and 5mm);
\draw[fill = gray!60] (4.8,16) ellipse (3mm and 2mm);
\draw[densely dashed] (6,16) ellipse (10mm and 5mm);
\draw[densely dashed, fill = gray!20] (5.6,16) ellipse (3.8mm and 2.2mm);
\draw (4.8,16) ellipse (3mm and 2mm);
\draw[densely dashed] (6.2,16) ellipse (6mm and 3mm);
\draw (3.5,15) node (10-14) {from conf.10 to conf.16};
\draw (1,14) ellipse (10mm and 5mm);
\draw[fill = gray!60] (0.8,14) ellipse (3mm and 2mm);
\draw[densely dashed] (2,14) ellipse (10mm and 5mm);
\draw[densely dashed, fill = gray!20] (1.6,14) ellipse (3.8mm and 2.2mm);
\draw (0.8,14) ellipse (3mm and 2mm);
\draw (5,14) ellipse (10mm and 5mm);
\draw[densely dashed] (6,14) ellipse (10mm and 5mm);
\draw[densely dashed, fill = gray!20] (5.6,14) ellipse (3.8mm and 2.2mm);
\draw[fill = gray!60] (5,14) ellipse (6.6mm and 3mm);
\draw[fill = white] (4.8,14) ellipse (3mm and 2mm);
\draw[densely dashed] (6,14) ellipse (10mm and 5mm);
\draw[densely dashed] (5.6,14) ellipse (3.8mm and 2.2mm);
\draw (3.5,13) node (16-17) {from conf.16 to conf.17};
\draw (1,12) ellipse (10mm and 5mm);
\draw[densely dashed] (2,12) ellipse (10mm and 5mm);
\draw[densely dashed, fill = gray!20] (1.6,12) ellipse (3.8mm and 2.2mm);
\draw[fill = gray!60] (1,12) ellipse (6.6mm and 3mm);
\draw[densely dashed] (2,12) ellipse (10mm and 5mm);
\draw[densely dashed] (1.6,12) ellipse (3.8mm and 2.2mm);
\draw (5,12) ellipse (10mm and 5mm);
\draw[densely dashed] (6,12) ellipse (10mm and 5mm);
\draw[densely dashed, fill = white] (5.6,12) ellipse (3.8mm and 2.2mm);
\draw[fill = gray!60] (5,12) ellipse (6.6mm and 3mm);
\draw[densely dashed, fill = gray!20] (5.3,12) ellipse (2.6mm and 1.3mm);
\draw[densely dashed] (6,12) ellipse (10mm and 5mm);
\draw[densely dashed] (5.6,12) ellipse (3.8mm and 2.2mm);
\draw (3.5,11) node (17-39) {from conf.17 to conf.39};
\draw (1,10) ellipse (10mm and 5mm);
\draw[densely dashed] (2,10) ellipse (10mm and 5mm);
\draw[fill = gray!60] (1,10) ellipse (6.6mm and 3mm);
\draw[densely dashed, fill = gray!20] (1.3,10) ellipse (2.6mm and 1.3mm);
\draw[densely dashed] (2,10) ellipse (10mm and 5mm);
\draw (5,10) ellipse (10mm and 5mm);
\draw[densely dashed] (6,10) ellipse (10mm and 5mm);
\draw[fill = white] (5,10) ellipse (6.6mm and 3mm);
\draw[densely dashed, fill = gray!20] (5.3,10) ellipse (2.6mm and 1.3mm);
\draw[densely dashed] (6,10) ellipse (10mm and 5mm);
\draw[pattern = crosshatch dots] (5.3,10) ellipse (2.6mm and 1.3mm);
\draw[densely dashed] (5.3,10) ellipse (2.6mm and 1.3mm);
\end{tikzpicture}
}
}
\caption[A MN scenario between agents with consistent stubbornness knowledge]{A MN scenario between Alice and Bob with consistent stubbornness knowledge: the message passing flow (a) and the changes of their CAFs (b). White yolks represent the precedent proposal of the agent and the dotted gray yolk is the positive outcome of the scenario.\label{es:PO}}
\end{figure}

The MN results in a path, showed in Figure \ref{fig:grafoPOes}, from node 2 to node 39 of the graph in Figure \ref{grafoPO}.
\begin{figure}\centering
\scalebox{0.8}{
\begin{tikzpicture}[level distance=4mm]
\draw (7,15) node[draw, shape = circle] (2) {2};
\draw (8,14) node[draw, shape = circle] (3) {3};
\draw (6,14) node[draw, shape = circle] (4) {4};
\draw (6,13) node[draw, shape = circle] (5) {5};
\draw (8,13) node[draw, shape = circle] (6) {6};
\draw (7,12) node[draw, shape = circle] (9) {9};
\draw (4,11) node[draw, shape = circle] (10) {10};
\draw (10,11) node[draw, shape = circle] (11) {11};
\draw (8.5,10.5) node[draw, shape = circle] (15) {15};
\draw (5.5,10.5) node[draw, shape = circle] (16) {16};
\draw (7,9) node[draw, shape = circle] (14) {14};
\draw (5.1,7.5) node[draw, shape = circle] (17) {17};
\draw (8.9,7.5) node[draw, shape = circle] (18) {18};
\draw (7,7) node[draw, shape = circle] (25) {25};
\draw (7,4.5) node[draw, shape = circle] (28) {28};
\draw (4.2,3) node[draw, shape = circle] (29) {29};
\draw (9.8,3) node[draw, shape = circle] (30) {30};
\draw (7,2) node[draw, shape = circle,fill = black!10] (39) {39};
\draw[->,thick,densely dashed] (2) -- (4);
\draw[->,thick] (4) -- (9);
\draw[->,thick,densely dashed] (9) -- (10);
\path[->,thick] (10) edge [bend left =5] node {} (16);
\path[->,thick] (17) edge node {} (39);
\path[->,thick, densely dashed] (16) edge [bend left = 5] node {} (17);
\end{tikzpicture}
}
\caption[The MN path of the message passing in Figure \ref{es:PO}]{The MN path of the Alice and Bob message passing in Figure \ref{es:PO}.}\label{fig:grafoPOes}
\end{figure}
\hfill$\Box$
\end{example}

\section{Related Work}\label{sec:relWork}

\fix{Luca}{Questa section \`e da riscrivere completamente. Ora come ora,
\`e solo un elenco di related work, senza alcun comparison. Va riscritta
partendo dal nostro lavoro (magari facendo merge con la sezione di
conclusioni), confrontandolo e giustificandolo rispetto al related work.
Molte citazioni, probabilmente, spariranno, ma va bene cos\`i.}

The Meaning Negotiation problem has reached large attention in the Artificial Intelligence community. Two are the most general approaches to the problem of finding a shared knowledge from many different and possibly inconsistent ones. The first way to model the MN process is by viewing it as a conflict resolution. The participants of a negotiation litigate about how to share something and they may disagree in many ways by Hunter and Summerton (\citeyear{hunter06}).


%
Argumentation theory, or argumentation, is the interdisciplinary study of how humans should, can, and do reach conclusions through logical reasoning, that is, claims based, soundly or not, on premises. It includes the arts and sciences of civil debate, dialogue, conversation, and persuasion. It studies rules of inference, logic, and procedural rules in both artificial and real world settings.

Argumentation includes debate and negotiation which are concerned with reaching mutually acceptable conclusions \citep{kraus,parsons,atkinson05,schroeder02}. It also encompasses eristic dialog, the branch of social debate in which victory over an opponent is the primary goal. This art and science is often the means by which people protect their beliefs or self-interests in rational dialogue, in common parlance, and during the process of arguing.
%
%
The main approaches to the Argumentation theory are: the pragma-dialectical theory and the argumentative schemes.
%
%

In pragma-dialectical theory, the argumentation is viewed as a critical discussion about the resolution of a conflicts. In this ideal model of a critical discussion, four discussion stages are distinguished that the discussion parties have to go through to resolve their difference of opinion (see \citep{emeren84} pp.85-88; \citep{emeren92}, pp.34-35; \citep{emeren04}, pp.59-62):
\begin{enumerate}
\item the confrontation stage: the interlocutors establish that they have a difference of opinion;
\item opening stage: they decide to resolve this difference of opinion. The interlocutors determine their points of departure: they agree upon the rules of the discussion and establish which propositions they can use in their argumentation;
\item argumentation stage: the protagonist defends his/her standpoint by putting forward arguments to counter the antagonist's objections or doubt;
\item concluding stage: the discussion parties evaluate to what extent their initial difference of opinion has been resolved and in whose favor.
\end{enumerate}

The ideal model stipulates ten rules (see \citep{emeren02}, pp.182-183) that apply to an argumentative discussion. Violations of the discussion rules are said to frustrate the reasonable resolution of the difference of opinion and they are therefore considered as fallacies.

The representation of \emph{Argumentative schemes} constitutes one of the central topics in current argumentation theory and they represent common patterns of reasoning used in everyday
conversational discourse. Important contributions to the study of argument schemes have been made by Douglas Walton \citep{walton96book,prakken03,walton05book,walton08,prakken03}. As considered by him, argument schemes technically have the form of an inference rule: an argument scheme has a set of premises and a conclusion.

The argumentation schemes approach is based upon the Toulmin model of the argumentation \citep{toulmin03}. 

The process of resolving conflicts between agents by argumentation involves not only a negotiation dialogue, but also a \emph{persuasion} one \citep{krabbe}. The participants in a negotiation by argumentation propose arguments to the opponents and make counterproposals in two way: by rebutting and or by undercutting the proposals of the opponents. Rebuttal of a rule claiming $c$, is made by a rule in which the claim is the negation of $c$. A rule $r$ undercuts a rule $r'$ if the claim of $r$ is the negation of some of the premises of $r'$.

When no undercut and rebuttal rules are available, an agent can accept the argument posted by someone else in the system in two ways \citep{dung07}:
\begin{itemize}
\item \emph{skeptical}: the argument is acceptable until somebody else claims the contrary;
\item \emph{credulous}: the argument is wholeheartedly accepted.
\end{itemize}
In \citep{dung95} the author explores the mechanisms humans use in argumentation to state the correctness, the appropriateness and the acceptability of arguments.

To persuade the opponents about the validity of the argument she proposes, the proponent has to \emph{justify} it \citep{pollock94,pollock01,waltonjust05,rubinelli06,atkinson04CMNA,governatori07ACAI} or to have its proof. Recent investigations have dealt with the problem about who has the burden of proving a claim and which argument produces a burden of proof \citep{farley95,walton03,prakken05,oren07,gordon07}. In \citep{chesnevar00} a complete survey of the logical models of arguments is presented.

Argumentation Theory is largely used in legal reasoning to model the interactions according to the legal debate rules \citep{sergot97,gordon09icail,bench97,kowalski96}.
In particular, in \citep{bench05}, the authors formalise an argumentation framework in order to model the definitions of \emph{objectively} and \emph{subjectively acceptable}, and \emph{indefensible} argument. The definition of the above degrees of acceptance of an argument is based upon a value given to the arguments and a form of preference between them that the agents have.


In \citep{maudet06}, the authors present a brief survey of argumentation in
multi-agent systems. It is not only brief, but rather idiosyncratic, and
focuses on the areas of research of belief revision, agent communication and reasoning.\\ \ \\

The second way to model MN is as a set of operations on the beliefs' sets of the agents involved. The scope is to construct a commonly accepted knowledge 
%
as the process of merging information becoming from different sources. The problem of how the merging has to be done was approached in two steps:
\begin{itemize}
\item how the different sources have inconsistent beliefs and how they are mutually reliable;
\item how and when beliefs causing conflicts have to be merged into the knowledge base.
\end{itemize}
The first point was studied by the \emph{information fusion} researchers and the second by the \emph{belief revision} ones.

In \citep{greg06} the author makes a survey of the contributions from the artificial intelligence research literature about logic-based information fusion. The assumption made by the early approaches were:
\begin{itemize}
\item Information sources are mutually independent;
\item All sources exhibit the same level of importance;
\item The level of information importance is also constant.
\end{itemize}
The main assumption regards the completely reliance of all the information sources as in \citep{booth06}.
More realistic approaches suppose that the information sources are not equally reliable and that some source is preferred with respect to the available ones. In \citep{greg06IFW} the reliability of the information sources is defined as a preference order. Another precedent approach assume a weight applied to the beliefs for each source by which they belong \citep{lin96}.

In the situations in which the information sources are equally reliable, the merging is said \emph{non-prioritized} otherwise a degree of certainty or plausibility is given to the belief \citep{ferme04}.

When the beliefs coming from the different sources, they have to be merged in order to \emph{minimally change} the initial knowledge base. The operation needed to add new information into a knowledge base is known as \emph{revision} and it involves only conflicting beliefs during a negotiation process. The general approach of \emph{maximal adjustment} is to remove the present belief causing the conflict and adding the new one. In \citep{benferhat04} the author present a \emph{disjunctive maximal adjustment} in which the belief are weighted and thus not always removed or simply added into the knowledge base.

The merging\footnote{One can be tempted to assume that arbitration and majority operators can be fruitfully employed to solve any admissible problem of negotiation. However, negotiation is the process of reaching agreements not the underlying semantic theory about the models. Therefore, although we can model the resulting theory by the theory of belief revision, negotiation processes are out of scope in these theories.} of beliefs was defined by two operators \citep{liberatore98}: \emph{majority} and \emph{arbitration}. Both make assumptions upon the information sources. The former revises the knowledge base by belief belonging to the majority number of information sources. The latter revises the knowledge bases by the beliefs belonging to the most reliable information sources.

In \citep{konieczny00KR} the author defines the postulates regulating the merging operators by assuming that there are \emph{integrity constraints} to assure.

Thus, in a belief merging and information fusion literature, the negotiation is modeled as a two stage processes: contraction of the beliefs causing the conflict and expansions by the new knowledge \citep{booth06}. In \citep{zhang04TR} the author define a way to formalize the negotiation process as a function and he proposes a set of postulates, similar to the AGM ones for revision for the negotiation function. 

\section{Conclusions}\label{sec:conclusions}
We presented a formalization of the MN problem by means of a deduction
system. As we remarked in many different places of the paper, the
literature has dealt with several issues of the negotiation of meaning,
but what has been only partially treated is the description of the
process of reaching agreement conditions.

Here, we focused upon the MN problem in terms of knowledge
representation and of automatic mechanism of reaching an agreement.
First, we defined a negotiating agent by two set of knowledge: stubborn
and flexible. The stubborn knowledge of the agent is the unquestionable
one and it represents the necessary properties to define the meaning of
the set of terms the agent is negotiating. Instead, the flexible
knowledge is the representation of the properties that the agent thinks
as not necessary, but can be useful, to define the negotiating terms. A
negotiating agent is willing to cede with respect to non necessary
properties. After the definition of an agent and of her knowledge, we
defined the agreement condition as the situation in which all the agent
or an acceptable part of them agree with the same proposal, i.e. when
the agents consider the proposal as an acceptable common angle.
Otherwise the agents are in disagreement.

We identified four ways in which agents are in disagreement: absolute, essence, relative or compatibility. The disagreement relation is binary because it depends upon the relation between the knowledge of the agents, thus, for instance, Alice may have inconsistent knowledge with respect to the knowledge of Bob (absolute disagreement) and she may have a consistent but not generalised or restricted knowledge with respect to the knowledge of Charles (compatibility).

Afterwards we defined rules for deriving streams of dialog between an arbitrary number of meaning negotiating agents
by assuming that in a multiparty MN the first proposing agent behaves as a referee in an English Auction Game; and
we defined a deduction system, $\MND$, based upon these
rules, which derives a stream of dialog that ends with an agreement (or
disagreement) condition.

There are several different ways in which this investigation can be taken further,
in particular by investigating the formal properties of $\MND$, such as
soundness and completeness. The proofs of consistency and adequacy do
not fix the relation to a given semantics, which is needed for a proof
of soundness and a proof of completeness. Usually, a deduction system
can be proved sound and complete against a standard interpretation of
the language, which is difficult to circumscribe in our case, because of
the presence of the relations between agents to be represented. A
standard definition of the semantics for the $\MND$ systems is therefore
needed in front of any further investigation of the soundness and
completeness properties.

We deliberately avoided to investigate the formal logical properties of
the system at this stage, for the sake of clarity and readability. It
shall be argument of another paper.

Our formalization of the MN process may be considered \emph{credulous} in the sense of the Argumentation literature (see Section~\ref{sec:relWork}). In fact, with the rule $(RD\text{-}Ag)$ an agent accepts the proposal $\varphi$ even if it is not equivalent to her current angle: the accepting agent trusts in the proposing agent. As a future work, we will study the properties of credulousness and skepticism of the rules of the deduction system. Moreover, the investigation of the trustworthiness among negotiating agents is interesting because a credulous or a skeptical deduction system may be adopted depending upon the trust relation among agents: an agent may be credulous with respect to a trustworthy agent and skeptical with a non-trustworthy one.

In this paper, we assumed that agents are truthful thus they never
inform the opponents about something wrongly. Fraudulent agents may try
to drive the MN in a way that is in some sense optimal for themselves.
It would be interesting to study the optimality and minimality of the MN
outcomes and the ways, legitimate or not, that the agents use to reach
optimal outcomes.

It would also be interesting to develop a decision making algorithm for
those cases in which the system is decidable, in particular for finite
signatures in addition to the case of competitive agents considered
here. This would foster the automation both of the subjective decision
process (i.e., the automation of the deduction system alone) and of the
whole process per se (i.e., the definition of a procedure to establish
the agreement terminal condition).


The investigation we carried out can also be extended by studying the
ways in which agents can be limited to specific strategies in choosing
the next action. Jointly with the definition of an algorithm for
negotiating a common angle, this study can also enlarge the boundary of
decidable cases. In particular, agents using some specific strategies
can apply the rules in a finite number of steps even if the signature is
infinite.

Finally, we envisage a further extensions of our approach to applications in information security, e.g., investigating the relationships between the MN process and the management of authorization
policies in security protocols and web services.

%
%


\bibliographystyle{spbasic}      
\bibliography{general}   

\begin{thebibliography}{51}
\providecommand{\natexlab}[1]{#1}
\providecommand{\url}[1]{{#1}}
\providecommand{\urlprefix}{URL }
\expandafter\ifx\csname urlstyle\endcsname\relax
  \providecommand{\doi}[1]{DOI~\discretionary{}{}{}#1}\else
  \providecommand{\doi}{DOI~\discretionary{}{}{}\begingroup
  \urlstyle{rm}\Url}\fi
\providecommand{\eprint}[2][]{\url{#2}}

\bibitem[{Atkinson et~al(2005)Atkinson, Bench-Capon, and McBurney}]{atkinson05}
Atkinson K, Bench-Capon TJM, McBurney P (2005) Arguing about cases as practical
  reasoning. In: ICAIL '05: Proceedings of the 10th international conference on
  Artificial intelligence and law, pp 35--44

\bibitem[{Benameur et~al(2002)Benameur, Chaib-draa, and Kropf}]{benameur02}
Benameur H, Chaib-draa B, Kropf P (2002) Multi-item auctions for automatic
  negotiation. Information and Software Technology 44(5):291--301

\bibitem[{Bench-Capon et~al(2005)Bench-Capon, Atkinson, and Chorley}]{bench05}
Bench-Capon T, Atkinson K, Chorley A (2005) Persuasion and value in legal
  argument. J Logic and Computation 15(6):1075--1097

\bibitem[{Bench-Capon(1997)}]{bench97}
Bench-Capon TJM (1997) Argument in artificial intelligence and law. Artif
  Intell Law 5(4):249--261

\bibitem[{Benferhat et~al(2004)Benferhat, Kaci, Berre, and
  Williams}]{benferhat04}
Benferhat S, Kaci S, Berre DL, Williams MA (2004) Weakening conflicting
  information for iterated revision and knowledge integration. Artif Intell
  153(1-2):339--371, \doi{http://dx.doi.org/10.1016/j.artint.2003.08.003}

\bibitem[{Booth(2006)}]{booth06}
Booth R (2006) Social contraction and belief negotiation. Inf Fusion
  7(1):19--34, \doi{http://dx.doi.org/10.1016/j.inffus.2005.01.002}

\bibitem[{Daskalopulu and Sergot(1997)}]{sergot97}
Daskalopulu A, Sergot M (1997) The representation of legal contracts. AI and
  Society 11(1/2):6--17

\bibitem[{Douglas(1996)}]{walton96book}
Douglas W (ed)  (1996) Argumentation Schemes for Presumptive Reasoning.
  Routledge

\bibitem[{Douglas(2005)}]{walton05book}
Douglas W (ed)  (2005) Argumentation Methods for Artificial Intelligence in
  Law. Cambridge University Press

\bibitem[{Douglas et~al(2008)Douglas, Christopher, and Fabrizio}]{walton08}
Douglas W, Christopher R, Fabrizio M (eds)  (2008) Argumentation Schemes.
  Springer

\bibitem[{Dung(1995)}]{dung95}
Dung PM (1995) On the acceptability of arguments and its fundamental role in
  nonmonotonic reasoning, logic programming and n-person games. Artificial
  Intelligence 77(2):321--357,
  \doi{http://dx.doi.org/10.1016/0004-3702(94)00041-X}

\bibitem[{Dung et~al(2007)Dung, Mancarella, and Toni}]{dung07}
Dung PM, Mancarella P, Toni F (2007) Computing ideal sceptical argumentation.
  Artif Intell 171(10-15):642--674,
  \doi{http://dx.doi.org/10.1016/j.artint.2007.05.003}

\bibitem[{van Eemeren(2002)}]{emeren02}
van Eemeren FH (ed)  (2002) Advances in pragma-dialectics. Amsterdam: SicSat /
  Newport News, VA: Vale Press

\bibitem[{van Eemeren and Grootendorst(1984)}]{emeren84}
van Eemeren FH, Grootendorst R (eds)  (1984) Speech acts in argumentative
  discussions: A theoretical model for the analysis of discussions directed
  towards solving conflicts of opinion. Dordrecht: Floris Publications.

\bibitem[{van Eemeren and Grootendorst(1992)}]{emeren92}
van Eemeren FH, Grootendorst R (eds)  (1992) Argumentation, communication, and
  fallacies : a pragma-dialectical perspective. Erlbaum: Hillsdale, NJ.

\bibitem[{van Eemerena and Grootendorst(2004)}]{emeren04}
van Eemerena FH, Grootendorst R (eds)  (2004) A systematic theory of
  argumentation: The pragma-dialectical approach. Cambridge: Cambridge
  University Press.

\bibitem[{Farley and Freeman(1995)}]{farley95}
Farley AM, Freeman K (1995) Burden of proof in legal argumentation. In: ICAIL
  '95: Proceedings of the 5th international conference on Artificial
  intelligence and law, ACM, New York, NY, USA, pp 156--164,
  \doi{http://doi.acm.org/10.1145/222092.222227}

\bibitem[{Ferm\'{e} and Rott(2004)}]{ferme04}
Ferm\'{e} E, Rott H (2004) Revision by comparison. Artif Intell 157(1-2):5--47,
  \doi{http://dx.doi.org/10.1016/j.artint.2004.04.007}

\bibitem[{Gordon(1993)}]{gordon93}
Gordon TF (1993) The pleadings game: formalizing procedural justice. In: ICAIL
  '93: Proceedings of the 4th international conference on Artificial
  intelligence and law, ACM, New York, NY, USA, pp 10--19,
  \doi{http://doi.acm.org/10.1145/158976.158978}

\bibitem[{Gordon and Walton(2009)}]{gordon09icail}
Gordon TF, Walton D (2009) Legal reasoning with argumentation schemes. In:
  ICAIL '09: Proceedings of the 12th International Conference on Artificial
  Intelligence and Law, ACM, New York, NY, USA, pp 137--146,
  \doi{http://doi.acm.org/10.1145/1568234.1568250}

\bibitem[{Gordon et~al(2007)Gordon, Prakken, and Walton}]{gordon07}
Gordon TF, Prakken H, Walton D (2007) The carneades model of argument and
  burden of proof. Artificial Intelligence 171(10-15):875--896,
  \doi{http://dx.doi.org/10.1016/j.artint.2007.04.010}

\bibitem[{Gr\'{e}goire(2006)}]{greg06IFW}
Gr\'{e}goire E (2006) An unbiased approach to iterated fusion by weakening. Inf
  Fusion 7(1):35--40, \doi{http://dx.doi.org/10.1016/j.inffus.2005.01.005}

\bibitem[{Gr\'{e}goire and Konieczny(2006)}]{greg06}
Gr\'{e}goire E, Konieczny S (2006) Logic-based approaches to information
  fusion. Inf Fusion 7(1):4--18,
  \doi{http://dx.doi.org/10.1016/j.inffus.2005.08.001}

\bibitem[{Hunter and Summerton(2006)}]{hunter06}
Hunter A, Summerton R (2006) A knowledge-based approach to merging information.
  Knowledge-Based Systems 19(8):647 -- 674, \doi{DOI:
  10.1016/j.knosys.2006.05.007},
  \urlprefix\url{http://www.sciencedirect.com/science/article/B6V0P-4KCXG2C-1/%
2/2929165993f89db519bcb86f9af85ff8}

\bibitem[{Iv\'{a}n et~al(2000)Iv\'{a}n, Maguitman, and Loui}]{chesnevar00}
Iv\'{a}n CC, Maguitman AG, Loui RP (2000) Logical models of argument. ACM
  Comput Surv 32(4):337--383, \doi{http://doi.acm.org/10.1145/371578.371581}

\bibitem[{Kambe(1995)}]{kambe}
Kambe S (1995) Bargaining with imperfect commitment. Games and Economic
  Behavior 28(2):217--237

\bibitem[{Katie~Atkinson(2004)}]{atkinson04CMNA}
Katie~Atkinson PM Trevor Bench-Capon (2004) Justifying practical reasoning. In:
  Proceedings of the Fourth Workshop on Computational Models of Natural
  Argument (CMNA'04), pp 87--90

\bibitem[{Konieczny(2000)}]{konieczny00KR}
Konieczny S (2000) On the difference between merging knowledge bases and
  combining them. In: KR, pp 135--144

\bibitem[{Kowalski and Toni(1996)}]{kowalski96}
Kowalski RA, Toni F (1996) Abstract argumentation. Artificial Intelligence and
  Law 4:275--296

\bibitem[{Kraus et~al(1998)Kraus, Sycara, and Evenchik}]{kraus}
Kraus S, Sycara K, Evenchik A (1998) Reaching agreements through argumentation:
  a logical model and implementation. Artificial Intelligence 104(1-2):1--69

\bibitem[{Lehmann and Cohn(1994)}]{lehmann94}
Lehmann F, Cohn AG (1994) The egg/yolk reliability hierarchy: semantic data
  integration using sorts with prototypes. In: Proc.~of CIKM'94, ACM, pp
  272--279, \doi{http://doi.acm.org/10.1145/191246.191293}

\bibitem[{Liberatore and Schaerf(1998)}]{liberatore98}
Liberatore P, Schaerf M (1998) Arbitration (or how to merge knowledge bases).
  IEEE Trans on Knowl and Data Eng 10(1):76--90,
  \doi{http://dx.doi.org/10.1109/69.667090}

\bibitem[{Lin(1996)}]{lin96}
Lin J (1996) Integration of weighted knowledge bases. Artificial Intelligence
  83(2):363 -- 378, \doi{DOI: 10.1016/0004-3702(95)00019-4},
  \urlprefix\url{http://www.sciencedirect.com/science/article/B6TYF-3VTJGS5-7/%
2/e443b9d18837f65ee41c70d43f990486}

\bibitem[{Lomuscio et~al(2003)Lomuscio, Wooldridge, and Jennings}]{lomuscio03}
Lomuscio AR, Wooldridge M, Jennings NR (2003) A classification scheme for
  negotiation in electronic commerce. Group Decision and Negotiation 12:31--56,
  \urlprefix\url{http://dx.doi.org/10.1023/A:1022232410606},
  10.1023/A:1022232410606

\bibitem[{Maudet et~al(2006)Maudet, Parsons, and Rahwan}]{maudet06}
Maudet N, Parsons S, Rahwan I (2006) Argumentation in multi-agent systems:
  Context and recent developments. In: Maudet N, Parsons S, Rahwan I (eds)
  ArgMAS, Springer, Lecture Notes in Computer Science, vol 4766, pp 1--16

\bibitem[{Meyer et~al(1999)Meyer, van~der Hoek, and van Linder}]{meyer99}
Meyer JJ, van~der Hoek W, van Linder B (1999) A logical approach to the
  dynamics of commitments. Artificial Intelligence 113(1-2):1--40,
  \doi{http://dx.doi.org/10.1016/S0004-3702(99)00061-2}

\bibitem[{Oren et~al(2007)Oren, Norman, and Preece}]{oren07}
Oren N, Norman TJ, Preece A (2007) Subjective logic and arguing with evidence.
  Artificial Intelligence 171(10-15):838--854,
  \doi{http://dx.doi.org/10.1016/j.artint.2007.04.006}

\bibitem[{Parsons et~al(1998)Parsons, Sierra, and Jennings}]{parsons}
Parsons S, Sierra C, Jennings N (1998) Agents that reason and negotiate by
  arguing. J Logic and Computation 8(3):261--292

\bibitem[{Pollock(1994)}]{pollock94}
Pollock JL (1994) Justification and defeat. Artificial Intelligence
  67(2):377--407, \doi{http://dx.doi.org/10.1016/0004-3702(94)90057-4}

\bibitem[{Pollock(2001)}]{pollock01}
Pollock JL (2001) Defeasible reasoning with variable degrees of justification.
  Artificial Intelligence 133(1-2):233 -- 282, \doi{DOI:
  10.1016/S0004-3702(01)00145-X},
  \urlprefix\url{http://www.sciencedirect.com/science/article/B6TYF-43PH56M-2/%
2/6b5db756204877b78c1d7dffd68440fa}

\bibitem[{Prakken et~al(2003)Prakken, Reed, and Walton}]{prakken03}
Prakken H, Reed C, Walton D (2003) Argumentation schemes and generalisations in
  reasoning about evidence. In: ICAIL '03: Proceedings of the 9th international
  conference on Artificial intelligence and law, ACM Press, New York, NY, USA,
  pp 32--41

\bibitem[{Prakken et~al(2005)Prakken, Reed, and Walton}]{prakken05}
Prakken H, Reed C, Walton D (2005) Dialogues about the burden of proof. In:
  ICAIL '05: Proceedings of the 10th international conference on Artificial
  intelligence and law, ACM, New York, NY, USA, pp 115--124,
  \doi{http://doi.acm.org/10.1145/1165485.1165503}

\bibitem[{Rubinelli(September 2006)}]{rubinelli06}
Rubinelli S (September 2006) Let me tell you why!. when argumentation in
  doctor-patient interaction makes a difference. Argumentation 20:353--375(23)

\bibitem[{Schroeder and Schweimeier(2002)}]{schroeder02}
Schroeder M, Schweimeier R (2002) Arguments and misunderstandings: Fuzzy
  unification for negotiating agents. Electronic Notes in Theoretical Computer
  Science 70(5):1--19

\bibitem[{Thakur et~al(2007)Thakur, Governatori, Padmanabhan, and
  Lundstr{\"o}m}]{governatori07ACAI}
Thakur S, Governatori G, Padmanabhan V, Lundstr{\"o}m JE (2007) Dialogue games
  in defeasible logic. In: Orgun MA, Thornton J (eds) Australian Conference on
  Artificial Intelligence, Springer, Lecture Notes in Computer Science, vol
  4830, pp 497--506

\bibitem[{Toulmin(2003)}]{toulmin03}
Toulmin S (2003) The Uses of Argument. Cambrigde University Press

\bibitem[{Walton(2003)}]{walton03}
Walton D (2003) Is there a burden of questioning? Artif Intell Law 11(1):1--43,
  \doi{http://dx.doi.org/10.1023/B:ARTI.0000013333.96215.a9}

\bibitem[{Walton(2005)}]{waltonjust05}
Walton D (2005) Justification of argument schemes. The Australasian Journal of
  Logic 3:1--13, \urlprefix\url{http://www.philosophy.unimelb.edu.au/ajl/2005/}

\bibitem[{Walton and Krabbe(1995)}]{krabbe}
Walton D, Krabbe E (1995) Commitment in Dialogue. State University of New York
  Press

\bibitem[{Wurman et~al(2001)Wurman, Wellman, and Walsh}]{wurman01}
Wurman P, Wellman M, Walsh W (2001) A parametrization of the auction design
  space. Games and Economic Behavior 35(1-2):304--338

\bibitem[{Zhang et~al(2004)Zhang, Foo, Meyer, and Kwok}]{zhang04TR}
Zhang D, Foo N, Meyer T, Kwok R (2004) Negotiation as mutual belief revision.
  In: AAAI'04: Proceedings of the 19th national conference on Artificial
  intelligence, AAAI Press / The MIT Press, pp 317--322

\end{thebibliography}


\end{document}